\makeatletter
\providecommand\ClearHookNext[1]{}
\makeatother
\documentclass[review,onefignum,onetabnum]{siamart251216}
\makeatletter
\providecommand\ClearHookNext[1]{}
\makeatother

\usepackage{iftex}
\ifPDFTeX
  \usepackage[utf8]{inputenc}
  \usepackage[T1]{fontenc}
  \usepackage{lmodern}
\fi

\usepackage{setspace}
\usepackage{microtype}

\usepackage{amsmath,amssymb,amsfonts,amsthm,mathtools}
\usepackage{bm}
\usepackage{bbm}

\usepackage{booktabs}
\usepackage{array}
\usepackage{enumitem}
\usepackage{algorithm}
\usepackage{algorithmic}
\usepackage{float}

\usepackage{graphicx}
\usepackage{xcolor}
\usepackage{tikz}
\usetikzlibrary{arrows.meta}

\usepackage[numbers,sort&compress]{natbib}
\usepackage[colorlinks=true,linkcolor=blue,citecolor=red,urlcolor=blue]{hyperref}
\usepackage[nameinlink,capitalize,noabbrev]{cleveref}
\numberwithin{equation}{section}
\newtheorem{definition}{Definition}[section]
\newtheorem{assumption}{Assumption}[section]
\newtheorem{proposition}{Proposition}[section]
\newtheorem{lemma}{Lemma}[section]
\newtheorem{theorem}{Theorem}[section]

\newcommand{\R}{\mathbb{R}}

\newcommand{\jsr}{\rho}

\newcommand{\set}[1]{\left\{ #1 \right\}}
\newcommand{\co}{\operatorname{co}}
\newcommand{\diag}{\operatorname{diag}}

\title{Target Updates May Stabilize Linear Q-Learning:\\
Periodic and Soft Dynamics}

\author{Donghwan Lee\thanks{School of Electrical Engineering, KAIST, Daejeon, Republic of Korea (\email{donghwan@kaist.ac.kr}).}}

\headers{Target Updates and Linear Q-Learning}{D. Lee}

\date{}

\begin{document}

\maketitle

\begin{abstract}
Periodic target updates in Q-learning and soft target updates in actor-critic
methods are empirically well established stabilization mechanisms, but their
precise theoretical explanation is still incomplete. This paper gives a rigorous
and exact analysis of these mechanisms for Q-learning with linear function
approximation (linear Q-learning) using the exact switched linear system (SLS) dynamics
induced by the Bellman maximum and the joint spectral radius (JSR) of the
resulting switching matrix families. Although linear Q-learning can fail to converge in
general, we prove that, under explicit spectral and step-size conditions,
periodic hard target updates and soft target updates can guarantee convergence
to the exact projected Q-Bellman solution. The main analysis is carried out for
deterministic linear Q-learning, where the target-update mechanism is most
transparent. Once the corresponding JSR certificate is established for the mean recursion,
the stochastic reinforcement-learning setting can be treated by replacing
deterministic modes with sampled stochastic modes and adding the corresponding
stochastic-noise analysis.
\end{abstract}

\begin{keywords}
linear Q-learning, target networks, periodic target updates, soft target updates, joint spectral radius, switched linear systems
\end{keywords}

\begin{MSCcodes}
93D20, 93C30, 68T05, 90C39
\end{MSCcodes}

\section{Introduction}
\label{sec:introduction}

In reinforcement learning (RL)~\cite{sutton1998reinforcement}, target updates~\citep{MnihEtAl2015,LillicrapEtAl2015DDPG} are among the most common
stabilization devices for bootstrapping. In deep Q-learning~\citep{MnihEtAl2015}, a target network is
usually copied from the online network only periodically; in actor-critic
methods~\citep{LillicrapEtAl2015DDPG}, target networks often track the online networks through soft averaging.
These periodic hard updates and soft target updates are strongly supported by
empirical practice. However, their precise theoretical role is still not fully explained, especially when the
bootstrapped update is combined with function approximation.

Existing theory gives several important but different explanations. Target-based
temporal-difference (TD) learning~\citep{leehe2019target} analyzes separate online and target variables
for policy evaluation with linear function approximation (LFA). Periodic Q-learning~\citep{leehe2020periodic}
gives finite-sample guarantees for tabular control tasks.
Other analyses obtain stability through target networks together with
projections, truncation, regularization, nonlinear regularity assumptions, or
over-parameterized structure~\citep{zhang2021breaking,chen2023target,fellows2023why,che2024target}.
These results clarify significant parts of the target-network phenomenon, but
they do not prove the specific exact-convergence statement considered here. In this paper, the
statement is conditional: for the plain target-update recursion with a fixed
online step-size, under an explicit step-size range and explicit spectral
conditions, there are target-period or target-gain regimes in which the iterates
converge to the exact projected Q-Bellman solution, with the stochastic analogue obtained by applying the same mean-recursion
certificate after adding unbiased martingale-difference sampling errors under
standard stochastic-approximation
conditions~\citep{jaakkola1994convergence,tsitsiklis1994asynchronous,borkarmeyn2000ode}. Existing guarantees either concern tabular or policy-evaluation settings, add projections, truncation,
regularization, or structural assumptions to the algorithm, or leave
approximation, sampling, or regularization terms in the guarantee rather than an
exact convergence statement for the unmodified deterministic recursion.

The main contribution of this paper is this conditional exact convergence result
for target-updated linear Q-learning at a constant step-size. The period-$m$
hard-target method, denoted $m$-DLQL, is an interpolation between deterministic
linear Q-learning (DLQL) at $m=1$ and projected Q-value iteration (PQVI) as
$m\to\infty$. For hard
periodic targets, under the standing step-size range, the period-$m$ boundary map shows
that the relevant spectral condition at the PQVI endpoint certifies all
sufficiently large target periods; the corresponding DLQL endpoint condition
certifies the period-one side. For soft targets, the PQVI endpoint condition
certifies all sufficiently small positive target gains for both natural
orderings, and in the after-online-update ordering the DLQL endpoint condition
also certifies all gains sufficiently close to one. In every certified regime,
the error recursion converges to zero, so the limiting point is the exact
projected Q-Bellman solution rather than a neighborhood containing an additional
residual error term. We use switching systems~\citep{liberzon2003switching,lin2009stability,shorten2007stability} and the joint spectral radius (JSR)~\citep{rota1960note,blondel2005computational,jungers2009joint} as the main tools. We show that the error recursions of PQVI and DLQL become switching linear systems by following the recent switching-system approach to Q-learning~\citep{leehuhe2023discrete,lee2026lyapunovcertified,LeeLim2026}, and a JSR bound below one is the spectral certificate that gives uniform exponential convergence in the regimes described above.

The analysis begins with PQVI and one-period DLQL. These two methods solve the same
projected Q-Bellman equation, but their switching families are different, so one
spectral certificate need not imply the other. Periodic hard target updates then
connect these two endpoints. For the period-$m$ method, the
correct object is the target-boundary error map over one whole block of $m$
online updates. The period-one endpoint is DLQL, while the large-period endpoint
recovers the PQVI family under the frozen-target step-size condition. Therefore, when
PQVI is JSR-stable, all sufficiently large periods inherit a convergence
certificate, whereas intermediate periods are not universally ordered by the two
endpoints.

The soft-target part treats two natural orderings. We call the after-online-update
method SDLQL1 and the before-online-update method SDLQL2. In SDLQL1 the target is
averaged after the online update, while in SDLQL2 it is averaged before the
online update. In both cases the online and target errors form an augmented
switching linear system. The soft target gain is an independent parameter, not merely a rescaling
of the online step-size. For sufficiently small positive target gain, PQVI
stability yields augmented stability under the standing step-size condition. For
SDLQL1, the large-gain endpoint also reduces in JSR to DLQL, so DLQL stability
implies stability for gains sufficiently close to one; SDLQL2 has a delayed
endpoint and therefore requires its own augmented certificate.

The main text focuses on the deterministic version because it exposes the
feedback path created by the target update without sampling noise. Once the conditional deterministic spectral certificate is available, it also
identifies the mean recursion used when deterministic policy modes are replaced
by sampled stochastic modes and the standard stochastic-noise analysis is added,
as in recent switching-system treatments of Q-learning~\citep{lee2026lyapunovcertified,LeeLim2026}.

\section{Related Works}
Target networks entered modern deep reinforcement-learning practice through
hard-copy target updates in deep Q-learning and soft target averaging in
actor-critic methods~\citep{MnihEtAl2015,LillicrapEtAl2015DDPG}. In deep Q-networks (DQN)~\citep{MnihEtAl2015}, the target network is copied
periodically from the online network, so the target used in the Bellman update is
held fixed for multiple online steps. In deep deterministic
policy gradient (DDPG)~\citep{LillicrapEtAl2015DDPG}, the target networks track the online networks through soft averaging, producing the now-standard Polyak-type target update used in many
actor-critic algorithms. These mechanisms have strong empirical support, but their exact stabilizing effect is not fully explained by the usual constant-step-size linear Q-learning theory.

The closest theoretical literature analyzes target networks from several
complementary viewpoints. \citet{leehe2019target} study target-based TD learning
with separate online and target variables and analyze averaging, double, and
periodic target updates for policy evaluation with linear function approximation;
that work is about TD learning rather than the exact control-oriented linear
Q-learning switching family studied here. \citet{leehe2020periodic} analyze
periodic Q-learning in the tabular discounted setting and obtain a finite-sample
explanation of periodic targets, but the tabular model does not cover linear
function approximation or the JSR stability of the induced LFA error dynamics.
\citet{zhang2021breaking} give convergent target-network algorithms for the
deadly triad setting, including linear Q-learning, but the algorithms use
additional projections and regularization, so the limiting object is a
regularized fixed point rather than the exact projected Q-Bellman solution of the
plain constant-step-size recursion. \citet{chen2023target} prove finite-sample
stability guarantees for linear Q-learning using a
target network together with truncation; the truncation mechanism is essential in
their algorithmic design and leaves a sample-complexity guarantee up to function
approximation error, rather than an exact JSR convergence statement for the
untruncated deterministic target-update dynamics.

Other target-network analyses emphasize TD methods, nonlinear approximation, or
over-parameterization. \citet{fellows2023why} explain why target networks
stabilize temporal-difference methods through a partially fitted policy
evaluation viewpoint and allow nonlinear function approximation under regularity
and conditioning assumptions. \citet{che2024target} show that target
networks together with over-parameterized linear function approximation can
stabilize off-policy bootstrapping, primarily for value estimation and with
additional structural assumptions.

Therefore, existing theory explains important aspects of target networks, but it does
not show the specific statement proved here: under a constant online step-size,
if the relevant spectral conditions hold, then a sufficiently large hard-target
period or a sufficiently small positive soft-target gain yields error-free
convergence to the exact projected Q-Bellman solution. Some previous analyses use
the intuition that increasing the target period makes the update closer to a
projected or fitted value-iteration map, but they do not provide the same
zero-residual convergence statement for the plain period-$m$ linear Q-learning
recursion for all sufficiently large $m$. The present paper fills this gap by
using the hard-target boundary map, the soft-target augmented dynamics, and
finite-family JSR certificates as proof tools for this exact convergence claim.

\section{Preliminaries and Linear-Approximation Setup}
\label{sec:preliminaries}
\subsection{Notation}
\label{sec:notation}

The set of real numbers is denoted by $\R$; $\R^m$ is the $m$-dimensional
Euclidean space; and $\R^{m\times n}$ is the set of all $m\times n$ real
matrices. For a matrix $A$, $A^\top$ denotes its transpose. The identity
matrix is denoted by $I$. For vectors, $e_i$ is the $i$th standard basis vector,
with dimension clear from context, and $\otimes$ denotes the Kronecker product.
For a finite set $\mathcal S$, $|\mathcal S|$ denotes its cardinality. We write
\begin{align*}
\Delta_m:=\left\{q\in\R^m:q_i\geq0,\ \sum_{i=1}^m q_i=1\right\}
\end{align*}
for the probability simplex in $\R^m$. For a finite matrix family
$\mathcal H=\{A_1,\ldots,A_N\}$,
\begin{align*}
\co(\mathcal H)
:=\left\{\sum_{i=1}^N\lambda_i A_i:
\lambda_i\geq0,\ \sum_{i=1}^N\lambda_i=1\right\}
\end{align*}
denotes its convex hull.
We also use standard matrix notation that appears repeatedly below. For a
vector $x$, $\|x\|_2$ is the Euclidean norm. For a square matrix $A$,
$\rho(A)$ denotes its ordinary spectral radius, while $\rho(\mathcal H)$ denotes
the JSR of a switching family once the JSR is defined below.
For a matrix $B$, $\operatorname{range}(B)$ denotes its column space,
$B\succ0$ means that $B$ is symmetric positive definite, and
$\lambda_{\max}(B)$ denotes the largest eigenvalue when $B$ is symmetric.
Expectations are denoted by $\mathbb E[\cdot]$.

\subsection{Switched Linear Systems}
\label{sec:switching_systems}

The stability certificates used later are stated in the language of switched
systems, so we first recall the basic model before specializing it to the
Bellman-induced switching families. Let us consider the discrete-time switching
affine system (SAS)~\citep{liberzon2003switching,lin2009stability,shorten2007stability}
\begin{align*}
  x_{k+1}=A_{\sigma_k}x_k+b_{\sigma_k},
\end{align*}
where each index $i\in\{1,2,\ldots,M\}$, equivalently each affine pair
$(A_i,b_i)$, is called a \emph{mode}, and $\sigma_k$ is the switching signal that
selects the active mode at time $k$. The matrix $A_{\sigma_k}$ is selected from
the prescribed family $\mathcal H:=\{A_1,A_2,\ldots,A_M\}$, which is called a
\emph{switching family}; $b_{\sigma_k}$ is a mode-dependent affine term. When
$b_{\sigma_k}=0$, the deterministic part reduces to a linear switching system
(SLS), $x_{k+1}=A_{\sigma_k}x_k$. The worst-case exponential rate of the SLS
family is characterized by the JSR, defined as follows.
\begin{definition}
\label{def:jsr}
For a bounded set of matrices $\mathcal H\subset\R^{m\times m}$, its joint
spectral radius is
\begin{align*}
\jsr(\mathcal H)
:=
\lim_{k\to\infty}
\sup_{A_1,\ldots,A_k\in\mathcal H}
\|A_k\cdots A_1\|^{1/k}.
\end{align*}
\end{definition}
We note that the JSR is independent of the chosen submultiplicative
norm~\citep{rota1960note,jungers2009joint}. When $\mathcal H$ is finite, the
supremum for each fixed product length is a maximum over products generated by
matrices in $\mathcal H$. For a finite family $\mathcal H$, the notation
$\jsr(\co(\mathcal H))$ means the JSR computed when each factor in a product is
allowed to be any convex combination of matrices in $\mathcal H$.
Throughout the later JSR certificates, $\rho(\mathcal H)$ denotes this same JSR
value when the argument is a switching family.

\subsection{Joint Spectral Radius and Lyapunov Certificates}
\label{sec:joint_spectral_radius}

The JSR in~\Cref{def:jsr} turns arbitrary switched products into a
single worst-case exponential rate. An SLS is uniformly exponentially stable under arbitrary switching if there
exist constants $C\geq1$
and $\eta\in(0,1)$ such that $\|A_{\sigma_{k-1}}\cdots A_{\sigma_0}x\|_2\leq C\eta^k\|x\|_2$ for every horizon $k\geq0$, every initial state $x\in\R^m$, and every switching
sequence. A \emph{common Lyapunov function} for $\mathcal H$ is a positive definite
function that decreases along every mode. In the analysis below, the Bellman
maximum in linear Q-learning induces stochastic-policy switching, and the
Lyapunov functions are built from products of the corresponding mode matrices.
The following finite-family piecewise-quadratic construction~\citep{lee2026lyapunovcertified,hushenzhang2010generating} is the Lyapunov certificate used in the deterministic, stochastic, and target-network arguments.
It turns the abstract JSR condition into a concrete decrease estimate that can be
reused for each algorithmic family.
\begin{lemma}
\label{lem:common_lyapunov_construction}
Let $\mathcal H=\{A_1,A_2,\ldots,A_M\}\subset\R^{m\times m}$ and fix
$\epsilon>0$ such that $\beta_\epsilon:=\rho(\mathcal H)+\epsilon\in(0,1)$.
For a word $\sigma=(\sigma_1,\ldots,\sigma_k)\in\{1,\ldots,M\}^k$, write
\begin{align*}
A_\sigma:=A_{\sigma_k}\cdots A_{\sigma_1},
\end{align*}
with the convention that the empty word gives $A_\sigma=I$. Define
\begin{align*}
V_\epsilon^\infty(x)
:=\sum_{k=0}^\infty \beta_\epsilon^{-2k}
\max_{\sigma\in\{1,\ldots,M\}^k}\|A_\sigma x\|_2^2,
\qquad x\in\R^m.
\end{align*}
Then $V_\epsilon^\infty$ is finite for every $x$, and there exists
$C_\epsilon>0$ such that
\begin{align*}
\|x\|_2^2\leq V_\epsilon^\infty(x)\leq C_\epsilon\|x\|_2^2,
\qquad \forall x\in\R^m.
\end{align*}
The function $p_\epsilon(x):=\sqrt{V_\epsilon^\infty(x)}$ is a norm on
$\R^m$, and every mode satisfies
\begin{align*}
p_\epsilon(A_i x)\leq\beta_\epsilon p_\epsilon(x),
\qquad \forall x\in\R^m,
\qquad i=1,\ldots,M.
\end{align*}
\end{lemma}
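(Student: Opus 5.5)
The plan is to use the defining property of the JSR to get uniform geometric control of all products, and then read off the four claims from the series definition. First, fix $\epsilon>0$ with $\beta_\epsilon=\rho(\mathcal H)+\epsilon\in(0,1)$ and set $\delta:=\rho(\mathcal H)+\epsilon/2$, so $\delta<\beta_\epsilon$. By \Cref{def:jsr}, since $\mathcal H$ is finite, $\sup_{|\sigma|=k}\|A_\sigma\|_2^{1/k}\to\rho(\mathcal H)$. Hence there is $K$ with $\max_{|\sigma|=k}\|A_\sigma\|_2\leq\delta^k$ for $k\geq K$. Taking $c:=\max\{1,\max_{k<K,|\sigma|=k}\|A_\sigma\|_2\delta^{-k}\}$, which is a finite maximum, we obtain $\max_{|\sigma|=k}\|A_\sigma\|_2\leq c\,\delta^k$ for all $k\geq0$.

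Next I will bound the series. Each summand satisfies
\begin{align*}
\beta_\epsilon^{-2k}\max_{|\sigma|=k}\|A_\sigma x\|_2^2\leq c^2(\delta/\beta_\epsilon)^{2k}\|x\|_2^2.
\end{align*}
Summing the geometric series gives $V_\epsilon^\infty(x)\leq C_\epsilon\|x\|_2^2$ with $C_\epsilon:=c^2/(1-(\delta/\beta_\epsilon)^2)$, so in particular $V_\epsilon^\infty$ is finite everywhere. For the lower bound, the $k=0$ term is exactly $\|x\|_2^2$ and all other terms are nonnegative, so $V_\epsilon^\infty(x)\geq\|x\|_2^2$.

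For the norm property, write $p_\epsilon(x)$ as the $\ell^2$-norm of the sequence $\bigl(\beta_\epsilon^{-k}\max_{|\sigma|=k}\|A_\sigma x\|_2\bigr)_{k\geq0}$, which lies in $\ell^2$ by the previous step.
\begin{itemize}
\item \emph{Absolute homogeneity.} Each entry is absolutely homogeneous in $x$, hence so is $p_\epsilon$.
\item \emph{Definiteness.} This follows from $p_\epsilon(x)\geq\|x\|_2$.
\item \emph{Triangle inequality.} For each $k$, the map $x\mapsto\max_{|\sigma|=k}\|A_\sigma x\|_2$ is a maximum of seminorms and is therefore a seminorm. The entries of the sequence for $x+y$ are thus bounded entrywise by the sum of the entries for $x$ and for $y$. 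Monotonicity of the $\ell^2$-norm under entrywise domination of nonnegative sequences, followed by Minkowski's inequality in $\ell^2$, gives $p_\epsilon(x+y)\leq p_\epsilon(x)+p_\epsilon(y)$.
\end{itemize}
This triangle-inequality step is the only point that needs care, because one must not take square roots termwise; the $\ell^2$ reformulation handles it.

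Finally, I will prove the decrease inequality by a shift argument. For a fixed mode $i$ and any word $\sigma$ of length $k$, the product $A_\sigma A_i$ is a product over a word of length $k+1$. Hence $\max_{|\sigma|=k}\|A_\sigma A_ix\|_2^2\leq\max_{|\tau|=k+1}\|A_\tau x\|_2^2$. Summing over $k$ then gives
\begin{align*}
V_\epsilon^\infty(A_ix)\leq\sum_{k\geq0}\beta_\epsilon^{-2k}\max_{|\tau|=k+1}\|A_\tau x\|_2^2=\beta_\epsilon^{2}\sum_{j\geq1}\beta_\epsilon^{-2j}\max_{|\tau|=j}\|A_\tau x\|_2^2\leq\beta_\epsilon^2V_\epsilon^\infty(x),
\end{align*}
where the last step adds back the nonnegative $j=0$ term. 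Taking square roots yields $p_\epsilon(A_ix)\leq\beta_\epsilon p_\epsilon(x)$ for every mode $i$, which completes the plan.
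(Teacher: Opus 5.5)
Your proof is correct and complete. The uniform bound $\max_{|\sigma|=k}\|A_\sigma\|_2\le c\,\delta^k$ with $\delta<\beta_\epsilon$, the geometric-series domination, the $\ell^2$ reformulation with Minkowski for the triangle inequality, and the word-shift estimate giving $V_\epsilon^\infty(A_ix)\le\beta_\epsilon^2V_\epsilon^\infty(x)$ all go through as written; the paper itself omits this proof and defers to its cited references, so your argument, which is the standard one for this construction, supplies the details it leaves out.
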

\begin{proof}
The construction and proof are given in~\citep{lee2026lyapunovcertified,hushenzhang2010generating}; we
omit the proof here.
\end{proof}
Throughout the sequel, whenever this construction is applied to a switching
family with JSR less than one, we call the resulting $V_\epsilon^\infty$ a \emph{JSR
Lyapunov function} for that family, and we call the associated norm $p_\epsilon$
a \emph{JSR Lyapunov norm}.
The same Lyapunov argument will be used for PQVI, DLQL, hard targets, and soft
targets. The following lemma is the common bridge from a JSR bound to convergence of the corresponding error recursion.
\begin{lemma}
\label{lem:standard_jsr_convergence}
Let $\mathcal H=\{A_1,\ldots,A_M\}\subset\R^{n\times n}$ be finite and suppose
$\rho(\mathcal H)<1$. Let us consider any error recursion
\begin{align*}
x_{k+1}=A_kx_k,
\qquad A_k\in\co(\mathcal H),
\qquad k\in\{0,1,\ldots\}.
\end{align*}
Then, for every $\epsilon>0$ such that
$\beta_\epsilon:=\rho(\mathcal H)+\epsilon<1$, the Lyapunov function
$V_\epsilon^\infty$ and norm $p_\epsilon$ from~\Cref{lem:common_lyapunov_construction}, applied to
$\mathcal H$, satisfy
\begin{align*}
V_\epsilon^\infty(x_{k+1})
\leq
\beta_\epsilon^2 V_\epsilon^\infty(x_k),
\qquad
p_\epsilon(x_{k+1})\leq \beta_\epsilon p_\epsilon(x_k).
\end{align*}
Consequently, if $C_\epsilon$ is the constant from~\Cref{lem:common_lyapunov_construction}, then
\begin{align*}
p_\epsilon(x_k)\leq \beta_\epsilon^k p_\epsilon(x_0),
\qquad
\|x_k\|_2
\leq
\beta_\epsilon^k p_\epsilon(x_0)
\leq
\sqrt{C_\epsilon}\,\beta_\epsilon^k\|x_0\|_2,
\end{align*}
and hence $x_k\to0$.
\end{lemma}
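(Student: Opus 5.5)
The only gap between \Cref{lem:common_lyapunov_construction} and the statement is that $A_k$ is a convex combination of the modes, not a mode itself. The plan is to show that the JSR Lyapunov norm $p_\epsilon$ still contracts under any element of $\co(\mathcal H)$, and then iterate.

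\emph{Step 1 (extension to the convex hull).} Fix $\epsilon>0$ with $\beta_\epsilon=\rho(\mathcal H)+\epsilon<1$. If $\beta_\epsilon\le 0$, then $\rho(\mathcal H)<0$, which is impossible, so $\beta_\epsilon\in(0,1)$ and \Cref{lem:common_lyapunov_construction} applies to $\mathcal H$. It provides the norm $p_\epsilon$ with
\begin{align*}
p_\epsilon(A_i x)\le\beta_\epsilon p_\epsilon(x)\quad\text{for every mode } i.
\end{align*}
Now take $A=\sum_i\lambda_iA_i\in\co(\mathcal H)$. By the triangle inequality and positive homogeneity of $p_\epsilon$,
\begin{align*}
p_\epsilon(Ax)\le\sum_i\lambda_i\,p_\epsilon(A_ix)\le\sum_i\lambda_i\,\beta_\epsilon p_\epsilon(x)=\beta_\epsilon p_\epsilon(x).
\end{align*}
Applying this with $A=A_k$ and $x=x_k$ gives $p_\epsilon(x_{k+1})\le\beta_\epsilon p_\epsilon(x_k)$. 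Squaring both sides gives $V_\epsilon^\infty(x_{k+1})\le\beta_\epsilon^2V_\epsilon^\infty(x_k)$, since $V_\epsilon^\infty=p_\epsilon^2$.

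\emph{Step 2 (iteration).} Inducting on $k$ gives $p_\epsilon(x_k)\le\beta_\epsilon^k p_\epsilon(x_0)$.

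\emph{Step 3 (back to the Euclidean norm).} The sandwich bound $\|x\|_2^2\le V_\epsilon^\infty(x)\le C_\epsilon\|x\|_2^2$ from \Cref{lem:common_lyapunov_construction} is equivalent to $\|x\|_2\le p_\epsilon(x)\le\sqrt{C_\epsilon}\|x\|_2$. Combining this with Step 2,
\begin{align*}
\|x_k\|_2\le p_\epsilon(x_k)\le\beta_\epsilon^k p_\epsilon(x_0)\le\sqrt{C_\epsilon}\,\beta_\epsilon^k\|x_0\|_2.
\end{align*}
Because $\beta_\epsilon<1$, the right-hand side tends to zero, so $x_k\to0$.

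The only substantive point is Step 1, and it depends on $p_\epsilon$ being a genuine norm, specifically on the triangle inequality. That property is part of the cited construction, so no real obstacle remains. If one wanted to avoid relying on it, each term $\max_\sigma\|A_\sigma x\|_2$ in the series defining $V_\epsilon^\infty$ is a maximum of seminorms, and weighted $\ell^2$-sums of seminorms are again seminorms, so the triangle inequality could be verified directly.
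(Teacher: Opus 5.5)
Your proposal is correct and follows the paper's proof essentially step for step. You extend the modewise contraction $p_\epsilon(A_i x)\le\beta_\epsilon p_\epsilon(x)$ to all of $\co(\mathcal H)$ via the triangle inequality and homogeneity, square to get the $V_\epsilon^\infty$ decrease, iterate, and convert back with $\|x\|_2\le p_\epsilon(x)\le\sqrt{C_\epsilon}\|x\|_2$. Your side remarks, that $\beta_\epsilon>0$ holds automatically and that the triangle inequality for $p_\epsilon$ can be checked directly from the seminorm structure of the series, are valid but go beyond what the cited construction already supplies.
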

\begin{proof}
By~\Cref{lem:common_lyapunov_construction}, $p_\epsilon(A_i x)\leq\beta_\epsilon p_\epsilon(x)$ for every $A_i\in\mathcal H$. If
$A_k=\sum_i\lambda_{k,i}A_i$ is a convex combination, then the triangle inequality
and homogeneity of the norm give $p_\epsilon(A_kx)
\leq
\sum_i\lambda_{k,i}p_\epsilon(A_i x)
\leq
\beta_\epsilon p_\epsilon(x)$. Applying this with $x=x_k$ gives the one-step norm contraction, and squaring it
gives the stated decrease of $V_\epsilon^\infty$. Iteration gives the
$p_\epsilon$ estimate, and the Euclidean bound follows from
$\|x\|_2\leq p_\epsilon(x)\leq\sqrt{C_\epsilon}\|x\|_2$.
\end{proof}

\subsection{Discounted Markov Decision Processes with Linear Function Approximation}
\label{sec:discounted_mdps_lfa}

We consider a finite discounted Markov decision process (MDP)~\cite{puterman2014markov,bertsekas1996neuro} with state-space
$\mathcal S=\{1,\ldots,|\mathcal S|\}$, action-space
$\mathcal A=\{1,\ldots,|\mathcal A|\}$, transition probability
$P(s'\mid s,a)$, real-valued one-step reward $r(s,a,s')$, expected reward $R(s,a):=\sum_{s'\in\mathcal S}P(s'\mid s,a)r(s,a,s')$, and discount factor $\gamma\in(0,1)$. State-action functions are viewed as
vectors in $\R^{|\mathcal S||\mathcal A|}$ using the action-block ordering $(1,1),(2,1),\ldots,(|\mathcal S|,1),
(1,2),(2,2),\ldots,(|\mathcal S|,|\mathcal A|)$. All matrices and vectors indexed by state-action pairs use this ordering. Define
\begin{align*}
P:=
\begin{bmatrix}
P_1\\
\vdots\\
P_{|\mathcal A|}
\end{bmatrix}
\in\R^{|\mathcal S||\mathcal A|\times |\mathcal S|},
\qquad
R:=
\begin{bmatrix}
R(\cdot,1)\\
\vdots\\
R(\cdot,|\mathcal A|)
\end{bmatrix}
\in\R^{|\mathcal S||\mathcal A|},
\end{align*}
where $P_a=P(\cdot\mid\cdot,a)\in\R^{|\mathcal S|\times|\mathcal S|}$.
Let $\Theta$ denote the set of deterministic stationary policies
$\pi:\mathcal S\to\mathcal A$. For any stochastic policy
$\mu:\mathcal S\to\Delta_{|\mathcal A|}$, define
\begin{align*}
\Pi^\mu:=
\begin{bmatrix}
\mu(1)^\top\otimes e_1^\top\\
\mu(2)^\top\otimes e_2^\top\\
\vdots\\
\mu(|\mathcal S|)^\top\otimes e_{|\mathcal S|}^\top
\end{bmatrix}
\in\R^{|\mathcal S|\times|\mathcal S||\mathcal A|}.
\end{align*}
For a deterministic policy $\pi\in\Theta$, the same notation $\Pi^\pi$ is used by
identifying $\pi(s)$ with its one-hot encoding.
For
$Q\in\R^{|\mathcal S||\mathcal A|}$, define
\begin{align*}
V_Q(s):=\max_{a\in\mathcal A}Q(s,a),
\qquad
V_Q:=(V_Q(1),\ldots,V_Q(|\mathcal S|))^\top.
\end{align*}
The Bellman optimality operator is $F(Q):=R+\gamma P V_Q$. Let $\Phi\in\R^{|\mathcal S||\mathcal A|\times m}$ be a feature matrix. Its
row corresponding to $(s,a)$ is $\phi(s,a)^\top$, where
$\phi(s,a)\in\R^m$. The LFA representation of the Q-function is $Q_\theta:=\Phi\theta$. For an LFA parameter $\theta$, define the corresponding greedy value vector by
\begin{align*}
V_\theta(s):=\max_{a\in\mathcal A}\phi(s,a)^\top\theta,
\qquad
V_\theta:=(V_\theta(1),\ldots,V_\theta(|\mathcal S|))^\top.
\end{align*}
We use $d$ to denote a state-action sampling distribution on
$\mathcal S\times\mathcal A$. In the i.i.d. observation model, $d$ is the
sampling distribution of $(s_k,a_k)$; in the Markovian observation model, $d$ is
the stationary state-action distribution of the behavior-induced chain. Define $D:=\diag(d(s,a))_{(s,a)\in\mathcal S\times\mathcal A}$. The feature and sampling conditions used throughout the rest of the paper are
collected in the following standing assumption. These conditions ensure that the
projected residual and the Gram matrix used later are well defined in the stated
coordinates.
\begin{assumption}
\label{ass:feature_sampling}
The feature matrix $\Phi$ has full column rank, and the sampling distribution
has full support: $d(s,a)>0$ for every
$(s,a)\in\mathcal S\times\mathcal A$. Equivalently, the diagonal sampling
matrix satisfies $D\succ0$.
\end{assumption}
This assumption also gives $\Phi^\top D\Phi\succ0$.
The switching-system model above will be used to analyze Bellman updates by
viewing each realization of the maximum operator as a policy-selected mode. To
make this passage from the Bellman maximum to a switching mode precise, we use
one final setup fact. It says that the difference between two greedy value
vectors is itself a value vector generated by a suitable stochastic policy
applied to the parameter difference. This is the device that converts the
Bellman maximum into a switching mode without changing the parameter space.
\begin{lemma}
\label{lem:stochastic_policy_linearization}
For every pair $\theta,\bar\theta\in\R^m$, there exists a stochastic policy
$\mu_{\theta,\bar\theta}$ such that
\begin{align}
V_\theta-V_{\bar\theta}
=
\Pi^{\mu_{\theta,\bar\theta}}\Phi(\theta-\bar\theta).
\label{eq:stoch_policy_linearization}
\end{align}
\end{lemma}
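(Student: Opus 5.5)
The identity \eqref{eq:stoch_policy_linearization} is row-by-row in the state index, so I would build $\mu_{\theta,\bar\theta}$ one state at a time. Row $s$ of $\Pi^\mu\Phi$ equals $\sum_{a}\mu(s)_a\,\phi(s,a)^\top$. Indeed, row $s$ of $\Pi^\mu$ is $\mu(s)^\top\otimes e_s^\top$, and under the action-block ordering this puts weight $\mu(s)_a$ on the coordinate $(s,a)$. It therefore suffices to find, for each $s$, a probability vector $\mu(s)\in\Delta_{|\mathcal A|}$ with
\begin{align*}
V_\theta(s)-V_{\bar\theta}(s)=\sum_{a\in\mathcal A}\mu(s)_a\,\phi(s,a)^\top(\theta-\bar\theta).
\end{align*}

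Fix $s$ and write $g(a):=\phi(s,a)^\top(\theta-\bar\theta)$. Choose a greedy action $a^*\in\arg\max_a\phi(s,a)^\top\theta$ and a greedy action $\bar a\in\arg\max_a\phi(s,a)^\top\bar\theta$. Then
\begin{align*}
g(\bar a)=\phi(s,\bar a)^\top\theta-V_{\bar\theta}(s)\leq V_\theta(s)-V_{\bar\theta}(s)\leq V_\theta(s)-\phi(s,a^*)^\top\bar\theta=g(a^*).
\end{align*}
Both inequalities come directly from the definitions of the maxima.

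The target quantity thus lies in the interval $[g(\bar a),g(a^*)]$. Hence there is $\lambda\in[0,1]$ with
\begin{align*}
V_\theta(s)-V_{\bar\theta}(s)=\lambda g(a^*)+(1-\lambda)g(\bar a).
\end{align*}
When $g(a^*)\neq g(\bar a)$, take $\lambda=\bigl(V_\theta(s)-V_{\bar\theta}(s)-g(\bar a)\bigr)/\bigl(g(a^*)-g(\bar a)\bigr)$. When $g(a^*)=g(\bar a)$, any $\lambda$ works. Set $\mu(s):=\lambda e_{a^*}+(1-\lambda)e_{\bar a}$, with the two terms merged if $a^*=\bar a$. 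This is a valid probability vector on $\mathcal A$.

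Stacking these rows over all $s\in\mathcal S$ gives the stochastic policy $\mu_{\theta,\bar\theta}$ and proves \eqref{eq:stoch_policy_linearization}. The only point that needs care is checking the indexing convention, namely that the Kronecker structure of $\Pi^\mu$ really selects the entries $(s,a)$ of $\Phi(\theta-\bar\theta)$ weighted by $\mu(s)_a$. The rest is the elementary sandwich argument together with the intermediate value property of convex combinations of two scalars.
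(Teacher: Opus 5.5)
Your proof is correct and follows the paper's intended route. The paper omits the proof and points to the same state-wise convex-interpolation argument: sandwich $V_\theta(s)-V_{\bar\theta}(s)$ between $\phi(s,\bar a)^\top(\theta-\bar\theta)$ and $\phi(s,a^*)^\top(\theta-\bar\theta)$ at the two greedy actions, then take the matching two-point convex combination, with your Kronecker/action-block indexing check of $\Pi^\mu$ also correct.
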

\begin{proof}
This is the same state-wise convex-interpolation argument used in the
switching-system proof of linear Q-learning in~\cite{lee2026lyapunovcertified};
the proof is omitted because only the notation changes here.
\end{proof}

\section{Projected Q-Value Iteration as the Fixed-Point Benchmark}
\label{sec:projected_qvi}

PQVI is the least-squares
fixed-point benchmark used in this section. By the Gram-matrix observation following~\Cref{ass:feature_sampling}, $\Phi^\top D\Phi$ is
nonsingular. Using the sampling matrix $D$ defined in~\Cref{sec:discounted_mdps_lfa},
the $D$-orthogonal projection onto $\operatorname{range}(\Phi)$ is $\Pi_D:=\Phi(\Phi^\top D\Phi)^{-1}\Phi^\top D$. The corresponding projected Bellman equation~\citep{bertsekas1996neuro,meyn2024projected,limlee2025understanding} is
\begin{align}
\Phi\theta
=\Pi_D F(\Phi\theta)
=\Pi_D\left(R+\gamma P V_\theta\right).
\label{eq:prelim_projected_bellman_equation}
\end{align}
A \emph{projected Bellman fixed point} is a parameter $\theta^\star$ satisfying
\begin{align*}
\Phi\theta^\star
=\Pi_D F(\Phi\theta^\star)
=\Pi_D\left(R+\gamma P V_{\theta^\star}\right).
\end{align*}
In general, such a fixed point need not exist, and when it exists it need not be
unique~\cite{limlee2025understanding}. In the convergence statements below,
$\theta^\star$ denotes the projected Bellman fixed point under consideration; the
JSR contraction assumptions used in the corresponding proofs identify uniqueness
for the associated update map.
Note also that the projected Bellman equation can be equivalently written as
\begin{align*}
\theta=(\Phi^\top D\Phi)^{-1}\Phi^\top D\left(R+\gamma P V_\theta\right),
\end{align*}
or equivalently the parameter form of the projected Q-Bellman equation,
\begin{align}
\Phi^\top D\Phi\,\theta=\Phi^\top D\left(R+\gamma P V_\theta\right).
\label{eq:projected_q_bellman_equation}
\end{align}

For a fixed target parameter $\bar\theta$, define the \emph{frozen-target
least-squares Bellman residual}
\begin{align*}
f(\theta ;\bar \theta )
:=
\left\| {R+\gamma P V_{\bar\theta}-\Phi\theta} \right\|_D^2,
\end{align*}
where $\|x\|_D^2:=x^\top D x$. With $\bar\theta$ fixed, minimizing
$f(\theta;\bar\theta)$ over the online parameter $\theta$ is a
$D$-least-squares projection problem. Its unique minimizer satisfies the normal
equation $\Phi^\top D\Phi\,\bar\theta^*
=\Phi^\top D(R+\gamma P V_{\bar\theta})$, and hence $\bar\theta^*
=(\Phi^\top D\Phi)^{-1}\Phi^\top D(R+\gamma P V_{\bar\theta})$. The fitted vector is equivalently the $D$-orthogonal projection of the frozen
Bellman target onto $\operatorname{range}(\Phi)$:
\begin{align*}
\Phi\bar\theta^*=
\Pi_D(R+\gamma P V_{\bar\theta}).
\end{align*}
Choosing $\bar\theta=\theta_k$ and writing the minimizer as $\theta_{k+1}$ gives
the PQVI update
\begin{align}
\theta_{k+1}
=
(\Phi^\top D\Phi)^{-1}\Phi^\top D
\left(R+\gamma P V_{\theta_k}\right),
\qquad
k\in\{0,1,\ldots\}.
\label{eq:pqvi_map_intro}
\end{align}
Therefore,~\Cref{eq:pqvi_map_intro} is the closed-form least-squares solution, and
its fitted value vector satisfies $\Phi\theta_{k+1}
=\Pi_D\left(R+\gamma P V_{\theta_k}\right)$.

To analyze convergence, we now pass to the SAS representation generated by the
Bellman maximum. For each PQVI iterate, choose a deterministic greedy
policy $\pi_k$ satisfying
\begin{align*}
\pi_k(s)\in\arg\max_{a\in\mathcal A}\phi(s,a)^\top\theta_k,
\qquad s\in\mathcal S,
\end{align*}
with fixed deterministic tie breaking. Then
$V_{\theta_k}=\Pi^{\pi_k}\Phi\theta_k$. This greedy choice first gives the affine
PQVI representation; after the projected Bellman fixed point is subtracted, the
common affine term cancels and the error dynamics become an SLS. To keep these
two elementary steps together, the next lemma derives the affine form inside its
proof and then gives the exact stochastic-policy error recursion. This prepares
the PQVI family that will serve as one endpoint in the target-update comparison.
\begin{lemma}
\label{lem:pqvi_stochastic_policy_error_recursion}
For each PQVI iterate, there exists a stochastic policy
$\mu_k=\mu_{\theta_k,\theta^\star}$ such that
\begin{align}
\theta_{k+1}-\theta^\star
=A_{\mu_k}^{\mathrm{PQVI}}(\theta_k-\theta^\star),\label{eq:1}
\end{align}
where for a stochastic policy $\mu:\mathcal S\to\Delta_{|\mathcal A|}$,
\begin{align}
A_\mu^{\mathrm{PQVI}}
:=
(\Phi^\top D\Phi)^{-1}\gamma\Phi^\top DP\Pi^\mu\Phi.
\label{eq:pqvi_mode_definition}
\end{align}
For a deterministic policy $\pi$, we write $A_\pi^{\mathrm{PQVI}}$.
\end{lemma}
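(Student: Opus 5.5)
The plan is to subtract the fixed-point equation from the PQVI update and then linearize the Bellman maximum with \Cref{lem:stochastic_policy_linearization}.

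First, using the greedy policy $\pi_k$ with $V_{\theta_k}=\Pi^{\pi_k}\Phi\theta_k$, rewrite \eqref{eq:pqvi_map_intro} in affine form:
\begin{align*}
\theta_{k+1}=A_{\pi_k}^{\mathrm{PQVI}}\theta_k+b,
\qquad
b:=(\Phi^\top D\Phi)^{-1}\Phi^\top DR.
\end{align*}
This is the affine SAS representation with mode $\pi_k$. It is recorded only for completeness, since the error recursion does not need it.

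Next, write the projected Bellman fixed point in parameter form, $\theta^\star=(\Phi^\top D\Phi)^{-1}\Phi^\top D(R+\gamma PV_{\theta^\star})$. This follows from \eqref{eq:projected_q_bellman_equation} together with the nonsingularity of $\Phi^\top D\Phi$, which holds under \Cref{ass:feature_sampling}. Subtracting it from \eqref{eq:pqvi_map_intro}, the reward term cancels and we are left with
\begin{align*}
\theta_{k+1}-\theta^\star=(\Phi^\top D\Phi)^{-1}\gamma\Phi^\top DP\left(V_{\theta_k}-V_{\theta^\star}\right).
\end{align*}

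Now apply \Cref{lem:stochastic_policy_linearization} with $\theta=\theta_k$ and $\bar\theta=\theta^\star$. It supplies a stochastic policy $\mu_k=\mu_{\theta_k,\theta^\star}$ with $V_{\theta_k}-V_{\theta^\star}=\Pi^{\mu_k}\Phi(\theta_k-\theta^\star)$. Substituting this gives exactly \eqref{eq:1}, with the mode matrix defined in \eqref{eq:pqvi_mode_definition}.

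There is no real obstacle. The only point needing care is that the difference of maxima cannot in general be written using a single deterministic greedy policy. This is why the stochastic policy from \Cref{lem:stochastic_policy_linearization} is used, instead of $\pi_k$ or a greedy policy for $\theta^\star$.

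For intuition on that lemma: statewise, $\max_a x_a-\max_a y_a$ lies between $(x-y)_{a_y}$ and $(x-y)_{a_x}$, where $a_x$ and $a_y$ are greedy actions for $x$ and $y$. A convex combination of these two actions therefore attains the exact difference. Finally, since each $\Pi^{\mu}$ is a convex combination of the $\Pi^\pi$ over deterministic policies, $A^{\mathrm{PQVI}}_{\mu_k}$ lies in the convex hull of the finite deterministic family. This is what later allows \Cref{lem:standard_jsr_convergence} to be applied.
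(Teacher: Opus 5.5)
Your proposal is correct and matches the paper's proof essentially step for step. Both write the greedy-policy affine form, subtract the parameter-form fixed-point equation so that the reward term cancels, and then apply \Cref{lem:stochastic_policy_linearization} with $(\theta,\bar\theta)=(\theta_k,\theta^\star)$ to obtain the mode $A_{\mu_k}^{\mathrm{PQVI}}$; your remarks on why a stochastic rather than greedy policy is needed, and on convex-hull membership (which the paper uses in \Cref{lem:pqvi_jsr_convergence}), are accurate extras.
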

\begin{proof}
First, by the greedy choice of $\pi_k$,
$V_{\theta_k}=\Pi^{\pi_k}\Phi\theta_k$. Substituting this identity into the
PQVI update~\Cref{eq:pqvi_map_intro} gives
\begin{align*}
\theta_{k+1}
=
(\Phi^\top D\Phi)^{-1}\Phi^\top D R
+
(\Phi^\top D\Phi)^{-1}\gamma\Phi^\top DP\Pi^{\pi_k}\Phi\,\theta_k,
\end{align*}
which is the greedy-policy affine representation. The projected Bellman
fixed-point equation for $\theta^\star$ gives $\theta^\star= (\Phi^\top D\Phi)^{-1}\Phi^\top D
\left(R+\gamma P V_{\theta^\star}\right)$.
Subtracting this identity from the PQVI update gives
\begin{align*}
\theta_{k+1}-\theta^\star
=
(\Phi^\top D\Phi)^{-1}\gamma\Phi^\top DP
\left(V_{\theta_k}-V_{\theta^\star}\right).
\end{align*}
By~\Cref{lem:stochastic_policy_linearization}, there exists a stochastic policy
$\mu_k$ such that $V_{\theta_k}-V_{\theta^\star}= \Pi^{\mu_k}\Phi(\theta_k-\theta^\star)$. Substitution and the definition of $A_{\mu_k}^{\mathrm{PQVI}}$ in~\Cref{eq:pqvi_mode_definition}
give $\theta_{k+1}-\theta^\star
=A_{\mu_k}^{\mathrm{PQVI}}(\theta_k-\theta^\star)$, which proves the claim.
\end{proof}
For deterministic policies, the finite PQVI switching family is
\begin{align}
\mathcal A^{\mathrm{PQVI}}
:=
\set{A_\pi^{\mathrm{PQVI}}:\pi\in\Theta}.
\label{eq:pqvi_family_intro}
\end{align}
The condition $\rho(\mathcal A^{\mathrm{PQVI}})<1$ ensures the stability of the switching model in~\Cref{eq:1} for PQVI. The implication $\rho(\mathcal A^{\mathrm{PQVI}})>1$ is a worst-case certificate failure; it does not by itself mean that a particular realized trajectory diverges. Some
realized trajectories can still converge, for example when the reachable modes
form a stable subset of the full switching family. Moreover, even if the JSR
of the full family is larger than one, convergence can still hold whenever the
modes that are actually reachable from the initial condition remain inside a
stable subfamily.

We can now present the PQVI convergence statement as the direct application of the
standard certificate. The lemma below makes explicit that the deterministic-policy
JSR condition gives global convergence of the nonlinear PQVI recursion because
every stochastic-policy difference mode remains inside the convex hull of the
finite deterministic PQVI family.
\begin{lemma}
\label{lem:pqvi_jsr_convergence}
If $\rho(\mathcal A^{\mathrm{PQVI}})<1$, then the PQVI map
in~\Cref{eq:pqvi_map_intro} has a unique fixed point $\theta^\star$. For every
initialization, the recursion in~\Cref{eq:pqvi_map_intro} converges to
$\theta^\star$. Moreover, for every $\epsilon>0$ such that
$\beta_\epsilon:=\rho(\mathcal A^{\mathrm{PQVI}})+\epsilon<1$, the finite-time
$p_\epsilon$ and Euclidean bounds in~\Cref{lem:standard_jsr_convergence} hold with
$x_k=\theta_k-\theta^\star$ and $\mathcal H=\mathcal A^{\mathrm{PQVI}}$. This fixed
point is equivalently the unique solution of the projected Q-Bellman equation
in~\Cref{eq:projected_q_bellman_equation}.
\end{lemma}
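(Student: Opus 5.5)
The plan is to treat the PQVI map $T(\theta):=(\Phi^\top D\Phi)^{-1}\Phi^\top D(R+\gamma P V_\theta)$ as a contraction in the JSR Lyapunov norm. I will prove existence and uniqueness of the fixed point first, without presupposing $\theta^\star$, and then obtain the rates from \Cref{lem:standard_jsr_convergence}.

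Fix $\epsilon>0$ with $\beta_\epsilon=\rho(\mathcal A^{\mathrm{PQVI}})+\epsilon<1$, and let $p_\epsilon$ be the norm from \Cref{lem:common_lyapunov_construction} applied to $\mathcal H=\mathcal A^{\mathrm{PQVI}}$.

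\emph{Step 1: one-step contraction for arbitrary pairs.} For arbitrary $\theta,\bar\theta$, \Cref{lem:stochastic_policy_linearization} gives a stochastic policy $\mu=\mu_{\theta,\bar\theta}$ with
\begin{align*}
T(\theta)-T(\bar\theta)=A^{\mathrm{PQVI}}_\mu(\theta-\bar\theta).
\end{align*}
This is the same computation as in \Cref{lem:pqvi_stochastic_policy_error_recursion}, with $\theta^\star$ replaced by $\bar\theta$.

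\emph{Step 2: convex-hull membership.} This is the step I expect to need the most care. I must show $A^{\mathrm{PQVI}}_\mu\in\co(\mathcal A^{\mathrm{PQVI}})$. The map $\mu\mapsto\Pi^\mu$ is linear in the table $(\mu(s))_{s}$, and the set of stochastic policies is the product of simplices $\prod_s\Delta_{|\mathcal A|}$. The extreme points of this product are exactly the deterministic policies. Concretely, set $\lambda_\pi:=\prod_s\mu(s)_{\pi(s)}$. These weights are nonnegative and sum to one, and $\sum_\pi\lambda_\pi\Pi^\pi=\Pi^\mu$ row by row, because row $s$ of $\Pi^\pi$ depends only on $\pi(s)$ and the marginal of $\lambda$ at $s$ is $\mu(s)$. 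Since $A^{\mathrm{PQVI}}_\mu$ is affine (indeed linear) in $\Pi^\mu$, it follows that $A^{\mathrm{PQVI}}_\mu=\sum_\pi\lambda_\pi A^{\mathrm{PQVI}}_\pi$.

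\emph{Step 3: contraction in $p_\epsilon$.} By the triangle-inequality argument in the proof of \Cref{lem:standard_jsr_convergence},
\begin{align*}
p_\epsilon(T(\theta)-T(\bar\theta))\le\beta_\epsilon\, p_\epsilon(\theta-\bar\theta).
\end{align*}

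\emph{Step 4: fixed point and convergence.} Since $T$ is a strict contraction on the complete space $(\R^m,p_\epsilon)$, Banach's fixed-point theorem yields a unique fixed point $\theta^\star$, and every PQVI trajectory converges to it.

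\emph{Step 5: finite-time bounds.} With $\theta^\star$ in hand, \Cref{lem:pqvi_stochastic_policy_error_recursion} writes $x_k=\theta_k-\theta^\star$ as $x_{k+1}=A_{\mu_k}^{\mathrm{PQVI}}x_k$ with $A_{\mu_k}^{\mathrm{PQVI}}\in\co(\mathcal A^{\mathrm{PQVI}})$ by Step 2. \Cref{lem:standard_jsr_convergence} then gives the stated $p_\epsilon$ and Euclidean bounds.

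\emph{Step 6: equivalence with the projected Q-Bellman equation.} Fixed points of $T$ coincide with solutions of \Cref{eq:projected_q_bellman_equation}, by multiplying or dividing by the nonsingular matrix $\Phi^\top D\Phi$. Hence uniqueness of the fixed point transfers to uniqueness of the solution of the projected Q-Bellman equation.
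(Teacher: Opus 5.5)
Your proposal is correct and follows essentially the same route as the paper: you linearize the difference of two PQVI images through a stochastic policy, place that mode in $\co(\mathcal A^{\mathrm{PQVI}})$, obtain a $p_\epsilon$-contraction, apply Banach's theorem, and then set $\bar\theta=\theta^\star$ and invoke \Cref{lem:standard_jsr_convergence} for the finite-time bounds. The one addition is your explicit product weights $\lambda_\pi=\prod_s\mu(s)_{\pi(s)}$, which rigorously justify the convex-hull membership that the paper asserts in a single line.
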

\begin{proof}
For two parameters $\theta$ and $\bar\theta$, subtracting their images under the
PQVI map in~\Cref{eq:pqvi_map_intro} gives
\begin{align*}
& (\Phi^\top D\Phi)^{-1}\Phi^\top D
\left(R+\gamma P V_\theta\right)
-(\Phi^\top D\Phi)^{-1}\Phi^\top D
\left(R+\gamma P V_{\bar\theta}\right)\\
&\qquad =
(\Phi^\top D\Phi)^{-1}\gamma\Phi^\top DP
\left(V_\theta-V_{\bar\theta}\right).
\end{align*}
By~\Cref{lem:stochastic_policy_linearization}, there exists a stochastic policy
$\mu$ such that
$V_\theta-V_{\bar\theta}=\Pi^\mu\Phi(\theta-\bar\theta)$. Therefore, using the
definition of $A_\mu^{\mathrm{PQVI}}$ in~\Cref{eq:pqvi_mode_definition}, the
difference of the two PQVI images is
\begin{align*}
A_\mu^{\mathrm{PQVI}}(\theta-\bar\theta).
\end{align*}
Since a stochastic policy is a state-wise convex combination of deterministic
policies and the PQVI mode is affine in $\Pi^\mu$, we have
$A_\mu^{\mathrm{PQVI}}\in\co(\mathcal A^{\mathrm{PQVI}})$. Applying
\Cref{lem:standard_jsr_convergence} to this one-step difference gives
\begin{align*}
p_\epsilon\left(A_\mu^{\mathrm{PQVI}}(\theta-\bar\theta)\right)
\leq
\beta_\epsilon p_\epsilon(\theta-\bar\theta),
\qquad \beta_\epsilon<1.
\end{align*}
Therefore, the PQVI map is a contraction in the norm $p_\epsilon$. Since the parameter
space is finite-dimensional and complete under this norm, the Banach fixed-point
theorem gives a unique fixed point $\theta^\star$~\citep{kreyszig1978introductory}. Taking $\bar\theta=\theta^\star$ and applying~\Cref{lem:standard_jsr_convergence} with
$x_k=\theta_k-\theta^\star$ and $\mathcal H=\mathcal A^{\mathrm{PQVI}}$ gives the
stated finite-time $p_\epsilon$ and Euclidean bounds, and hence
$\theta_k\to\theta^\star$. Finally, the fixed-point equation of
\Cref{eq:pqvi_map_intro} is exactly~\Cref{eq:projected_q_bellman_equation}, so the
limit is the unique solution of the projected Q-Bellman equation.
\end{proof}

\section{DLQL and Its Switching Certificate}
\label{sec:deterministic_linear_q_learning}

Given a transition sample $(s_k,a_k,r_{k+1},s_{k+1})$, where
$r_{k+1}=r(s_k,a_k,s_{k+1})$, the constant step-size linear Q-learning update~\citep{watkins1992q,tsitsiklis1994asynchronous,jaakkola1994convergence,sutton1998reinforcement} is
\begin{equation*}
\theta_{k+1}
=\theta_k+\alpha\phi(s_k,a_k)
\left(r_{k+1}+\gamma\max_{u\in\mathcal A}\phi(s_{k+1},u)^\top\theta_k
-\phi(s_k,a_k)^\top\theta_k\right),
\qquad
k\in\{0,1,\ldots\},
\end{equation*}
where $\alpha>0$ is the step-size. The parameter $\theta_k$ determines the
approximate action-value function $Q_{\theta_k}=\Phi\theta_k$, and the term
inside parentheses is the temporal-difference error formed from the greedy
next-action value under the same parameter. The deterministic version
of this update is
\begin{align}
\theta_{k+1}
=\theta_k+\alpha\Phi^\top D\left(R+\gamma P V_{\theta_k}-\Phi\theta_k\right),
\qquad
k\in\{0,1,\ldots\}.
\label{eq:prelim_deterministic_linear_q_learning}
\end{align}
Using the residual $f$ from~\Cref{sec:projected_qvi}, freeze the second argument
at $\bar\theta=\theta_k$. Then
\begin{align*}
\nabla_\theta f(\theta;\theta_k)
=
-2\Phi^\top D\left(R+\gamma P V_{\theta_k}-\Phi\theta\right),
\end{align*}
and hence~\Cref{eq:prelim_deterministic_linear_q_learning} is exactly the gradient
step
\begin{align*}
\theta_{k+1}
=
\theta_k-\frac{\alpha}{2}
\left.\nabla_\theta f(\theta;\theta_k)\right|_{\theta=\theta_k}.
\end{align*}
Therefore, deterministic linear Q-learning performs one residual-gradient step
with the Bellman target frozen at the current parameter for that step.
To proceed further, let us define the \emph{deterministic linear Q-learning map}
\begin{equation*}
h(\theta):=\theta+\alpha g(\theta),
\end{equation*}
where
\begin{equation*}
g(\theta):=\Phi^\top D\left(R+\gamma P V_\theta-\Phi\theta\right)
\end{equation*}
is the projected Bellman residual. With this notation,
~\Cref{eq:prelim_deterministic_linear_q_learning} is written compactly as
\begin{align*}
\theta_{k+1}=h(\theta_k),
\qquad
k\in\{0,1,\ldots\}.
\end{align*}
A fixed point of $h$ is equivalently a zero of $g$ and therefore
satisfies $\Phi^\top D\Phi\,\theta=\Phi^\top D\left(R+\gamma P V_\theta\right)$. This is the same parameter equation as the projected Q-Bellman equation
introduced above. The next lemma makes this equivalence explicit before the
DLQL switching family is derived.
\begin{lemma}
\label{lem:prelim_pbe_residual_equivalence}
The solution set of $g(\theta)=0$ is identical to the solution set of the
projected Bellman equation in~\Cref{eq:prelim_projected_bellman_equation}.
\end{lemma}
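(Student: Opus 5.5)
The plan is to prove the two inclusions of solution sets by a chain of equivalences. Both sides reduce to the parameter form of the projected Q-Bellman equation~\Cref{eq:projected_q_bellman_equation}. The only facts needed are the invertibility of $\Phi^\top D\Phi$ and the injectivity of $\Phi$, both supplied by~\Cref{ass:feature_sampling}.

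First I unwrap $g(\theta)=0$ from the definition of $g$. It reads
\begin{align*}
\Phi^\top D\Phi\,\theta=\Phi^\top D\left(R+\gamma P V_\theta\right).
\end{align*}
This is literally~\Cref{eq:projected_q_bellman_equation}. Since $\Phi^\top D\Phi\succ0$ is nonsingular, it is equivalent to
\begin{align*}
\theta=(\Phi^\top D\Phi)^{-1}\Phi^\top D\left(R+\gamma P V_\theta\right).
\end{align*}

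Next I connect this parameter equation to~\Cref{eq:prelim_projected_bellman_equation}. Multiplying the last display on the left by $\Phi$, and using $\Pi_D=\Phi(\Phi^\top D\Phi)^{-1}\Phi^\top D$, gives $\Phi\theta=\Pi_D(R+\gamma PV_\theta)$. So every zero of $g$ solves the projected Bellman equation.

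For the converse, assume $\Phi\theta=\Pi_D(R+\gamma PV_\theta)$. The right-hand side equals $\Phi\vartheta$ with $\vartheta:=(\Phi^\top D\Phi)^{-1}\Phi^\top D(R+\gamma PV_\theta)$. Then $\Phi(\theta-\vartheta)=0$, and full column rank of $\Phi$ forces $\theta=\vartheta$. An equivalent route is to left-multiply by $\Phi^\top D$ and use $\Phi^\top D\Pi_D=\Phi^\top D$, which returns the normal equation directly. Either way, $g(\theta)=0$.

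There is no real obstacle here. The one point needing care is the reverse direction: it must use injectivity of $\Phi$, or the identity $\Phi^\top D\Pi_D=\Phi^\top D$, rather than inverting $\Phi$. It must also keep $V_\theta$ evaluated at the same $\theta$ on both sides, which holds because both equations are implicit in the same $\theta$. No existence or uniqueness claim is involved; the lemma only identifies the two sets.
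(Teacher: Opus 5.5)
Your proof is correct and takes the same route as the paper: both reduce $g(\theta)=0$ to the normal-equation form $\Phi^\top D\Phi\,\theta=\Phi^\top D(R+\gamma PV_\theta)$ and rely on nonsingularity of $\Phi^\top D\Phi$. You also write out the converse direction, via $\Phi^\top D\Pi_D=\Phi^\top D$ or injectivity of $\Phi$, which the paper's one-line argument leaves implicit.
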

\begin{proof}
Note that $g(\theta)=0$ is $\Phi^\top D\Phi\,\theta=\Phi^\top D\left(R+\gamma P V_\theta\right)$, which is the parameter form of the projected Bellman equation because
$\Phi^\top D\Phi$ is nonsingular under~\Cref{ass:feature_sampling}.
\end{proof}
Therefore, DLQL and PQVI are the same at the
level of solution sets, which is the first point needed before comparing their
switching dynamics. The two algorithms differ in their error dynamics and therefore in their convergence
certificates.

Recent work develops an exact SLS model for the mean dynamics of linear Q-learning and uses the JSR as the stability certificate for products of switching modes~\citep{LeeLim2026}. We apply that same model to target-network variants in this paper.
To this end, for each iterate, choose a deterministic greedy policy $\pi_k$
satisfying
\begin{align*}
\pi_k(s)\in\arg\max_{a\in\mathcal A}\phi(s,a)^\top\theta_k,
\qquad s\in\mathcal S,
\end{align*}
with fixed deterministic tie breaking, so that
$V_{\theta_k}=\Pi^{\pi_k}\Phi\theta_k$. This greedy policy gives the SAS
form of DLQL, and the same calculation leads directly to the SLS error model
after subtracting $\theta^\star$. The next lemma combines these two steps and
identifies the DLQL family that will be compared with PQVI and target updates.
\begin{lemma}
\label{lem:direct_stochastic_policy_error_recursion}
For each DLQL iterate, there exists a stochastic policy
$\mu_k=\mu_{\theta_k,\theta^\star}$ such that the DLQL error recursion is
represented exactly as
\begin{align*}
\theta_{k+1}-\theta^\star
=A_{\mu_k}^{\mathrm{DLQL}}(\theta_k-\theta^\star),
\end{align*}
where for a stochastic policy $\mu:\mathcal S\to\Delta_{|\mathcal A|}$,
\begin{align}
A_\mu^{\mathrm{DLQL}}:=I-\alpha\Phi^\top D\Phi+\alpha\gamma\Phi^\top DP\Pi^\mu\Phi .
\label{eq:A_mu_definition}
\end{align}
For a deterministic policy $\pi$, we write $A_\pi^{\mathrm{DLQL}}$.
\end{lemma}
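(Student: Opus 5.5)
The proof follows the PQVI argument of \Cref{lem:pqvi_stochastic_policy_error_recursion}, now with the DLQL map $h$ in place of the PQVI map. The plan has two steps: first write DLQL in affine switching form using the greedy policy $\pi_k$, then subtract the fixed-point identity for $\theta^\star$ and linearize the Bellman-maximum difference with \Cref{lem:stochastic_policy_linearization}.

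First, the greedy choice of $\pi_k$ gives $V_{\theta_k}=\Pi^{\pi_k}\Phi\theta_k$. Substituting this into~\Cref{eq:prelim_deterministic_linear_q_learning} yields the affine representation
\begin{align*}
\theta_{k+1}=\left(I-\alpha\Phi^\top D\Phi+\alpha\gamma\Phi^\top DP\Pi^{\pi_k}\Phi\right)\theta_k+\alpha\Phi^\top DR.
\end{align*}
This step only records the SAS form; it is not needed for the error identity itself. Second, since $\theta^\star$ solves the projected Bellman equation, \Cref{lem:prelim_pbe_residual_equivalence} gives $g(\theta^\star)=0$, hence $\theta^\star=h(\theta^\star)$. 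This is the one place where the earlier lemma is used, and it ensures the affine term $\alpha\Phi^\top DR$ cancels. Subtracting $\theta^\star=h(\theta^\star)$ from $\theta_{k+1}=h(\theta_k)$ gives
\begin{align*}
\theta_{k+1}-\theta^\star=(I-\alpha\Phi^\top D\Phi)(\theta_k-\theta^\star)+\alpha\gamma\Phi^\top DP\left(V_{\theta_k}-V_{\theta^\star}\right).
\end{align*}

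Third, \Cref{lem:stochastic_policy_linearization} applied with $\theta=\theta_k$ and $\bar\theta=\theta^\star$ supplies a stochastic policy $\mu_k=\mu_{\theta_k,\theta^\star}$ with $V_{\theta_k}-V_{\theta^\star}=\Pi^{\mu_k}\Phi(\theta_k-\theta^\star)$. Substituting this and collecting terms gives exactly $A_{\mu_k}^{\mathrm{DLQL}}(\theta_k-\theta^\star)$, as in~\Cref{eq:A_mu_definition}.

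There is no genuine obstacle here. The only point needing care is that the representation is exact: the linearization lemma holds as an identity, not a bound, so no remainder term appears. For later use one can also note that $A_\mu^{\mathrm{DLQL}}$ is affine in $\Pi^\mu$, so $A_\mu^{\mathrm{DLQL}}\in\co\{A_\pi^{\mathrm{DLQL}}:\pi\in\Theta\}$, just as in the PQVI case.
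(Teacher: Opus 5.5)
Your proposal is correct and follows the paper's proof essentially step for step: you record the affine form via the greedy policy, subtract the fixed-point identity $g(\theta^\star)=0$ (equivalently $h(\theta^\star)=\theta^\star$), and apply the exact linearization of \Cref{lem:stochastic_policy_linearization} with $(\theta,\bar\theta)=(\theta_k,\theta^\star)$ to $V_{\theta_k}-V_{\theta^\star}$. Your closing remark that $A_\mu^{\mathrm{DLQL}}\in\co(\mathcal A^{\mathrm{DLQL}})$ is exactly the fact the paper uses next, in \Cref{lem:dlql_jsr_convergence}.
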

\begin{proof}
First, by the greedy choice of $\pi_k$,
$V_{\theta_k}=\Pi^{\pi_k}\Phi\theta_k$. Substituting this identity into~\Cref{eq:prelim_deterministic_linear_q_learning} gives
\begin{align*}
\theta_{k+1}
=\alpha\Phi^\top D R
+
\left(I-\alpha\Phi^\top D\Phi
+\alpha\gamma\Phi^\top DP\Pi^{\pi_k}\Phi\right)\theta_k,
\end{align*}
which is the greedy-policy SAS representation. Because $\theta^\star$ is the
projected Bellman fixed point, $g(\theta^\star)=0$. Subtracting the fixed-point
update from $h(\theta_k)$ and applying~\Cref{lem:stochastic_policy_linearization}
with $(\theta,\bar\theta)=(\theta_k,\theta^\star)$ gives a stochastic policy
$\mu_k$ satisfying $V_{\theta_k}-V_{\theta^\star}
= \Pi^{\mu_k}\Phi(\theta_k-\theta^\star)$.
Therefore, we have
\begin{align*}
\theta_{k+1}-\theta^\star
&=\theta_k-\theta^\star
-\alpha\Phi^\top D\Phi(\theta_k-\theta^\star)
+\alpha\gamma\Phi^\top DP\Pi^{\mu_k}\Phi(\theta_k-\theta^\star)\\
&=A_{\mu_k}^{\mathrm{DLQL}}(\theta_k-\theta^\star),
\end{align*}
which is the claimed switched error recursion.
\end{proof}
The corresponding finite deterministic switching family is
\begin{align}
\mathcal A^{\mathrm{DLQL}}
:=
\set{A_\pi^{\mathrm{DLQL}}: \pi\in\Theta}.
\label{eq:direct_family}
\end{align}
We now turn the DLQL switching family into a convergence statement. The next
lemma is the direct DLQL counterpart of the standard JSR certificate: a strict JSR
bound for the deterministic-policy family makes the nonlinear DLQL map a global
contraction after the Bellman maximum is linearized by a stochastic policy.
\begin{lemma}
\label{lem:dlql_jsr_convergence}
If $\rho(\mathcal A^{\mathrm{DLQL}})<1$, then the DLQL map $h$ has a unique fixed
point $\theta^\star$. For every initialization, the recursion
$\theta_{k+1}=h(\theta_k)$ converges to $\theta^\star$. Moreover, for every
$\epsilon>0$ such that
$\beta_\epsilon:=\rho(\mathcal A^{\mathrm{DLQL}})+\epsilon<1$, the finite-time
$p_\epsilon$ and Euclidean bounds in~\Cref{lem:standard_jsr_convergence} hold with
$x_k=\theta_k-\theta^\star$ and $\mathcal H=\mathcal A^{\mathrm{DLQL}}$. This fixed
point is equivalently the unique solution of the projected Bellman equation.
\end{lemma}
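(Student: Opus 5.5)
The plan is to repeat the PQVI argument of \Cref{lem:pqvi_jsr_convergence}, replacing the PQVI mode by the DLQL mode in \Cref{eq:A_mu_definition}. The proof has three steps: show that $h$ is a global contraction in a JSR Lyapunov norm, apply the Banach fixed-point theorem, and then use \Cref{lem:standard_jsr_convergence} for the rate.

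\emph{Step 1: linearize the difference of two images.} Take arbitrary $\theta,\bar\theta\in\R^m$. Subtracting $h(\bar\theta)$ from $h(\theta)$ cancels the reward term, which gives
\begin{align*}
h(\theta)-h(\bar\theta)=(I-\alpha\Phi^\top D\Phi)(\theta-\bar\theta)+\alpha\gamma\Phi^\top DP\left(V_\theta-V_{\bar\theta}\right).
\end{align*}
\Cref{lem:stochastic_policy_linearization} supplies a stochastic policy $\mu$ with $V_\theta-V_{\bar\theta}=\Pi^\mu\Phi(\theta-\bar\theta)$. Hence $h(\theta)-h(\bar\theta)=A_\mu^{\mathrm{DLQL}}(\theta-\bar\theta)$.

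\emph{Step 2: place the stochastic mode in the convex hull.} This is the only point needing any care, and it is mild. The map $\mu\mapsto\Pi^\mu$ is affine, and each row of $\Pi^\mu$ depends only on $\mu(s)$. Therefore $\Pi^\mu$ is a convex combination of $\Pi^\pi$ over deterministic $\pi\in\Theta$; for example, use product weights $\lambda_\pi=\prod_s\mu(s)(\pi(s))$. The constant part $I-\alpha\Phi^\top D\Phi$ is unaffected because the weights sum to one. Since $A_\mu^{\mathrm{DLQL}}$ is affine in $\Pi^\mu$, it follows that $A_\mu^{\mathrm{DLQL}}=\sum_\pi\lambda_\pi A_\pi^{\mathrm{DLQL}}\in\co(\mathcal A^{\mathrm{DLQL}})$.

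\emph{Step 3: contraction, fixed point and rate.} Fix $\epsilon>0$ with $\beta_\epsilon<1$. \Cref{lem:common_lyapunov_construction} applied to $\mathcal A^{\mathrm{DLQL}}$, combined with the triangle-inequality argument in the proof of \Cref{lem:standard_jsr_convergence}, gives
\begin{align*}
p_\epsilon\left(h(\theta)-h(\bar\theta)\right)\leq\beta_\epsilon\,p_\epsilon(\theta-\bar\theta).
\end{align*}
So $h$ is a $\beta_\epsilon$-contraction on the complete space $(\R^m,p_\epsilon)$, and the Banach fixed-point theorem yields a unique fixed point $\theta^\star$.

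Now set $\bar\theta=\theta^\star$, or equivalently invoke \Cref{lem:direct_stochastic_policy_error_recursion}. The error $x_k=\theta_k-\theta^\star$ then obeys $x_{k+1}=A_{\mu_k}^{\mathrm{DLQL}}x_k$ with $A_{\mu_k}^{\mathrm{DLQL}}\in\co(\mathcal A^{\mathrm{DLQL}})$. \Cref{lem:standard_jsr_convergence} then gives the $p_\epsilon$ and Euclidean finite-time bounds and $\theta_k\to\theta^\star$ from every initialization.

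Finally, fixed points of $h$ are exactly the zeros of $g$, since $\alpha>0$. By \Cref{lem:prelim_pbe_residual_equivalence}, these zeros coincide with the solutions of the projected Bellman equation. Uniqueness of the fixed point therefore makes $\theta^\star$ the unique solution of that equation.
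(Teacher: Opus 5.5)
Your proposal is correct and follows the paper's proof essentially step for step. You linearize $h(\theta)-h(\bar\theta)$ as $A_\mu^{\mathrm{DLQL}}(\theta-\bar\theta)$, place this mode in $\co(\mathcal A^{\mathrm{DLQL}})$, obtain a $p_\epsilon$-contraction and apply the Banach fixed-point theorem, then specialize to $\bar\theta=\theta^\star$ for the finite-time bounds and use the residual-equivalence lemma to identify the fixed point as the unique projected Bellman solution; your explicit product weights $\lambda_\pi=\prod_s\mu(s)(\pi(s))$ simply make precise the convex-hull step that the paper only asserts.
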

\begin{proof}
For two parameters $\theta$ and $\bar\theta$, the stochastic-policy linearization
in~\Cref{lem:stochastic_policy_linearization} gives
\begin{align*}
h(\theta)-h(\bar\theta)=A_\mu^{\mathrm{DLQL}}(\theta-\bar\theta)
\end{align*}
for some stochastic policy $\mu$. Since a stochastic policy is a state-wise convex
combination of deterministic policies and the DLQL mode is affine in $\Pi^\mu$,
we have $A_\mu^{\mathrm{DLQL}}\in\co(\mathcal A^{\mathrm{DLQL}})$. Applying
\Cref{lem:standard_jsr_convergence} to the difference of two DLQL trajectories
therefore gives
\begin{align*}
p_\epsilon(h(\theta)-h(\bar\theta))
\leq \beta_\epsilon p_\epsilon(\theta-\bar\theta),
\qquad \beta_\epsilon<1.
\end{align*}
Hence, $h$ is a contraction in the norm $p_\epsilon$, and since the parameter space
is finite-dimensional, $h$ has a unique fixed point $\theta^\star$. Taking
$\bar\theta=\theta^\star$ gives
\begin{align*}
\theta_{k+1}-\theta^\star=A_{\mu_k}^{\mathrm{DLQL}}(\theta_k-\theta^\star),
\qquad A_{\mu_k}^{\mathrm{DLQL}}\in\co(\mathcal A^{\mathrm{DLQL}}),
\end{align*}
so~\Cref{lem:standard_jsr_convergence} gives the stated finite-time bounds and
$\theta_k\to\theta^\star$. Finally,
\Cref{lem:prelim_pbe_residual_equivalence} identifies fixed points of $h$ with
solutions of the projected Bellman equation, so that solution is unique.
\end{proof}

Although DLQL and PQVI solve the same
projected Q-Bellman equation, convergence of one does not imply convergence of
the other. The DLQL update can be written as a residual step toward the
PQVI update evaluated at the same current parameter:
\begin{align*}
h(\theta_k)
=
\theta_k+
\alpha\Phi^\top D\Phi
\left(
(\Phi^\top D\Phi)^{-1}\Phi^\top D
\left(R+\gamma P V_{\theta_k}\right)
-\theta_k
\right).
\end{align*}
The two iterations coincide only in the special scalar-step-size case $\alpha\Phi^\top D\Phi=I$. Equality of fixed points therefore does not imply equality of convergence
behavior. The DLQL family is $\mathcal A^{\mathrm{DLQL}}$ from~\Cref{eq:direct_family}, whereas PQVI is governed by $\mathcal A^{\mathrm{PQVI}}$
from~\Cref{eq:pqvi_family_intro}. These families can have opposite stability
behavior. Such constructions are easy to find in scalar or small finite-state instances:
PQVI can converge while DLQL has a growing scalar iteration, and conversely DLQL
can reach the fixed point while PQVI has a growing scalar iteration.

\section{\texorpdfstring{$m$}{m}-DLQL: Periodic Hard Targets as Target-Boundary Error Maps}

The periodic hard target update used in deep Q-learning~\cite{MnihEtAl2015}
holds the target parameter fixed for several online updates and then copies the
online parameter into the target. Periodic target updates have also been studied
in tabular target Q-learning analyses~\citep{leehe2020periodic}. In the terminology
of this paper, the version with target-copy period $m$ is $m$-DLQL. This section first writes the corresponding Bellman operator and value-iteration form, and then passes to the exact SLS. A \emph{target boundary} is an update time at which the target parameter is reset or updated before a new target block begins. The \emph{boundary target} is the
value of $\bar\theta$ at that boundary; for a hard target, it is copied from
the online parameter and then held fixed during the following online updates.
The Bellman operator for one online update with frozen target $\bar\theta$ is
\begin{align*}
\theta\mapsto
\theta+
\alpha\Phi^\top D\left(R+\gamma P V_{\bar\theta}-\Phi\theta\right).
\end{align*}
Before introducing the error recursion around $\theta^\star$, it is useful to
write the online recursion itself. Starting from
$\theta_{t,0}:=\bar\theta_t=\theta_t$, a period with frozen target
$\bar\theta_t$ applies
\begin{align*}
\theta_{t,i+1}
=
\theta_{t,i}+
\alpha\Phi^\top D\left(R+\gamma P V_{\bar\theta_t}-\Phi\theta_{t,i}\right),
\qquad i\in\{0,1,2,\ldots,m-1\},
\end{align*}
and then copies $\bar\theta_{t+m}=\theta_{t,m}$. Let $\pi_t$ be a deterministic
greedy policy for the frozen target, so that
$V_{\bar\theta_t}=\Pi^{\pi_t}\Phi\bar\theta_t$. Then, the same within-period update has
the exact policy-selected form
\begin{align}
\theta_{t,i+1}
=
\theta_{t,i}+
\alpha\Phi^\top D\left(R+\gamma P\Pi^{\pi_t}\Phi\bar\theta_t-\Phi\theta_{t,i}\right),
\qquad i\in\{0,1,2,\ldots,m-1\}.
\label{eq:hard_target_policy_selected_update}
\end{align}
Since $\bar\theta_t$ is fixed throughout the period, each online update is the
gradient step
\begin{align*}
\theta_{t,i+1}
=
\theta_{t,i}-\frac{\alpha}{2}
\left.\nabla_\theta f(\theta;\bar\theta_t)\right|_{\theta=\theta_{t,i}},
\qquad i\in\{0,1,2,\ldots,m-1\}.
\end{align*}
As a result, the period-$m$ hard-target boundary output is obtained by taking $m$
gradient steps on $f(\cdot;\bar\theta_t)$ while the target argument
$\bar\theta_t$ is frozen.
With this interpretation in place, the $m$-DLQL update is summarized in~\Cref{alg:period_m_hard_target_vi}.
\begin{algorithm}[t]
\caption{$m$-DLQL: Period-$m$ Hard-Target Linear Q-learning}
\label{alg:period_m_hard_target_vi}
\begin{algorithmic}[1]
\REQUIRE Step-size $\alpha>0$, period $m\geq1$, and initial parameter $\theta_0$.
\FOR{target boundaries $t=0,m,2m,\ldots$}
\STATE Copy the online parameter into the target: $\bar\theta_t=\theta_t$.
\STATE Freeze $\bar\theta_t$ during the following $m$ online updates.
\FOR{$i\in\{0,1,2,\ldots,m-1\}$}
\STATE Update
$\displaystyle
\theta_{t+i+1}
=
\theta_{t+i}+
\alpha\Phi^\top D\left(R+\gamma P V_{\bar\theta_t}-\Phi\theta_{t+i}\right).
$
\ENDFOR
\STATE Copy the online parameter into the target at the next boundary:
$\bar\theta_{t+m}=\theta_{t+m}$.
\ENDFOR
\end{algorithmic}
\end{algorithm}

\subsection{SLS Representation}

Let $m\geq 1$ be the target-copy period. At the beginning of a period, assume
that the online and target parameters agree at the period boundary:
$\bar\theta_t=\theta_t$. Equivalently, their errors around the projected Bellman
fixed point agree: $\bar\theta_t-\theta^\star=\theta_t-\theta^\star$.
During the following $m$ online updates, the target is frozen. Since the frozen
target fixes the Bellman maximum over the whole block, one policy-selected mode
is held over that period. The next lemma gives the boundary error map and, inside
the statement, defines the period-$m$ stochastic mode needed for that map.
\begin{lemma}
\label{lem:hard_period_boundary_map}
For a stochastic policy $\mu:\mathcal S\to\Delta_{|\mathcal A|}$ and period
length $m\geq1$, define
\begin{align*}
A_{\mu,m}^{\mathrm{DLQL}}
:={}&(I-\alpha\Phi^\top D\Phi)^m
+\sum_{i=0}^{m-1}(I-\alpha\Phi^\top D\Phi)^i
\alpha\gamma\Phi^\top DP\Pi^\mu\Phi .
\end{align*}
If the target is frozen at $\bar\theta_t$ during the period and
$\bar\theta_t=\theta_t$ at the target boundary, then after the hard copy
$\bar\theta_{t+m}=\theta_{t+m}$ there exists a stochastic policy
$\mu_t=\mu_{\bar\theta_t,\theta^\star}$, fixed throughout the period, such that
\begin{align}
\theta_{t+m}-\theta^\star=A_{\mu_t,m}^{\mathrm{DLQL}}(\theta_t-\theta^\star),
\qquad
\bar\theta_{t+m}-\theta^\star=\theta_{t+m}-\theta^\star.
\label{eq:hard_stochastic_period_map}
\end{align}
For a deterministic policy $\pi\in\Theta$, we write
$A_{\pi,m}^{\mathrm{DLQL}}$ for the same period map with $\mu=\pi$.
\end{lemma}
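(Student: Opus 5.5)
Write $M:=I-\alpha\Phi^\top D\Phi$ and $e_{t,i}:=\theta_{t,i}-\theta^\star$. The plan is to turn the within-period recursion~\Cref{eq:hard_target_policy_selected_update} into a linear recursion for the online error driven by a constant forcing term coming from the frozen target, and then unroll it over the $m$ steps.

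First, the fixed point. Since $\theta^\star$ satisfies $g(\theta^\star)=0$ by~\Cref{lem:prelim_pbe_residual_equivalence}, we have
\begin{align*}
\theta^\star=\theta^\star+\alpha\Phi^\top D\left(R+\gamma PV_{\theta^\star}-\Phi\theta^\star\right).
\end{align*}
Subtracting this from the online update with target $\bar\theta_t$ cancels $R$ and gives
\begin{align*}
e_{t,i+1}=Me_{t,i}+\alpha\gamma\Phi^\top DP\left(V_{\bar\theta_t}-V_{\theta^\star}\right).
\end{align*}

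Second, the policy. Apply~\Cref{lem:stochastic_policy_linearization} with $(\theta,\bar\theta)=(\bar\theta_t,\theta^\star)$. This yields a stochastic policy $\mu_t=\mu_{\bar\theta_t,\theta^\star}$ with $V_{\bar\theta_t}-V_{\theta^\star}=\Pi^{\mu_t}\Phi(\bar\theta_t-\theta^\star)$. Because $\bar\theta_t$ is frozen over the block, $\mu_t$ is chosen once and stays fixed for all $i$. This is the point to emphasize: the linearization is applied to the target, not to the online iterate, so a single mode governs the whole period. With $b:=\alpha\gamma\Phi^\top DP\Pi^{\mu_t}\Phi(\bar\theta_t-\theta^\star)$, the recursion becomes $e_{t,i+1}=Me_{t,i}+b$ with $b$ constant in $i$.

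Third, unroll. Induction on $i$ gives
\begin{align*}
e_{t,i}=M^ie_{t,0}+\sum_{j=0}^{i-1}M^jb.
\end{align*}
At $i=m$, use the boundary condition $e_{t,0}=\theta_t-\theta^\star=\bar\theta_t-\theta^\star$ to factor out the common error. This gives
\begin{align*}
\theta_{t+m}-\theta^\star=\Big(M^m+\sum_{j=0}^{m-1}M^j\alpha\gamma\Phi^\top DP\Pi^{\mu_t}\Phi\Big)(\theta_t-\theta^\star)=A^{\mathrm{DLQL}}_{\mu_t,m}(\theta_t-\theta^\star).
\end{align*}
The second identity in~\Cref{eq:hard_stochastic_period_map} follows directly from the hard copy $\bar\theta_{t+m}=\theta_{t+m}$.

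The main obstacle is bookkeeping rather than analysis. One must check that the policy is tied to the frozen target, so the mode cannot change inside the block. One must also check that the boundary equality is what allows the forcing term and the initial error to combine into a single linear map. Neither step requires the step-size condition; that condition is needed only later for stability.
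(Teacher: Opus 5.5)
Your proposal is correct and matches the paper's proof essentially step for step. Like the paper, you subtract the fixed-point identity $0=\alpha\Phi^\top D(R+\gamma PV_{\theta^\star}-\Phi\theta^\star)$, apply \Cref{lem:stochastic_policy_linearization} once at the frozen target so that a single $\mu_t$ governs the whole block, unroll the constant-forcing recursion by induction, and use $\bar\theta_t=\theta_t$ to combine the two terms into $A^{\mathrm{DLQL}}_{\mu_t,m}$.
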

\begin{proof}
The projected Q-Bellman equation for $\theta^\star$ is equivalent to $0=
\alpha\Phi^\top D\left(R+\gamma P V_{\theta^\star}-\Phi\theta^\star\right)$.
Subtracting this identity from the frozen-target update~\Cref{eq:hard_target_policy_selected_update} gives, for each
$i\in\{0,1,2,\ldots,m-1\}$,
\begin{align*}
\theta_{t+i+1}-\theta^\star
&=
\theta_{t+i}-\theta^\star
+
\alpha\Phi^\top D\left(\gamma P(V_{\bar\theta_t}-V_{\theta^\star})
-\Phi(\theta_{t+i}-\theta^\star)\right).
\end{align*}
By~\Cref{lem:stochastic_policy_linearization}, there exists a stochastic policy
$\mu_t=\mu_{\bar\theta_t,\theta^\star}$ such that
$V_{\bar\theta_t}-V_{\theta^\star}
=\Pi^{\mu_t}\Phi(\bar\theta_t-\theta^\star)$. Because $\bar\theta_t$ is frozen,
this same $\mu_t$ is fixed throughout the period. Hence the one-step recursion is
\begin{align}
\theta_{t+i+1}-\theta^\star
=(I-\alpha\Phi^\top D\Phi)(\theta_{t+i}-\theta^\star)
+\alpha\gamma\Phi^\top DP\Pi^{\mu_t}\Phi\,(\bar\theta_t-\theta^\star),
\qquad i\in\{0,1,2,\ldots,m-1\}.
\label{eq:hard_within_period}
\end{align}
Summing this recursion over the period gives, by induction on $j=1,\ldots,m$,
\begin{align*}
\theta_{t+j}-\theta^\star
&=
(I-\alpha\Phi^\top D\Phi)^j(\theta_t-\theta^\star)
+\sum_{i=0}^{j-1}(I-\alpha\Phi^\top D\Phi)^i
\alpha\gamma\Phi^\top DP\Pi^{\mu_t}\Phi\,(\bar\theta_t-\theta^\star).
\end{align*}
The case $j=1$ is exactly~\Cref{eq:hard_within_period}; the induction step is
obtained by applying~\Cref{eq:hard_within_period} once more and shifting the
finite sum. Using $\bar\theta_t-\theta^\star=\theta_t-\theta^\star$ at $j=m$
gives the first identity in~\Cref{eq:hard_stochastic_period_map}. The hard copy at
the end of the period gives $\bar\theta_{t+m}=\theta_{t+m}$, and subtracting
$\theta^\star$ gives the second identity.
\end{proof}
For deterministic policies, the admissible target-boundary modes are the matrices
$A_{\pi,m}^{\mathrm{DLQL}}$, and arbitrary products of these modes describe
arbitrary deterministic switching across target periods. Define the corresponding
finite switching family
\begin{align*}
\mathcal A_m^{\mathrm{DLQL}} := \set{A_{\pi,m}^{\mathrm{DLQL}}:\pi\in\Theta}.
\end{align*}
Consequently, the period-$m$ hard-target update has an SLS model at target
boundaries. The number $\rho(\mathcal A_m^{\mathrm{DLQL}})$ is the JSR of the
matrices $A_{\pi,m}^{\mathrm{DLQL}}$ that map one target-boundary error to the next
after $m$ online updates. The next lemma turns this boundary SLS description into
the hard-target convergence certificate used in the endpoint comparison below.
\begin{lemma}
\label{lem:hard_target_boundary_jsr_convergence}
Let $m\geq1$ be fixed and let $\theta^\star$ be the projected Bellman fixed point
under consideration. If $\rho(\mathcal A_m^{\mathrm{DLQL}})<1$, then from every initial parameter the period-$m$
hard-target update converges to $\theta^\star$ at target boundaries and along the
within-period online iterates. Moreover, for every $\epsilon>0$ such that
$\beta_\epsilon:=\rho(\mathcal A_m^{\mathrm{DLQL}})+\epsilon<1$, the finite-time
$p_\epsilon$ and Euclidean bounds in~\Cref{lem:standard_jsr_convergence} hold with
$x_k=\theta_{km}-\theta^\star$ and $\mathcal H=\mathcal A_m^{\mathrm{DLQL}}$.
Since one target period contains $m$ online updates, the corresponding normalized
online-step JSR rate is $\rho(\mathcal A_m^{\mathrm{DLQL}})^{1/m}$.
\end{lemma}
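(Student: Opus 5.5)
The plan is to reduce everything to \Cref{lem:standard_jsr_convergence} applied at target boundaries, and then control the within-period iterates by a finite bound.

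\textbf{Boundary recursion.} By \Cref{lem:hard_period_boundary_map}, at each boundary $t=km$ there is a stochastic policy $\mu_t=\mu_{\bar\theta_t,\theta^\star}$ such that
\[
\theta_{(k+1)m}-\theta^\star=A_{\mu_t,m}^{\mathrm{DLQL}}(\theta_{km}-\theta^\star).
\]
The matrix $A_{\mu,m}^{\mathrm{DLQL}}$ is affine in $\Pi^\mu$. Also, $\Pi^\mu$ is a state-wise convex combination of deterministic $\Pi^\pi$, and hence a convex combination of the finitely many $\Pi^\pi$, $\pi\in\Theta$, with the product weights $\prod_s\mu(s)(\pi(s))$. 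It follows that $A_{\mu_t,m}^{\mathrm{DLQL}}\in\co(\mathcal A_m^{\mathrm{DLQL}})$, exactly as in the proofs of \Cref{lem:pqvi_jsr_convergence,lem:dlql_jsr_convergence}. Applying \Cref{lem:standard_jsr_convergence} with $x_k=\theta_{km}-\theta^\star$ and $\mathcal H=\mathcal A_m^{\mathrm{DLQL}}$ then gives the stated $p_\epsilon$ and Euclidean bounds, and $x_k\to0$. Since the hard copy gives $\bar\theta_{km}=\theta_{km}$, the boundary targets converge as well.

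\textbf{Within-period iterates.} For $1\le j\le m-1$, the induction formula in the proof of \Cref{lem:hard_period_boundary_map} gives
\[
\theta_{km+j}-\theta^\star=A_{\mu_{km},j}^{\mathrm{DLQL}}(\theta_{km}-\theta^\star).
\]
Set $K:=\max_{0\le j\le m}\max_{\pi\in\Theta}\|A_{\pi,j}^{\mathrm{DLQL}}\|_2$, which is finite. Every convex combination of these matrices has norm at most $K$, so
\[
\|\theta_{km+j}-\theta^\star\|_2\le K\sqrt{C_\epsilon}\,\beta_\epsilon^k\|\theta_0-\theta^\star\|_2 .
\]
This tends to zero, so the whole online sequence converges to $\theta^\star$, and the target sequence, being piecewise constant and equal to boundary values, converges too. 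For this step we need $\theta^\star$ to exist. The statement takes it as given: \Cref{lem:hard_period_boundary_map} only uses $g(\theta^\star)=0$.

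\textbf{Normalized rate.} This is just bookkeeping. After $n=km$ online steps the bound is $\beta_\epsilon^{n/m}=(\beta_\epsilon^{1/m})^n$. Letting $\epsilon\downarrow0$ gives the per-online-step rate $\rho(\mathcal A_m^{\mathrm{DLQL}})^{1/m}$.

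The only point I expect to need care is the convex-hull membership, since $\mu_t$ depends on $\bar\theta_t$ and is fixed for the whole block. Because the period map is affine in $\Pi^\mu$, this reduces to the same argument already used for the DLQL and PQVI families. Uniqueness of $\theta^\star$ could also be argued from the boundary map being a $p_\epsilon$-contraction, but the statement does not require it.
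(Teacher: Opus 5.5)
Your proposal is correct and follows essentially the same route as the paper. The paper also takes the boundary recursion from \Cref{lem:hard_period_boundary_map}, gets convex-hull membership from the fact that the period map is affine in $\Pi^\mu$, applies \Cref{lem:standard_jsr_convergence} at target boundaries, and passes to within-period iterates by bounding the finitely many within-period maps; your product-weight decomposition and explicit constant $K$ just make quantitative what the paper states as ``uniformly bounded over all stochastic policies.''
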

\begin{proof}
At each target boundary $t=km$, the hard copy gives $\bar\theta_{km}=\theta_{km}$.
By~\Cref{lem:hard_period_boundary_map}, there exists a stochastic policy
$\mu_{km}$, fixed during that period, such that
\begin{align*}
\theta_{(k+1)m}-\theta^\star
=A_{\mu_{km},m}^{\mathrm{DLQL}}(\theta_{km}-\theta^\star).
\end{align*}
A stochastic policy is a state-wise convex combination of deterministic policies,
and $A_{\mu_{km},m}^{\mathrm{DLQL}}$ is affine in $\Pi^{\mu_{km}}$. Therefore
$A_{\mu_{km},m}^{\mathrm{DLQL}}\in\co(\mathcal A_m^{\mathrm{DLQL}})$. Applying
\Cref{lem:standard_jsr_convergence} with $x_k=\theta_{km}-\theta^\star$ and
$\mathcal H=\mathcal A_m^{\mathrm{DLQL}}$ gives the stated finite-time bounds and
$\theta_{km}\to\theta^\star$.

It remains only to pass from boundary convergence to convergence of all online
iterates. For each fixed within-period index $i\in\{0,1,2,\ldots,m-1\}$, the finite
iteration formula in the proof of~\Cref{lem:hard_period_boundary_map} gives
\begin{align*}
\theta_{km+i}-\theta^\star
={}&(I-\alpha\Phi^\top D\Phi)^i(\theta_{km}-\theta^\star)\\
&+\sum_{j=0}^{i-1}(I-\alpha\Phi^\top D\Phi)^j
\alpha\gamma\Phi^\top DP\Pi^{\mu_{km}}\Phi(\theta_{km}-\theta^\star).
\end{align*}
For fixed $m$, the matrices in this finite expression are uniformly bounded over
all stochastic policies. Hence $\theta_{km+i}-\theta^\star\to0$ for every
$i\in\{0,1,2,\ldots,m-1\}$, and all within-period online iterates converge to
$\theta^\star$ as well.
\end{proof}

\subsection{Interpolation Interpretation}

We next explain how the interpretation of $m$-DLQL changes as the target period
$m$ varies. The key algebraic fact is that the deterministic period map separates
into a PQVI component and a frozen-target residual that vanishes when the matrix
$I-\alpha\Phi^\top D\Phi$ is Schur-stable. The following lemma states this identity, the
period-one endpoint, and the large-period JSR limit before these facts are used
for the interpolation picture.
\begin{lemma}
\label{lem:m_dlql_jsr_convergence}
For every deterministic policy $\pi\in\Theta$,
\begin{align*}
A_{\pi,1}^{\mathrm{DLQL}}=A_\pi^{\mathrm{DLQL}},
\qquad
\rho(\mathcal A_1^{\mathrm{DLQL}})=\rho(\mathcal A^{\mathrm{DLQL}}).
\end{align*}
If $\rho(I-\alpha\Phi^\top D\Phi)<1$, then
\begin{align*}
\rho(\mathcal A_m^{\mathrm{DLQL}})\to\rho(\mathcal A^{\mathrm{PQVI}})
\qquad\text{as }m\to\infty.
\end{align*}
\end{lemma}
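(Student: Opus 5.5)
The period-one identity is immediate. Setting $m=1$ in the definition of $A_{\pi,m}^{\mathrm{DLQL}}$ from \Cref{lem:hard_period_boundary_map} gives $(I-\alpha\Phi^\top D\Phi)+\alpha\gamma\Phi^\top DP\Pi^\pi\Phi$, which is exactly $A_\pi^{\mathrm{DLQL}}$ in \Cref{eq:A_mu_definition}. The two families therefore coincide as sets, and so do their JSRs.

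For the large-period limit, write $M:=\alpha\Phi^\top D\Phi$ and $B:=I-M$. First I sum the geometric series: $\sum_{i=0}^{m-1}B^i M=(I-B^m)$, valid because $M$ is invertible (\Cref{ass:feature_sampling}) and $B$ commutes with $M$. Since $\alpha\gamma\Phi^\top DP\Pi^\pi\Phi=M A_\pi^{\mathrm{PQVI}}$, this yields the decomposition
\begin{align*}
A_{\pi,m}^{\mathrm{DLQL}}
= B^m+(I-B^m)A_\pi^{\mathrm{PQVI}}
= A_\pi^{\mathrm{PQVI}}+B^m\bigl(I-A_\pi^{\mathrm{PQVI}}\bigr).
\end{align*}
Under $\rho(B)<1$ we have $\|B^m\|\to0$. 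Because $\Theta$ is finite, $E_m:=\max_{\pi\in\Theta}\|A_{\pi,m}^{\mathrm{DLQL}}-A_\pi^{\mathrm{PQVI}}\|\leq\|B^m\|\max_\pi\|I-A_\pi^{\mathrm{PQVI}}\|\to0$. So the finite family $\mathcal A_m^{\mathrm{DLQL}}$ converges to $\mathcal A^{\mathrm{PQVI}}$ elementwise, indexed by the same $\pi$, which gives Hausdorff convergence of bounded sets.

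The remaining step is continuity of the JSR with respect to the Hausdorff metric on bounded (in particular finite) families, a classical fact (Heil--Strang; see \citep{jungers2009joint}). I would cite it, and the claim follows. If a self-contained argument is wanted, the two inequalities can be shown separately.

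For upper semicontinuity, use an extremal-type norm. For any $\epsilon>0$, \Cref{lem:common_lyapunov_construction} applied to a rescaled family (or, if $\rho(\mathcal A^{\mathrm{PQVI}})\ge1$, the analogous construction for $\mathcal A^{\mathrm{PQVI}}/(\rho+\epsilon)$) gives a norm $p$ with $p(A_\pi^{\mathrm{PQVI}}x)\leq(\rho(\mathcal A^{\mathrm{PQVI}})+\epsilon)p(x)$. The induced operator norms of the perturbed modes are then at most $\rho(\mathcal A^{\mathrm{PQVI}})+\epsilon+cE_m$, so $\limsup_m\rho(\mathcal A_m^{\mathrm{DLQL}})\leq\rho(\mathcal A^{\mathrm{PQVI}})+\epsilon$. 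The rescaled construction applies to any finite family with positive JSR, and the zero-JSR case is handled by rescaling with $\epsilon$ directly.

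For lower semicontinuity, use $\rho(\mathcal H)\geq\max_{\text{length }k}\rho(A_{\sigma})^{1/k}$ together with the Berger--Wang identity $\rho(\mathcal H)=\limsup_k\max\rho(A_\sigma)^{1/k}$. Choose a product of length $k$ in $\mathcal A^{\mathrm{PQVI}}$ whose spectral radius$^{1/k}$ is within $\epsilon$ of $\rho(\mathcal A^{\mathrm{PQVI}})$. The corresponding product in $\mathcal A_m^{\mathrm{DLQL}}$ converges to it as $m\to\infty$, and continuity of the ordinary spectral radius then gives $\liminf_m\rho(\mathcal A_m^{\mathrm{DLQL}})\geq\rho(\mathcal A^{\mathrm{PQVI}})-\epsilon$.

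The main obstacle is this continuity step, specifically the lower bound, which needs Berger--Wang. I would simply invoke the standard continuity theorem for the JSR rather than reprove it. The algebraic decomposition is routine once the geometric sum identity $\sum_{i<m}B^iM=I-B^m$ is noticed.
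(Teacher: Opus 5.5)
Your proof is correct and follows the paper's route: the same decomposition $A_{\pi,m}^{\mathrm{DLQL}}=A_\pi^{\mathrm{PQVI}}+(I-\alpha\Phi^\top D\Phi)^m(I-A_\pi^{\mathrm{PQVI}})$, uniform convergence over the finite policy set, and then the Heil--Strang continuity of the JSR. Your optional upper/lower semicontinuity sketch just spells out that cited fact, and the telescoping identity $\sum_{i<m}B^i(I-B)=I-B^m$ does not actually need $M$ to be invertible.
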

\begin{proof}
For each deterministic policy $\pi\in\Theta$, the specialization of the period
map in~\Cref{lem:hard_period_boundary_map} gives
\begin{align*}
A_{\pi,m}^{\mathrm{DLQL}}
&=(I-\alpha\Phi^\top D\Phi)^m
+\sum_{i=0}^{m-1}(I-\alpha\Phi^\top D\Phi)^i
\alpha\gamma\Phi^\top DP\Pi^\pi\Phi.
\end{align*}
Using the definition of $A_\pi^{\mathrm{PQVI}}$ in~\Cref{eq:pqvi_mode_definition},
\begin{align*}
\alpha\gamma\Phi^\top DP\Pi^\pi\Phi
&=\alpha\Phi^\top D\Phi
(\Phi^\top D\Phi)^{-1}\gamma\Phi^\top DP\Pi^\pi\Phi\\
&=\alpha\Phi^\top D\Phi A_\pi^{\mathrm{PQVI}}.
\end{align*}
Since $\alpha\Phi^\top D\Phi$ commutes with $I-\alpha\Phi^\top D\Phi$, we have
\begin{align*}
\sum_{i=0}^{m-1}(I-\alpha\Phi^\top D\Phi)^i\alpha\Phi^\top D\Phi
&=\sum_{i=0}^{m-1}(I-\alpha\Phi^\top D\Phi)^i
\left[I-(I-\alpha\Phi^\top D\Phi)\right]\\
&=I-(I-\alpha\Phi^\top D\Phi)^m.
\end{align*}
Therefore, it follows that 
\begin{align}
A_{\pi,m}^{\mathrm{DLQL}}
&=(I-\alpha\Phi^\top D\Phi)^m
+\left[I-(I-\alpha\Phi^\top D\Phi)^m\right]A_\pi^{\mathrm{PQVI}} \nonumber\\
&=A_\pi^{\mathrm{PQVI}}+(I-\alpha\Phi^\top D\Phi)^m(I-A_\pi^{\mathrm{PQVI}}).
\label{eq:m_dlql_converges_to_pqvi}
\end{align}
Setting $m=1$ in~\Cref{eq:m_dlql_converges_to_pqvi} gives
\begin{align*}
A_{\pi,1}^{\mathrm{DLQL}}
&=A_\pi^{\mathrm{PQVI}}+(I-\alpha\Phi^\top D\Phi)(I-A_\pi^{\mathrm{PQVI}})\\
&=I-\alpha\Phi^\top D\Phi+
\alpha\Phi^\top D\Phi A_\pi^{\mathrm{PQVI}}\\
&=I-\alpha\Phi^\top D\Phi+
\alpha\gamma\Phi^\top DP\Pi^\pi\Phi
=A_\pi^{\mathrm{DLQL}},
\end{align*}
where the last equality is~\Cref{eq:A_mu_definition} for deterministic $\pi$.
Therefore, the two period-one families coincide, and their JSRs are equal.

If $\rho(I-\alpha\Phi^\top D\Phi)<1$, then
$(I-\alpha\Phi^\top D\Phi)^m\to0$. Therefore~\Cref{eq:m_dlql_converges_to_pqvi}
gives $A_{\pi,m}^{\mathrm{DLQL}}\to A_\pi^{\mathrm{PQVI}}$ for each deterministic
policy $\pi$, uniformly over the finite policy set $\Theta$. The JSR of a finite
family is continuous with respect to the matrix entries~\citep{heilstrang1995continuity,jungers2009joint}.
Hence $\rho(\mathcal A_m^{\mathrm{DLQL}})\to\rho(\mathcal A^{\mathrm{PQVI}})$.
\end{proof}

The endpoint identity and large-period limit in
\Cref{lem:m_dlql_jsr_convergence} show that, under the step-size condition below,
the period length acts as a single interpolation parameter between the one-step
target update of DLQL and the fully PQVI endpoint, as illustrated in
\Cref{fig:hard_period_interpolation}.
\begin{figure}[H]
\centering
\begin{tikzpicture}[>=Stealth, line cap=round, line join=round]
  \draw[line width=1.35pt] (0,0) -- (10,0);
  \fill[red] (0,0) circle (2.8pt);
  \fill (5,0) circle (2.8pt);
  \fill[blue] (10,0) circle (2.8pt);

  \node[align=center, anchor=south west] at (-0.75,0.35)
    {DLQL};
  \node[align=center, anchor=south east] at (10.75,0.35)
    {PQVI};

  \draw[thick, ->] (5,1.18) -- (5,0.13);
  \node[align=center, anchor=south] at (5,1.18) {$m$-DLQL};

  \node[anchor=north] at (0,-0.30) {$m=1$};
  \node[anchor=north] at (5,-0.30) {$m$};
  \node[anchor=north] at (10,-0.30) {$m=\infty$};

  \draw[thick, ->] (3.65,-1.08) -- (6.35,-1.08);
  \node[anchor=north] at (5,-1.20) {$m\to\infty$};
\end{tikzpicture}
\caption{Hard-target period length as an interpolation parameter. The
period $m=1$ boundary mode is the DLQL mode,
while the infinite-period endpoint $m=\infty$ recovers PQVI.}
\label{fig:hard_period_interpolation}
\end{figure}

In particular, when the step-size $\alpha$ is sufficiently small,
\Cref{eq:m_dlql_converges_to_pqvi} separates the PQVI component
$A_\pi^{\mathrm{PQVI}}$ from the residual term
$(I-\alpha\Phi^\top D\Phi)^m(I-A_\pi^{\mathrm{PQVI}})$. At $m=1$, this is the DLQL
mode $A_\pi^{\mathrm{DLQL}}$; as $m\to\infty$, the residual term vanishes whenever
$\rho(I-\alpha\Phi^\top D\Phi)<1$, and the finite-family JSR converges to the
PQVI JSR by~\Cref{lem:m_dlql_jsr_convergence}. This identifies the large-period endpoint of the
hard-target boundary family. Because
$\Phi^\top D\Phi$ is symmetric positive definite by the Gram-matrix observation
following~\Cref{ass:feature_sampling}, a sufficiently small positive step-size
$\alpha$ makes the spectral radius of $I-\alpha\Phi^\top D\Phi$ smaller than one.
The next lemma gives the exact allowable range, which is later used whenever a
frozen-target online update must be stable.
\begin{lemma}
\label{lem:relaxation_stepsize_condition}
The matrix $I-\alpha\Phi^\top D\Phi$ satisfies
\begin{align}
\rho(I-\alpha\Phi^\top D\Phi)<1
\quad\Longleftrightarrow\quad
0<\alpha<\frac{2}{\lambda_{\max}(\Phi^\top D\Phi)}.
\label{eq:relaxation_stepsize_condition}
\end{align}
\end{lemma}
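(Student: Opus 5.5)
The plan is to diagonalize the symmetric matrix $G:=\Phi^\top D\Phi$ and read off the spectrum of $I-\alpha G$ directly. By the Gram-matrix observation following \Cref{ass:feature_sampling}, $G\succ0$. It therefore has an orthonormal eigendecomposition $G=U\Lambda U^\top$ with eigenvalues $0<\lambda_1\leq\cdots\leq\lambda_m=\lambda_{\max}(G)$. Then $I-\alpha G=U(I-\alpha\Lambda)U^\top$ is symmetric, and its eigenvalues are exactly $1-\alpha\lambda_i$, $i=1,\ldots,m$. Consequently
\begin{align*}
\rho(I-\alpha G)=\max_{i}|1-\alpha\lambda_i|.
\end{align*}

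Next, reduce the spectral condition to scalar inequalities. The condition $\rho(I-\alpha G)<1$ holds if and only if $-1<1-\alpha\lambda_i<1$ for every $i$, that is, $0<\alpha\lambda_i<2$ for every $i$. Since every $\lambda_i>0$, the left inequality $\alpha\lambda_i>0$ holds for all $i$ exactly when $\alpha>0$. The right inequality $\alpha\lambda_i<2$ holds for all $i$ exactly when it holds at the largest eigenvalue. For $\alpha>0$ this is $\alpha<2/\lambda_{\max}(G)$.

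For the converse direction, if $\alpha\leq0$, then $1-\alpha\lambda_1\geq1$, so $\rho\geq1$. If $\alpha\geq2/\lambda_{\max}$, then $1-\alpha\lambda_{\max}\leq-1$, so again $\rho\geq1$. Together these give the equivalence in~\Cref{eq:relaxation_stepsize_condition}.

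The argument is elementary. The only point needing care is that positive definiteness of $G$, not merely symmetry, is used. It is what makes $\alpha>0$ necessary and lets the single extreme eigenvalue $\lambda_{\max}$ control the whole spectrum. Here $m$ denotes the feature dimension, not the target period.
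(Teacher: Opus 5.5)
Your proof is correct and follows essentially the same route as the paper: you use $\Phi^\top D\Phi\succ0$ to identify the eigenvalues of $I-\alpha\Phi^\top D\Phi$ as $1-\alpha\lambda_i$, then reduce $\rho<1$ to $0<\alpha\lambda_i<2$ for every $i$, a condition governed by positivity of the eigenvalues and by $\lambda_{\max}$. Your separate treatment of $\alpha\leq0$ and $\alpha\geq 2/\lambda_{\max}$ is already covered by the if-and-only-if chain, so it is redundant but harmless.
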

\begin{proof}
Since $\Phi^\top D\Phi$ is symmetric positive
definite, it has eigenvalues with $0<\lambda_{\min}(\Phi^\top D\Phi)
\leq \lambda_i(\Phi^\top D\Phi)
\leq \lambda_{\max}(\Phi^\top D\Phi)$.
The eigenvalues of $I-\alpha\Phi^\top D\Phi$ are
$1-\alpha\lambda_i(\Phi^\top D\Phi)$. Hence, this matrix has spectral radius smaller than one exactly
when $|1-\alpha\lambda_i(\Phi^\top D\Phi)|<1$ for every $i$, which is equivalent to $0<\alpha<2/\lambda_i(\Phi^\top D\Phi)$ for every $i$,
and therefore to~\Cref{eq:relaxation_stepsize_condition}.
\end{proof}
We use this range as the standing step-size condition throughout the paper.
\begin{assumption}
\label{ass:hard_target_stepsize}
The step-size satisfies~\Cref{eq:relaxation_stepsize_condition}, namely $0<\alpha<\frac{2}{\lambda_{\max}(\Phi^\top D\Phi)}$.
\end{assumption}
Once the step-size condition holds, the stability of the large-period regime is
governed by the JSR of the PQVI family $\mathcal A^{\mathrm{PQVI}}$ in~\Cref{eq:pqvi_family_intro}. We note that PQVI does not universally dominate DLQL in JSR: $\mathcal A^{\mathrm{DLQL}}$ from~\Cref{eq:direct_family} and $\mathcal A^{\mathrm{PQVI}}$ are generally distinct switched
families, so the projection can either help or hurt. The comparison between
large and small target periods is therefore inherently conditional. If
$\mathcal A^{\mathrm{PQVI}}$ yields a strict JSR advantage, then the large-period hard-target
family inherits this advantage for the target-boundary recursion; in particular, whenever
$\rho(\mathcal A^{\mathrm{PQVI}})<1$, we have $\rho(\mathcal A_m^{\mathrm{DLQL}})<1$ for all
sufficiently large $m$, and $m$-DLQL converges by~\Cref{lem:hard_target_boundary_jsr_convergence}.

We therefore compare the two period endpoints: the large-period endpoint
represented by $\mathcal A^{\mathrm{PQVI}}$, and the small-period endpoint represented by
$\mathcal A^{\mathrm{DLQL}}$. The four possible endpoint stability configurations are
\begin{align*}
\begin{array}{ll}
\text{(i)} & \rho(\mathcal A^{\mathrm{PQVI}})<1 \ \text{and}\ \rho(\mathcal A^{\mathrm{DLQL}})<1,\\
\text{(ii)} & \rho(\mathcal A^{\mathrm{PQVI}})<1<\rho(\mathcal A^{\mathrm{DLQL}}),\\
\text{(iii)} & \rho(\mathcal A^{\mathrm{DLQL}})<1<\rho(\mathcal A^{\mathrm{PQVI}}),\\
\text{(iv)} & \rho(\mathcal A^{\mathrm{PQVI}})>1 \ \text{and}\ \rho(\mathcal A^{\mathrm{DLQL}})>1.
\end{array}
\end{align*}
\begin{theorem}
\label{thm:hard_endpoint_regimes}
Suppose~\Cref{ass:hard_target_stepsize} holds. The following endpoint regimes hold.
\begin{enumerate}[label=(\roman*)]
\item If $\rho(\mathcal A^{\mathrm{PQVI}})<1$ and
$\rho(\mathcal A^{\mathrm{DLQL}})<1$, then there exists $m_0\geq1$ such that, for
every integer $1\leq m\leq m_0$,
$\rho(\mathcal A_m^{\mathrm{DLQL}})^{1/m}<1$. Moreover, there exists $m_1\geq1$
such that, for every $m\geq m_1$,
$\rho(\mathcal A_m^{\mathrm{DLQL}})^{1/m}<1$.
\item If $\rho(\mathcal A^{\mathrm{PQVI}})<1<\rho(\mathcal A^{\mathrm{DLQL}})$, then
there exists $m_0\geq1$ such that, for every integer $1\leq m\leq m_0$,
$\rho(\mathcal A_m^{\mathrm{DLQL}})>1$, and there exists $m_1\geq1$ such that,
for every $m\geq m_1$,
$\rho(\mathcal A_m^{\mathrm{DLQL}})^{1/m}<1<\rho(\mathcal A^{\mathrm{DLQL}})$.
\item If $\rho(\mathcal A^{\mathrm{DLQL}})<1<\rho(\mathcal A^{\mathrm{PQVI}})$, then
there exists $m_0\geq1$ such that, for every integer $1\leq m\leq m_0$,
$\rho(\mathcal A_m^{\mathrm{DLQL}})^{1/m}<1$, and there exists $m_1\geq1$ such
that, for every $m\geq m_1$,
$\rho(\mathcal A^{\mathrm{DLQL}})<1<\rho(\mathcal A_m^{\mathrm{DLQL}})$.
\item If $\rho(\mathcal A^{\mathrm{PQVI}})>1$ and
$\rho(\mathcal A^{\mathrm{DLQL}})>1$, then there exists $m_0\geq1$ such that, for
every integer $1\leq m\leq m_0$, $\rho(\mathcal A_m^{\mathrm{DLQL}})>1$, and
there exists $m_1\geq1$ such that, for every $m\geq m_1$,
$\rho(\mathcal A_m^{\mathrm{DLQL}})>1$.
\end{enumerate}
\end{theorem}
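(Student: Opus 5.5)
The plan is to handle each of the four regimes by combining two facts already established in \Cref{lem:m_dlql_jsr_convergence}. The first is the period-one identity $\rho(\mathcal A_1^{\mathrm{DLQL}})=\rho(\mathcal A^{\mathrm{DLQL}})$. The second is the large-period limit $\rho(\mathcal A_m^{\mathrm{DLQL}})\to\rho(\mathcal A^{\mathrm{PQVI}})$, which is available because \Cref{ass:hard_target_stepsize} gives $\rho(I-\alpha\Phi^\top D\Phi)<1$ by \Cref{lem:relaxation_stepsize_condition}. Throughout I will use the elementary observation that, for $m\geq1$ and a nonnegative number $r$, we have $r^{1/m}<1$ if and only if $r<1$, and $r^{1/m}>1$ if and only if $r>1$. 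So every statement about the normalized rate $\rho(\mathcal A_m^{\mathrm{DLQL}})^{1/m}$ reduces to the same strict inequality for $\rho(\mathcal A_m^{\mathrm{DLQL}})$ itself.

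For the small-period halves I will simply take $m_0=1$, since the statements quantify over $1\leq m\leq m_0$ and only require existence of some $m_0\geq1$. The period-one identity then turns the hypothesis on $\rho(\mathcal A^{\mathrm{DLQL}})$ into the claimed inequality at $m=1$.
\begin{itemize}
\item In (i) and (iii), $\rho(\mathcal A^{\mathrm{DLQL}})<1$ gives $\rho(\mathcal A_1^{\mathrm{DLQL}})^{1/1}<1$.
\item In (ii) and (iv), $\rho(\mathcal A^{\mathrm{DLQL}})>1$ gives $\rho(\mathcal A_1^{\mathrm{DLQL}})>1$.
\end{itemize}
One could try to enlarge $m_0$ using continuity in $m$, but $m$ is discrete and there is no monotonicity, so $m_0=1$ is the honest choice.

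For the large-period halves I will use the limit and the strict inequality on $\rho(\mathcal A^{\mathrm{PQVI}})$. When $\rho(\mathcal A^{\mathrm{PQVI}})<1$, as in (i) and (ii), set $\delta:=1-\rho(\mathcal A^{\mathrm{PQVI}})>0$ and choose $m_1$ so that $|\rho(\mathcal A_m^{\mathrm{DLQL}})-\rho(\mathcal A^{\mathrm{PQVI}})|<\delta$ for all $m\geq m_1$. This yields $\rho(\mathcal A_m^{\mathrm{DLQL}})<1$, hence $\rho(\mathcal A_m^{\mathrm{DLQL}})^{1/m}<1$. In (ii) the extra inequality $1<\rho(\mathcal A^{\mathrm{DLQL}})$ is just the hypothesis restated. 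When $\rho(\mathcal A^{\mathrm{PQVI}})>1$, as in (iii) and (iv), the symmetric choice $\delta:=\rho(\mathcal A^{\mathrm{PQVI}})-1$ gives $\rho(\mathcal A_m^{\mathrm{DLQL}})>1$ for all $m\geq m_1$. In (iii) this is appended to the hypothesis $\rho(\mathcal A^{\mathrm{DLQL}})<1$.

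The only nontrivial ingredient is the JSR limit, which rests on the continuity of the JSR for finite families; that is already proved in \Cref{lem:m_dlql_jsr_convergence}. So I expect no real obstacle. The one point needing care is that the step-size assumption is exactly what makes $(I-\alpha\Phi^\top D\Phi)^m\to0$, and hence what licenses the limit. Strictness of the endpoint inequalities is essential here: boundary cases with JSR equal to one are excluded by the four configurations.
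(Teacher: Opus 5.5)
Your proposal is correct and follows essentially the same route as the paper's proof. You take $m_0=1$ via the period-one identity $\rho(\mathcal A_1^{\mathrm{DLQL}})=\rho(\mathcal A^{\mathrm{DLQL}})$, and you obtain $m_1$ from the large-period limit $\rho(\mathcal A_m^{\mathrm{DLQL}})\to\rho(\mathcal A^{\mathrm{PQVI}})$ in \Cref{lem:m_dlql_jsr_convergence}, which \Cref{ass:hard_target_stepsize} licenses, combined with the strict endpoint inequalities.
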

\begin{proof}
The period-one identity in~\Cref{lem:m_dlql_jsr_convergence} gives
$\rho(\mathcal A_1^{\mathrm{DLQL}})=\rho(\mathcal A^{\mathrm{DLQL}})$. Therefore, whenever
the DLQL endpoint is stable, the small-period stable claim holds by taking
$m_0=1$; whenever the DLQL endpoint is unstable, the small-period unstable claim
holds by taking $m_0=1$.

The large-period limit in~\Cref{lem:m_dlql_jsr_convergence} gives
$\rho(\mathcal A_m^{\mathrm{DLQL}})\to\rho(\mathcal A^{\mathrm{PQVI}})$. If the PQVI
endpoint has JSR below one, then $\rho(\mathcal A_m^{\mathrm{DLQL}})<1$ for all
sufficiently large $m$, and hence
$\rho(\mathcal A_m^{\mathrm{DLQL}})^{1/m}<1$ for all sufficiently large $m$. If the
PQVI endpoint has JSR above one, then $\rho(\mathcal A_m^{\mathrm{DLQL}})>1$ for
all sufficiently large $m$. Combining these two endpoint consequences with the
four possible strict endpoint configurations proves (i)--(iv).
\end{proof}
This combined endpoint theorem does not imply that every intermediate period is
certified when the endpoints are certified, and it does not rule out
intermediate-period stabilization when the endpoints fail. An intermediate $m$
can fail the JSR test even when both endpoints pass it, and conversely there can
be an $m$ between the small-period and large-period regimes for which the
hard-target family $\mathcal A_m^{\mathrm{DLQL}}$ is JSR-stable. Throughout, an
endpoint JSR above one means failure of the worst-case certificate, not divergence
of every realized trajectory.
These cases clarify the period choice. Small $m$ is favorable when the DLQL
family is already stable while the PQVI is unstable, as in
$\rho(\mathcal A^{\mathrm{DLQL}})<1<\rho(\mathcal A^{\mathrm{PQVI}})$. Large $m$ is favorable when the
PQVI is stable or has the strict JSR advantage, especially in
the case $\rho(\mathcal A^{\mathrm{PQVI}})<1<\rho(\mathcal A^{\mathrm{DLQL}})$. When both endpoints fail
the JSR stability test, only intermediate finite periods can possibly help, and
the endpoint arguments alone give no guarantee.
Finite small-state constructions can realize this mechanism, but they are omitted here.

The plain tabular comparison is simpler in the usual synchronous
identity-feature tabular setting. There, PQVI, one-period DLQL, and $m$-DLQL
reduce to scaled row-stochastic switching families. The update rules, switching systems, and JSR comparison are
collected in~\Cref{app:tabular_pqvi_vs_dlql}: PQVI has JSR $\gamma$, one-period
DLQL has JSR $1-\alpha(1-\gamma)$, and the period-$m$ hard-target family has JSR
$\gamma+(1-\gamma)(1-\alpha)^m$ for $0<\alpha\leq1$. Therefore, the two tabular
endpoint maps coincide at $\alpha=1$, while for $0<\alpha<1$ the $m$-periodic
rate decreases monotonically from the DLQL endpoint to the PQVI endpoint as
$m\to\infty$.

\section{SDLQL1: Soft Target Averaging After the Online Update}
\label{sec:soft_target_after_online_update}
After the hard-copy target mechanism, we now turn to \emph{soft DLQL with
target averaging after the online update}. Throughout this section, we use the
step-size condition in~\Cref{ass:hard_target_stepsize}. Soft target averaging is
widely used in actor-critic algorithms such as DDPG~\citep{LillicrapEtAl2015DDPG}. In this ordering,
the online parameter is updated first and the target then tracks the updated
online parameter. The next section treats the matching
before-online-update ordering. The after-online-update SDLQL1 version is
\begin{align*}
\theta_{k+1}
&=
\theta_k+
\alpha\Phi^\top D\left(R+\gamma P V_{\bar\theta_k}-\Phi\theta_k\right),\\
\bar\theta_{k+1}
&=
(1-\beta)\bar\theta_k+\beta\theta_{k+1}.
\end{align*}

Equivalently, the augmented update is represented by the mapping
\begin{align}
(\theta_k,\bar\theta_k)
\mapsto
\begin{bmatrix}
\theta_k+
\alpha\Phi^\top D(R+\gamma P V_{\bar\theta_k}-\Phi\theta_k)\\
(1-\beta)\bar\theta_k+
\beta\left[\theta_k+
\alpha\Phi^\top D(R+\gamma P V_{\bar\theta_k}-\Phi\theta_k)\right]
\end{bmatrix}.
\label{eq:soft_after_updated_online_aug_map}
\end{align}
Because $\theta^\star$ is the projected Bellman fixed point,
$(\theta^\star,\theta^\star)$ is a fixed point of~\Cref{eq:soft_after_updated_online_aug_map}.
By~\Cref{eq:stoch_policy_linearization}, for every $k$ there exists a stochastic
policy $\mu_k=\mu_{\bar\theta_k,\theta^\star}$ such that $V_{\bar\theta_k}-V_{\theta^\star} = \Pi^{\mu_k}\Phi(\bar\theta_k-\theta^\star)$. The online row is
\begin{equation*}
\theta_{k+1}-\theta^\star
=(I-\alpha\Phi^\top D\Phi)(\theta_k-\theta^\star)+
\alpha\gamma\Phi^\top DP\Pi^{\mu_k}\Phi\,(\bar\theta_k-\theta^\star).
\end{equation*}
The target row follows the updated online error:
\begin{align*}
\bar\theta_{k+1}-\theta^\star
&=(1-\beta)(\bar\theta_k-\theta^\star)+\beta(\theta_{k+1}-\theta^\star)\\
&=\beta(I-\alpha\Phi^\top D\Phi)(\theta_k-\theta^\star)+
\left((1-\beta)I+\beta\alpha\gamma\Phi^\top DP\Pi^{\mu_k}\Phi\right)
(\bar\theta_k-\theta^\star).
\end{align*}
Define the after-online-update stochastic mode by
\begin{equation*}
A_{\mu_k}^{\mathrm{SDLQL1}}
:=
\begin{bmatrix}
I-\alpha\Phi^\top D\Phi & \alpha\gamma\Phi^\top DP\Pi^{\mu_k}\Phi\\
\beta(I-\alpha\Phi^\top D\Phi) &
(1-\beta)I+\beta\alpha\gamma\Phi^\top DP\Pi^{\mu_k}\Phi
\end{bmatrix}.
\end{equation*}
With $\mu_k$ replaced by a generic stochastic policy $\mu$, the same formula
defines $A_\mu^{\mathrm{SDLQL1}}$. Hence this ordering has the exact SLS representation
\begin{align*}
\begin{bmatrix}
\theta_{k+1}-\theta^\star\\
\bar\theta_{k+1}-\theta^\star
\end{bmatrix}
=
A_{\mu_k}^{\mathrm{SDLQL1}}
\begin{bmatrix}
\theta_k-\theta^\star\\
\bar\theta_k-\theta^\star
\end{bmatrix}.
\end{align*}
For deterministic policies, let us define the corresponding finite switching family
\begin{equation*}
\mathcal A^{\mathrm{SDLQL1}}
:=
\set{
A_{\pi}^{\mathrm{SDLQL1}} :
\pi\in\Theta
}.
\end{equation*}
This after-online-update ordering is governed by the augmented switching family
$\mathcal A^{\mathrm{SDLQL1}}$. The standard certificate applies directly to the
augmented online-target error. Since
$A_{\mu_k}^{\mathrm{SDLQL1}}\in\co(\mathcal A^{\mathrm{SDLQL1}})$ by the state-wise convex-combination property of stochastic policies, the condition
$\rho(\mathcal A^{\mathrm{SDLQL1}})<1$ and~\Cref{lem:standard_jsr_convergence} imply
convergence to $(\theta^\star,\theta^\star)$ from every initialization. The same
lemma gives the finite-time $p_\epsilon$ and Euclidean bounds on the augmented
error with
\begin{align*}
x_k=
\begin{bmatrix}
\theta_k-\theta^\star\\
\bar\theta_k-\theta^\star
\end{bmatrix},
\qquad
\mathcal H=\mathcal A^{\mathrm{SDLQL1}}.
\end{align*}
Each block error is bounded by this augmented norm.

The endpoint behavior is useful before the small- and large-gain arguments are
stated. The choice $\beta=0$ freezes the target and prevents exponential
convergence of the full augmented error, while the choice $\beta=1$ copies the
updated online parameter into the target and connects this ordering to DLQL. 
\begin{lemma}
\label{lem:soft_after_endpoint_identities}
\label{prop:soft_after_beta_zero_endpoint}
\label{lem:soft_after_beta_one_endpoint}
For the after-online-update ordering, the following endpoint identities hold.
\begin{enumerate}[label=(\roman*)]
\item At $\beta=0$, we have
\begin{align*}
A_{\pi}^{\mathrm{SDLQL1}}=
\begin{bmatrix}
I-\alpha\Phi^\top D\Phi & \alpha\Phi^\top D\Phi A_\pi^{\mathrm{PQVI}}\\
0&I
\end{bmatrix},
\qquad \pi\in\Theta,
\end{align*}
and
\begin{align*}
\rho(\mathcal A^{\mathrm{SDLQL1}})
=
\max\{\rho(I-\alpha\Phi^\top D\Phi),1\}\ge 1.
\end{align*}
Consequently, an exactly frozen target cannot yield exponential convergence of
the full augmented online-target system.
\item At $\beta=1$,
\begin{align}
\rho(\mathcal A^{\mathrm{SDLQL1}})=\rho(\mathcal A^{\mathrm{DLQL}}).
\label{eq:beta_one_after_update_jsr}
\end{align}
\end{enumerate}
\end{lemma}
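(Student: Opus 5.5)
For part (i), I will substitute $\beta=0$ into the definition of $A_\pi^{\mathrm{SDLQL1}}$. The lower-left block $\beta(I-\alpha\Phi^\top D\Phi)$ vanishes, and the lower-right block becomes $I$. The upper-right block $\alpha\gamma\Phi^\top DP\Pi^\pi\Phi$ is rewritten as $\alpha\Phi^\top D\Phi A_\pi^{\mathrm{PQVI}}$ using~\Cref{eq:pqvi_mode_definition}, exactly as in the proof of~\Cref{lem:m_dlql_jsr_convergence}. This gives the stated block upper-triangular form.

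For the JSR, I will use the standard fact that a family of block upper-triangular matrices with a common block partition has JSR equal to the maximum of the JSRs of the diagonal-block families. Here both diagonal families are singletons, $\{I-\alpha\Phi^\top D\Phi\}$ and $\{I\}$, so their JSRs are the ordinary spectral radii $\rho(I-\alpha\Phi^\top D\Phi)$ and $1$.

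To keep the argument self-contained, I will prove the two inequalities directly.
\begin{itemize}
\item \emph{Lower bound.} Any product of length $k$ has diagonal blocks $(I-\alpha\Phi^\top D\Phi)^k$ and $I$. The norm of a matrix dominates the norm of each diagonal block, so the JSR is at least the maximum of the two spectral radii. This already gives $\rho(\mathcal A^{\mathrm{SDLQL1}})\ge1$.
\item \emph{Upper bound.} The off-diagonal block of a length-$k$ product is $\sum_{j=0}^{k-1}(I-\alpha\Phi^\top D\Phi)^{j}B_{\pi_{k-j}}$, where each $B_\pi$ is bounded uniformly over the finite set $\Theta$. Let $r=\max\{\rho(I-\alpha\Phi^\top D\Phi),1\}$. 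For every $\epsilon>0$, the block is bounded by $C k (r+\epsilon)^k$. Taking $k$-th roots and letting $\epsilon\to0$ gives the upper bound.
\end{itemize}

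The consequence about frozen targets then follows. Since the lower-right block is exactly $I$, the target error $\bar\theta_k-\theta^\star$ is constant, so it cannot converge to zero unless it starts at zero. Hence exponential convergence of the full augmented system is impossible.

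For part (ii), setting $\beta=1$ makes the two block rows identical:
\begin{align*}
A_\pi^{\mathrm{SDLQL1}}=\begin{bmatrix} M & N_\pi\\ M & N_\pi\end{bmatrix}
=\begin{bmatrix} I\\ I\end{bmatrix}\begin{bmatrix} M & N_\pi\end{bmatrix},
\end{align*}
where $M=I-\alpha\Phi^\top D\Phi$ and $N_\pi=\alpha\gamma\Phi^\top DP\Pi^\pi\Phi$. Write $E=[I;I]$ and $F_\pi=[M\ N_\pi]$. Then $F_\pi E=M+N_\pi=A_\pi^{\mathrm{DLQL}}$ by~\Cref{eq:A_mu_definition}.

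A product of length $k$ therefore telescopes:
\begin{align*}
A_{\pi_k}^{\mathrm{SDLQL1}}\cdots A_{\pi_1}^{\mathrm{SDLQL1}}
=E\,A_{\pi_k}^{\mathrm{DLQL}}\cdots A_{\pi_2}^{\mathrm{DLQL}}\,F_{\pi_1}.
\end{align*}
Since $E$ and the finitely many $F_\pi$ are uniformly bounded, the norm of each augmented product is at most a constant times the norm of a DLQL product of length $k-1$. Taking $k$-th roots gives $\rho(\mathcal A^{\mathrm{SDLQL1}})\le\rho(\mathcal A^{\mathrm{DLQL}})$.

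For the reverse inequality, I will use the identity
\begin{align*}
A_{\pi_k}^{\mathrm{DLQL}}\cdots A_{\pi_1}^{\mathrm{DLQL}}=F_{\pi_k}\,A_{\pi_{k-1}}^{\mathrm{SDLQL1}}\cdots A_{\pi_1}^{\mathrm{SDLQL1}}\,E,
\end{align*}
which follows by inserting $F_{\pi_j}E=A_{\pi_j}^{\mathrm{DLQL}}$ between consecutive factors and regrouping $EF_{\pi_j}=A_{\pi_j}^{\mathrm{SDLQL1}}$. This bounds DLQL products by constant times augmented products of length $k-1$, giving $\rho(\mathcal A^{\mathrm{DLQL}})\le\rho(\mathcal A^{\mathrm{SDLQL1}})$.

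The main obstacle is the block-triangular JSR bound in (i), because the polynomial factor $k$ from the off-diagonal sum must be handled via $\epsilon$-slack. Part (ii) is purely algebraic once the rank factorization is noticed.
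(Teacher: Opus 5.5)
Your proof is correct. Part (i) follows the paper's route: you substitute $\beta=0$, rewrite the coupling block as $\alpha\Phi^\top D\Phi A_\pi^{\mathrm{PQVI}}$, and read the JSR off the common diagonal blocks $I-\alpha\Phi^\top D\Phi$ and $I$. The paper simply asserts that last step, whereas you prove it: the lower bound comes from compressing to the diagonal blocks in the $2$-norm, and the upper bound from the $k(r+\epsilon)^k$ estimate on the coupling sum. Your observation that the target error is literally constant at $\beta=0$ is also a more concrete justification of the ``frozen target'' consequence.

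Part (ii) takes a genuinely different route. The paper conjugates each $\beta=1$ mode by $T=\begin{bmatrix} I&0\\ -I&I\end{bmatrix}$, obtaining $\begin{bmatrix} A_\pi^{\mathrm{DLQL}} & \alpha\Phi^\top D\Phi A_\pi^{\mathrm{PQVI}}\\ 0&0\end{bmatrix}$. It then invokes the cited block-upper-triangular JSR property. You instead use the rank factorization $A_\pi^{\mathrm{SDLQL1}}=EF_\pi$ with $F_\pi E=A_\pi^{\mathrm{DLQL}}$, together with the two intertwining product identities, to obtain both inequalities by hand.

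The two routes are closely related. Since $TE=\begin{bmatrix} I\\ 0\end{bmatrix}$ and $F_\pi T^{-1}=\begin{bmatrix} A_\pi^{\mathrm{DLQL}} & N_\pi\end{bmatrix}$, the paper's conjugated mode is exactly $(TE)(F_\pi T^{-1})$. The trade-offs are:
\begin{itemize}
\item Your version is self-contained and needs no external JSR property.
\item It makes explicit the correspondence between augmented products of length $k$ and DLQL products of length $k-1$, which gives transient constants essentially for free.
\item It is an instance of the general principle $\rho(\{EF_i\})=\rho(\{F_iE\})$, the JSR analogue of $AB$ and $BA$ sharing nonzero spectrum.
\item The paper's version is shorter and reuses the same block-triangular fact that part (i) already relies on.
\end{itemize}
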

\begin{proof}
First set $\beta=0$. Substituting this value into the after-online-update
deterministic mode definition of $\mathcal A^{\mathrm{SDLQL1}}$ gives the displayed
block upper-triangular matrix. The diagonal blocks are the common matrix
$I-\alpha\Phi^\top D\Phi$ and the identity target block. Products of such modes
therefore have diagonal blocks $(I-\alpha\Phi^\top D\Phi)^k$ and $I$, so the JSR is
$\max\{\rho(I-\alpha\Phi^\top D\Phi),1\}$. This proves the frozen-target identity
and shows that the full augmented error cannot be uniformly exponentially stable
when the target is exactly frozen.
Now set $\beta=1$. Each deterministic augmented mode is
\begin{align*}
A_\pi^{\mathrm{SDLQL1}}
=\begin{bmatrix}
I-\alpha\Phi^\top D\Phi & \alpha\Phi^\top D\Phi A_\pi^{\mathrm{PQVI}}\\
I-\alpha\Phi^\top D\Phi & \alpha\Phi^\top D\Phi A_\pi^{\mathrm{PQVI}}
\end{bmatrix}.
\end{align*}
Use the invertible block transformation
\begin{align*}
T:=\begin{bmatrix} I&0\\ -I&I\end{bmatrix},
\qquad
T^{-1}=\begin{bmatrix} I&0\\ I&I\end{bmatrix}.
\end{align*}
A direct multiplication gives
\begin{align*}
T A_\pi^{\mathrm{SDLQL1}}T^{-1}
&=
\begin{bmatrix}
I-\alpha\Phi^\top D\Phi+\alpha\Phi^\top D\Phi A_\pi^{\mathrm{PQVI}} &
\alpha\Phi^\top D\Phi A_\pi^{\mathrm{PQVI}}\\
0&0
\end{bmatrix}\\
&=
\begin{bmatrix}
A_\pi^{\mathrm{DLQL}} & \alpha\Phi^\top D\Phi A_\pi^{\mathrm{PQVI}}\\
0&0
\end{bmatrix},
\end{align*}
where the second equality uses~\Cref{eq:A_mu_definition,eq:pqvi_mode_definition}.
The JSR is invariant under a common similarity transformation, and the JSR of a
block upper-triangular family is the maximum of the JSRs of the diagonal block
families~\cite[Property~6]{cicone2015note}. Hence
\begin{align*}
\rho(\mathcal A^{\mathrm{SDLQL1}})
&=
\rho\left(\set{T A_\pi^{\mathrm{SDLQL1}}T^{-1}:\pi\in\Theta}\right)\\
&=\max\left\{\rho(\mathcal A^{\mathrm{DLQL}}),\rho(\set{0})\right\}
=\rho(\mathcal A^{\mathrm{DLQL}}),
\end{align*}
which proves~\Cref{eq:beta_one_after_update_jsr}.
\end{proof}

For $0<\beta\ll1$, under the standing step-size
condition in~\Cref{ass:hard_target_stepsize}, the online parameter rapidly
approaches the frozen-target quasi-steady response, and hence, small positive $\beta$ is favorable precisely in the slow-target regime where the PQVI maps are JSR-stable. In contrast, $\beta$ close to one is favorable when
the DLQL endpoint is stable: in the after-online-update ordering, the endpoint
$\beta=1$ has the same JSR as the DLQL family by~\Cref{lem:soft_after_endpoint_identities}. Therefore if
DLQL is the unstable endpoint, taking $\beta$ close to one should not be expected
to help; if DLQL is stable, a large $\beta$ can inherit that favorable endpoint.
The next theorem makes the small-positive-$\beta$ statement precise for this
ordering by turning the PQVI endpoint stability into augmented-family stability.
\begin{theorem}
\label{thm:soft_after_small_beta_pqvi_stability}
Suppose~\Cref{ass:hard_target_stepsize} holds and the PQVI family is
JSR-stable, $\rho(\mathcal A^{\mathrm{PQVI}})<1$. Then there exists $\beta_0\in(0,1]$ such
that, for every $0<\beta\leq\beta_0$,
\begin{align*}
\rho(\mathcal A^{\mathrm{SDLQL1}})<1.
\end{align*}
Consequently, for every sufficiently small positive target gain, the
after-online-update SDLQL1 augmented error recursion is uniformly
exponentially stable under arbitrary switching. At $\beta=0$, however,
$\rho(\mathcal A^{\mathrm{SDLQL1}})=1$, so the conclusion is genuinely a
small-positive-$\beta$ statement.
\end{theorem}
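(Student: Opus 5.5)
The plan is a two-time-scale (singular-perturbation) small-gain argument on the augmented error. The target error $\bar e:=\bar\theta-\theta^\star$ is treated as the slow variable, and the online error $e:=\theta-\theta^\star$ as the fast variable.

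\textbf{Step 1 (rewrite the modes).} Write $M:=I-\alpha\Phi^\top D\Phi$. By \Cref{eq:m_dlql_converges_to_pqvi} with $m=1$, $\alpha\gamma\Phi^\top DP\Pi^\pi\Phi=(I-M)A_\pi^{\mathrm{PQVI}}$. So every deterministic mode acts as
\begin{align*}
e^+ &= Me+(I-M)A_\pi^{\mathrm{PQVI}}\bar e
= A_\pi^{\mathrm{PQVI}}\bar e+M\bigl(e-A_\pi^{\mathrm{PQVI}}\bar e\bigr),\\
\bar e^+ &= \bigl((1-\beta)I+\beta A_\pi^{\mathrm{PQVI}}\bigr)\bar e+\beta M\bigl(e-A_\pi^{\mathrm{PQVI}}\bar e\bigr).
\end{align*}
The same formulas hold for any $A\in\co(\mathcal A^{\mathrm{PQVI}})$.

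\textbf{Step 2 (norms for the two subsystems).}
\begin{itemize}
\item Slow part: take the JSR Lyapunov norm $p:=p_\epsilon$ of $\mathcal A^{\mathrm{PQVI}}$ from \Cref{lem:common_lyapunov_construction}, with $\beta_\epsilon<1$. Every $A\in\co(\mathcal A^{\mathrm{PQVI}})$ satisfies $p(A x)\le\beta_\epsilon p(x)$. Hence $p\bigl(((1-\beta)I+\beta A)\bar e\bigr)\le(1-\beta(1-\beta_\epsilon))\,p(\bar e)$ for $\beta\in(0,1]$.
\item Fast part: $M$ is symmetric, and under \Cref{ass:hard_target_stepsize} and \Cref{lem:relaxation_stepsize_condition}, $\|M\|_2=\rho(M)=:r<1$. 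So the Euclidean norm is already a contraction norm for the fast dynamics.
\end{itemize}

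\textbf{Step 3 (composite Lyapunov candidate and small-gain inequality).} The candidate is $W(e,\bar e):=p(\bar e)+\kappa\|z\|_2$, where $z$ measures the distance of $e$ from its quasi-steady value $A\bar e$. The fast variable must be defined mode-independently. A naive choice $z=e-A_k\bar e$ produces an $O(1)\|\bar e\|$ jump whenever the mode switches, which would destroy the estimate.

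My first attempt avoids this by working with products of length $N$ rather than single steps. Fix $N$ with $r^N$ small. Over a window of $N$ steps the target moves only by $O(N\beta)(\|e\|+\|\bar e\|)$. The online error at the end of the window is
\begin{align*}
M^N e+\sum_{j=0}^{N-1}M^{N-1-j}(I-M)A_j\bar e+O(N\beta).
\end{align*}
From this I aim to show that the window product has the form
\begin{align*}
\begin{bmatrix}\ast & \ast\\ \beta N\,(\cdots) & I-\beta N(I-\bar A)+O(\beta^2N^2)\end{bmatrix},
\end{align*}
where $\bar A$ is the ``average'' PQVI mode over the window, and then obtain
\begin{align*}
p(\bar e^+) &\le (1-c_1\beta)p(\bar e)+c_2\beta\|z\|_2,\\
\|z^+\|_2 &\le (r^N+c_3\beta)\|z\|_2+c_4\beta\, p(\bar e).
\end{align*}
With $\kappa$ chosen and $\beta$ small, this gives $W^+\le(1-c\beta)W$ uniformly over all switchings. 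That yields a uniform exponential bound on all products, hence $\rho(\mathcal A^{\mathrm{SDLQL1}})<1$. The final remark of the theorem, that $\rho=1$ at $\beta=0$, is exactly \Cref{lem:soft_after_endpoint_identities}(i), since $\rho(M)<1$.

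\textbf{Main obstacle.} The averaged target gain over a window is the matrix-weighted combination $\sum_j M^{N-1-j}(I-M)A_j$ with weights summing to $I-M^N$. Because the weights are matrices rather than scalars, this combination need not lie in $\co(\mathcal A^{\mathrm{PQVI}})$, so the $p$-contraction is not immediate. This is the step I expect to be hardest, and I am not sure the approach closes as stated.
\begin{itemize}
\item First attempt: exploit that the target-row perturbation enters only at order $\beta$. Within a window, at first order in $\beta$ the slow update uses $e_j$ from the quasi-frozen target. I would then try to bound $p\bigl(\sum_j(\text{weights})A_j\bar e\bigr)$ by comparing with a scalar-weighted average, paying an $O(r^{N})$ transient cost.
\item Fallback: carry out the composite-norm argument directly for the realized nonlinear iteration. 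There the quasi-steady map $G(\bar e):=$ PQVI-error map is a single-valued $p$-contraction, so $z:=e-G(\bar e)$ is well defined and obeys $\|z^+\|_2\le r\|z\|_2+L\|\bar e^+-\bar e\|$. This gives uniform exponential stability of the SDLQL1 recursion. I would then check whether this suffices, or can be strengthened, for the arbitrary-switching JSR statement, if necessary using continuity of the JSR in $\beta$.
\end{itemize}
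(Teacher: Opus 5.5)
Your Steps 1--2 are correct, but there is a genuine gap exactly where you flag it: Step 3 never produces an inequality that holds under arbitrary switching, and the JSR statement needs precisely that.

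\textbf{Step 3 and the fallback.} The window estimates you display, in particular $\|z^+\|_2\le(r^N+c_3\beta)\|z\|_2+c_4\beta\,p(\bar e)$, are only asserted. They fail for any fixed reference $z=e-A\bar e$. Over a window the online error relaxes toward the sequence-dependent value $\sum_{i}M^{N-1-i}(I-M)A_i\bar e$. This differs from $(I-M^N)A\bar e$ by $\sum_i M^{N-1-i}(I-M)(A_i-A)\bar e$, which is $O(1)\,\|\bar e\|$, not $O(\beta)$.

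The fallback does not close the gap either. With $G$ single-valued you do get exponential convergence of the realized nonlinear SDLQL1 iteration. But there the mode is slaved to $\bar\theta_k$, which is strictly weaker than $\rho(\mathcal A^{\mathrm{SDLQL1}})<1$. Continuity of the JSR in $\beta$ cannot upgrade it: at $\beta=0$ the JSR equals $1$, so continuity only gives $\rho\to1$ as $\beta\downarrow0$.

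\textbf{The missing idea.} Leave the fast variable $e$ alone and shift the \emph{slow} variable by a mode-independent, near-identity correction. The facts needed are that $M$ and $I-M=\alpha\Phi^\top D\Phi$ commute, that $I-M$ is invertible, and that $I+M(I-M)^{-1}=(I-M)^{-1}$. Choose $z:=\bar e+\beta M(I-M)^{-1}e$. Then, writing $A:=A_\pi^{\mathrm{PQVI}}$, the dynamics are exactly
\begin{align*}
e^+&=\left(M-\beta(I-M)AM(I-M)^{-1}\right)e+(I-M)Az,\\
z^+&=\left[(1-\beta)I+\beta A\right]z-\beta^2(A-I)M(I-M)^{-1}e.
\end{align*}
Three consequences follow:
\begin{itemize}
\item The slow row is now exactly the relaxed PQVI mode, so your Step 2 bound applies directly with no matrix-weighted averaging.
\item The fast-to-slow coupling is $O(\beta^2)$.
\item The slow-to-fast coupling $(I-M)Az$ is $O(1)$, but the $\beta^{-1}$-weighted norm $W_\beta(e,z):=\|e\|_2+\frac{2b}{c\beta}p(z)$ absorbs it. Here $c=1-\beta_\epsilon$ and $b$ satisfies $\|(I-M)Au\|_2\le b\,p(u)$ for all modes.
\end{itemize}
This gives $W_\beta^+\le\max\{r+O(\beta),\,1-c\beta/2\}\,W_\beta$ for every deterministic mode. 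Pulled back through the invertible coordinate change, $W_\beta$ is a common contracting norm, so $\rho(\mathcal A^{\mathrm{SDLQL1}})<1$. This is the paper's argument.

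\textbf{Your window route.} It can also be salvaged, but its real difficulty is not the one you name. The target accumulates $\beta\sum_j e_{j+1}$, and the matrix weights telescope:
\[
\sum_{j=0}^{N-1}\sum_{i\le j}M^{j-i}(I-M)A_i=\sum_{i=0}^{N-1}(I-M^{N-i})A_i .
\]
The right side is $N$ times a scalar average of PQVI modes plus an $N$-independent remainder. So taking $e$ itself as the fast variable, with a $\beta^{-1}$-weighted $p$-term, choosing $N$ large, and then making $\beta N$ small closes the estimate.
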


\begin{proof}
See~\Cref{app:proof:thm:soft_after_small_beta_pqvi_stability}.
\end{proof}

The previous theorem addresses the slow-target regime near $\beta=0$. The
complementary high-gain regime is governed by the endpoint $\beta=1$. In the
after-online-update ordering, unlike the before-online-update ordering,
\Cref{eq:beta_one_after_update_jsr} identifies this endpoint exactly with DLQL.
Therefore the DLQL-stable case persists for all $\beta$ sufficiently close to
one. The theorem below states this endpoint-continuity argument for the full
after-update augmented family.
\begin{theorem}
\label{thm:soft_after_large_beta_dlql_stability}
Suppose the DLQL family is JSR-stable, $\rho(\mathcal A^{\mathrm{DLQL}})<1$.
Then there exists $\beta_1\in[0,1)$ such that, for every
$\beta_1\leq\beta\leq1$,
\begin{align*}
\rho(\mathcal A^{\mathrm{SDLQL1}})<1.
\end{align*}
Consequently, for all target gains sufficiently close to one, the
after-online-update SDLQL1 augmented error recursion is uniformly exponentially
stable under arbitrary switching.
\end{theorem}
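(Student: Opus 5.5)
The argument is an endpoint-continuity argument built on the identity \Cref{eq:beta_one_after_update_jsr} from \Cref{lem:soft_after_endpoint_identities}(ii). The target gain $\beta$ enters each deterministic mode $A_\pi^{\mathrm{SDLQL1}}$ affinely. Write $A_\pi^{\mathrm{SDLQL1}}(\beta)$ to make this dependence explicit. Then the map $\beta\mapsto A_\pi^{\mathrm{SDLQL1}}(\beta)$ is continuous, and in fact Lipschitz, for each of the finitely many $\pi\in\Theta$. The plan is to use continuity of the JSR of a finite family with respect to the entries of its members, exactly as in the proof of \Cref{lem:m_dlql_jsr_convergence}~\citep{heilstrang1995continuity,jungers2009joint}. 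This shows that $\beta\mapsto\rho(\mathcal A^{\mathrm{SDLQL1}}(\beta))$ is continuous on $[0,1]$.

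At $\beta=1$, \Cref{lem:soft_after_endpoint_identities}(ii) gives $\rho(\mathcal A^{\mathrm{SDLQL1}}(1))=\rho(\mathcal A^{\mathrm{DLQL}})<1$. Set $\delta:=1-\rho(\mathcal A^{\mathrm{DLQL}})>0$. Continuity then yields some $\beta_1\in[0,1)$ with $\rho(\mathcal A^{\mathrm{SDLQL1}}(\beta))<1-\delta/2<1$ for every $\beta\in[\beta_1,1]$. We should note that the identity at $\beta=1$ uses only the algebraic similarity transform $T$ and the block-triangular JSR property. It does not use \Cref{ass:hard_target_stepsize}, which is consistent with the theorem not assuming it.

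The stability conclusion then follows from \Cref{lem:standard_jsr_convergence} applied to $\mathcal H=\mathcal A^{\mathrm{SDLQL1}}(\beta)$. Each realized stochastic mode $A_{\mu_k}^{\mathrm{SDLQL1}}$ lies in $\co(\mathcal A^{\mathrm{SDLQL1}})$, because the mode is affine in $\Pi^{\mu}$ and $\beta$ is fixed. This gives uniform exponential decay of the augmented error under arbitrary switching.

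The main obstacle is the justification of JSR continuity. Continuity of the JSR under perturbation of a finite (compact) family is standard, but the point worth checking is the quantifier structure: we need uniformity in $\beta$ over a neighborhood of $1$, not merely pointwise behaviour. This uniformity comes from upper semicontinuity alone. Because $\rho(\mathcal A^{\mathrm{DLQL}})<1$, there exist an extremal-type norm $\|\cdot\|_*$ and an integer $k$ such that every $k$-fold product from the $\beta=1$ family has $\|\cdot\|_*$-norm at most $(1-\delta/2)^k$. Finitely many products of fixed length $k$ depend continuously on $\beta$, so the same bound with $1-\delta/4$ in place of $1-\delta/2$ holds for $\beta$ near $1$. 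This gives $\rho(\mathcal A^{\mathrm{SDLQL1}}(\beta))\le 1-\delta/4$ there, and upper semicontinuity is all the argument requires.
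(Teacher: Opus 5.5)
Your proposal is correct and follows the paper's proof. Both arguments combine the $\beta=1$ identity from \Cref{lem:soft_after_endpoint_identities}(ii) with continuity of the JSR of a finite family in its (here affine-in-$\beta$) entries, and then apply \Cref{lem:standard_jsr_convergence} for the uniform exponential stability. Your extra fixed-length-product argument, showing that upper semicontinuity suffices, is a valid self-contained replacement for the cited continuity result. It does not need an extremal norm: any operator norm works, since $\rho\leq\max_{|\sigma|=k}\|A_\sigma\|^{1/k}$ holds for every $k$.
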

\begin{proof}
At the endpoint $\beta=1$, \Cref{eq:beta_one_after_update_jsr} gives
\begin{align*}
\left.\rho(\mathcal A^{\mathrm{SDLQL1}})\right|_{\beta=1}
=\rho(\mathcal A^{\mathrm{DLQL}})<1.
\end{align*}
For the finite policy set $\Theta$, each entry of $A_\pi^{\mathrm{SDLQL1}}$ is an
affine function of $\beta$, and the JSR of a finite family is continuous with
respect to the matrix entries~\citep{heilstrang1995continuity,jungers2009joint}. Therefore there exists $\delta>0$ such that, for
$1-\delta\leq\beta\leq1$,
\begin{align*}
\left|\left.\rho(\mathcal A^{\mathrm{SDLQL1}})\right|_{\beta}
-\left.\rho(\mathcal A^{\mathrm{SDLQL1}})\right|_{\beta=1}\right|
<\frac{1-\rho(\mathcal A^{\mathrm{DLQL}})}{2}.
\end{align*}
Hence, for $1-\delta\leq\beta\leq1$,
\begin{align*}
\left.\rho(\mathcal A^{\mathrm{SDLQL1}})\right|_{\beta}
&<\rho(\mathcal A^{\mathrm{DLQL}})
+\frac{1-\rho(\mathcal A^{\mathrm{DLQL}})}{2}\\
&=\frac{1+\rho(\mathcal A^{\mathrm{DLQL}})}{2}<1.
\end{align*}
Taking $\beta_1:=\max\{0,1-\delta\}$ gives the claimed large-gain interval. The
uniform exponential stability statement follows from
\Cref{lem:standard_jsr_convergence} applied to the augmented error recursion.
\end{proof}

\section{SDLQL2: Soft Target Averaging Before the Online Update}

Having analyzed the after-online-update soft target ordering, we next consider
\emph{soft DLQL with target averaging before the online update}. Existing soft
update analyses and applications typically specify a target-tracking rule but do
not separate the spectral effect of this ordering from the after-online-update
ordering~\citep{zhang2021breaking,fellows2023why}. Throughout this
section, we use the step-size condition in~\Cref{ass:hard_target_stepsize}. In
this variant the target parameter tracks the current online parameter by the
soft averaging update
\begin{equation*}
\bar\theta_{k+1}=(1-\beta)\bar\theta_k+\beta\theta_k,
\qquad 0\leq \beta\leq 1,
\end{equation*}
where $\beta$ is an independent \emph{target update gain}. Therefore, the online Q-learning
step-size is $\alpha$, while the target tracking gain is $\beta$.
This ordering differs from the after-online-update version in the preceding
section: here the target is averaged before it is used in the next online
analysis step. The SDLQL2 version separates the online parameter $\theta_k$
from the target parameter $\bar\theta_k$:
\begin{align*}
\theta_{k+1}
&=
\theta_k+
\alpha\Phi^\top D\left(R+\gamma P V_{\bar\theta_k}-\Phi\theta_k\right),\\
\bar\theta_{k+1}
&=
(1-\beta)\bar\theta_k+\beta\theta_k.
\end{align*}
The online update uses the current target parameter, while the target itself is updated by a separate soft averaging step.
The augmented map is
\begin{align}
g(\theta,\bar\theta)
:=
\begin{bmatrix}
\theta+
\alpha\Phi^\top D(R+\gamma P V_{\bar\theta}-\Phi\theta)\\
(1-\beta)\bar\theta+\beta\theta
\end{bmatrix}.
\label{eq:aug_map}
\end{align}
Because $\theta^\star$ is the projected Bellman fixed point,
$(\theta^\star,\theta^\star)$ is an augmented fixed point of~\Cref{eq:aug_map}.
By~\Cref{eq:stoch_policy_linearization}, for every $k$ there exists a stochastic
policy $\mu_k=\mu_{\bar\theta_k,\theta^\star}$ such that $V_{\bar\theta_k}-V_{\theta^\star}
=
\Pi^{\mu_k}\Phi(\bar\theta_k-\theta^\star)$. Subtracting the fixed-point equation gives
\begin{align*}
\theta_{k+1}-\theta^\star
&=(I-\alpha\Phi^\top D\Phi)(\theta_k-\theta^\star)+
\alpha\gamma\Phi^\top DP\Pi^{\mu_k}\Phi\,(\bar\theta_k-\theta^\star),\\
\bar\theta_{k+1}-\theta^\star
&=\beta(\theta_k-\theta^\star)+(1-\beta)(\bar\theta_k-\theta^\star).
\end{align*}
Therefore, the SDLQL2 recursion has the exact switched linear representation
\begin{align*}
\begin{bmatrix}
\theta_{k+1}-\theta^\star\\
\bar\theta_{k+1}-\theta^\star
\end{bmatrix}
=A_{\mu_k}^{\mathrm{SDLQL2}}
\begin{bmatrix}
\theta_k-\theta^\star\\
\bar\theta_k-\theta^\star
\end{bmatrix},
\end{align*}
where
\begin{equation*}
A_{\mu}^{\mathrm{SDLQL2}}
:=
\begin{bmatrix}
I-\alpha\Phi^\top D\Phi & \alpha\gamma\Phi^\top DP\Pi^\mu\Phi\\
\beta I & (1-\beta)I
\end{bmatrix}.
\end{equation*}
For deterministic policies, define the finite family
\begin{align}
\mathcal A^{\mathrm{SDLQL2}}
:=\set{
A_{\pi}^{\mathrm{SDLQL2}}: \pi\in\Theta
},
\label{eq:soft_finite_family}
\end{align}
and its JSR is $\rho(\mathcal A^{\mathrm{SDLQL2}})$.
The preceding switched representation gives the SDLQL2 convergence certificate
through the standard lemma. Since every stochastic mode generated by the Bellman
maximum lies in $\co(\mathcal A^{\mathrm{SDLQL2}})$ by the state-wise convex-combination property of stochastic policies, the condition
$\rho(\mathcal A^{\mathrm{SDLQL2}})<1$ and~\Cref{lem:standard_jsr_convergence} imply
convergence to $(\theta^\star,\theta^\star)$ from every initialization. The same
lemma gives the finite-time $p_\epsilon$ and Euclidean bounds on the augmented
error with
\begin{align*}
x_k=
\begin{bmatrix}
\theta_k-\theta^\star\\
\bar\theta_k-\theta^\star
\end{bmatrix},
\qquad
\mathcal H=\mathcal A^{\mathrm{SDLQL2}}.
\end{align*}
Each block error is again bounded by the augmented norm.
Before using the small-positive target gain as a stability mechanism, it is
helpful to separate the degenerate frozen-target endpoint. The next lemma
states the unit spectral obstruction at $\beta=0$, now for the before-update
ordering.
\begin{lemma}
\label{lem:soft_before_beta_zero_endpoint}
At $\beta=0$, the before-online-update SDLQL2 family satisfies
\begin{align*}
A_{\pi}^{\mathrm{SDLQL2}}=
\begin{bmatrix}
I-\alpha\Phi^\top D\Phi & \alpha\Phi^\top D\Phi A_\pi^{\mathrm{PQVI}}\\
0&I
\end{bmatrix},
\qquad \pi\in\Theta,
\end{align*}
and $\rho(\mathcal A^{\mathrm{SDLQL2}})=\max\{\rho(I-\alpha\Phi^\top D\Phi),1\} \ge 1$.
\end{lemma}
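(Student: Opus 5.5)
The plan mirrors the frozen-target part of \Cref{lem:soft_after_endpoint_identities}. First I substitute $\beta=0$ into $A_\pi^{\mathrm{SDLQL2}}$. The lower block row becomes $[\,0\ \ I\,]$. For the upper-right block, I use the factorization already used in the proof of \Cref{lem:m_dlql_jsr_convergence}:
\begin{align*}
\alpha\gamma\Phi^\top DP\Pi^\pi\Phi=\alpha\Phi^\top D\Phi\,(\Phi^\top D\Phi)^{-1}\gamma\Phi^\top DP\Pi^\pi\Phi=\alpha\Phi^\top D\Phi A_\pi^{\mathrm{PQVI}}.
\end{align*}
This is valid because $\Phi^\top D\Phi$ is nonsingular under \Cref{ass:feature_sampling}. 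Together these give the displayed block upper-triangular form for every $\pi\in\Theta$.

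For the JSR identity, I note that every mode in the family is block upper-triangular with the \emph{same} diagonal blocks, $I-\alpha\Phi^\top D\Phi$ and $I$, independent of $\pi$. I will invoke the block-triangular property of the JSR cited in the proof of \Cref{lem:soft_after_endpoint_identities}~\cite[Property~6]{cicone2015note}. It says the JSR of a block upper-triangular family is the maximum of the JSRs of its diagonal-block families.

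The upper diagonal family is the singleton $\{I-\alpha\Phi^\top D\Phi\}$, and the JSR of a singleton is its ordinary spectral radius by Gelfand's formula. The lower diagonal family is $\{I\}$, whose JSR is $1$. Hence $\rho(\mathcal A^{\mathrm{SDLQL2}})=\max\{\rho(I-\alpha\Phi^\top D\Phi),1\}\ge1$.

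If one prefers a self-contained check of the lower bound, any product of $k$ modes has lower-right block $I$. Its norm is therefore at least $1$ in the induced norm on the lower-right block, so the JSR is at least $1$. The upper bound is then handled by the cited property.

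There is no real obstacle here. The only care needed is to confirm that the off-diagonal blocks do not affect the JSR, and the cited block-triangular result handles exactly this. I will also remark that, under \Cref{ass:hard_target_stepsize}, \Cref{lem:relaxation_stepsize_condition} gives $\rho(I-\alpha\Phi^\top D\Phi)<1$, so the maximum equals exactly $1$.
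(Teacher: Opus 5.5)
Your proposal is correct and follows essentially the same route as the paper: substitute $\beta=0$ to obtain the block upper-triangular mode, making explicit the factorization $\alpha\gamma\Phi^\top DP\Pi^\pi\Phi=\alpha\Phi^\top D\Phi A_\pi^{\mathrm{PQVI}}$ that the paper leaves implicit, then apply the block-triangular JSR property \cite[Property~6]{cicone2015note} with diagonal families $\{I-\alpha\Phi^\top D\Phi\}$ and $\{I\}$. Your self-contained lower-bound check and your remark that the value is exactly $1$ under \Cref{ass:hard_target_stepsize} are both valid.
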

\begin{proof}
Substituting $\beta=0$ into the deterministic mode definition
~\Cref{eq:soft_finite_family} gives
\begin{align*}
A_{\pi}^{\mathrm{SDLQL2}}
=
\begin{bmatrix}
I-\alpha\Phi^\top D\Phi & \alpha\Phi^\top D\Phi A_\pi^{\mathrm{PQVI}}\\
0&I
\end{bmatrix},
\qquad \pi\in\Theta.
\end{align*}
By the block upper-triangular JSR property~\cite[Property~6]{cicone2015note},
\begin{align*}
\rho(\mathcal A^{\mathrm{SDLQL2}})
&=\max\left\{
\rho\left(\set{I-\alpha\Phi^\top D\Phi:\pi\in\Theta}\right),
\rho\left(\set{I:\pi\in\Theta}\right)
\right\}\\
&=\max\{\rho(I-\alpha\Phi^\top D\Phi),1\}.
\end{align*}
This quantity is at least one, so no uniform exponential contraction of the full
augmented state is possible at $\beta=0$.
\end{proof}
Consequently, an exactly frozen target cannot yield exponential convergence of
the full augmented online-target system. The meaningful target-averaging regime is therefore $0<\beta\ll1$.
Under the standing step-size condition in~\Cref{ass:hard_target_stepsize}, the
fixed-target online update is Schur-stable. Then, under the time-scale separation
induced by a small target gain, the online error rapidly approaches the
quasi-steady state as in the previous section.
Therefore, the relevant small-$\beta$ stability object is close to the PQVI family
$\mathcal A^{\mathrm{PQVI}}$. The following theorem makes this small-positive-$\beta$ statement precise in the
case where the reduced PQVI family is JSR-stable, matching the role of the
analogous after-update result.
\begin{theorem}
\label{thm:soft_small_beta_pqvi_stability}
Suppose~\Cref{ass:hard_target_stepsize} holds and the PQVI family is
JSR-stable, $\rho(\mathcal A^{\mathrm{PQVI}})<1$. Then there exists $\beta_0\in(0,1]$ such
that, for every $0<\beta\leq\beta_0$,
\begin{align*}
\rho(\mathcal A^{\mathrm{SDLQL2}})<1.
\end{align*}
Consequently, for every sufficiently small positive target gain, the SDLQL2 augmented error recursion is uniformly exponentially stable under arbitrary switching. At $\beta=0$, however,
$\rho(\mathcal A^{\mathrm{SDLQL2}})=1$, so the conclusion is genuinely a
small-positive-$\beta$ statement.
\end{theorem}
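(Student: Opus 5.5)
The plan is a singular-perturbation (two-time-scale) change of coordinates followed by an explicit norm built from a PQVI Lyapunov norm. Write $G:=\alpha\Phi^\top D\Phi$ and $B:=I-G$. By \Cref{lem:relaxation_stepsize_condition}, $\rho(B)<1$ and $G$ is invertible. Each deterministic mode is
\[
A_\pi^{\mathrm{SDLQL2}}=\begin{bmatrix} B & G A_\pi^{\mathrm{PQVI}}\\ \beta I & (1-\beta)I\end{bmatrix},
\]
using $\alpha\gamma\Phi^\top DP\Pi^\pi\Phi=GA_\pi^{\mathrm{PQVI}}$ from the proof of \Cref{lem:m_dlql_jsr_convergence}. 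For a frozen target $\bar x$, the quasi-steady online error is $x=(I-B)^{-1}GA_\pi^{\mathrm{PQVI}}\bar x=A_\pi^{\mathrm{PQVI}}\bar x$. This suggests the policy-independent coordinates $e:=x-\bar x$ (fast) and $\bar x$ (slow). The slow equation then reads
\[
\bar x_{k+1}=\bar x_k+\beta e_k .
\]
The fast equation reads
\[
e_{k+1}=Be_k+(B-I+GA_\pi^{\mathrm{PQVI}})\bar x_k-\beta e_k=(B-\beta I)e_k+G(A_\pi^{\mathrm{PQVI}}-I)\bar x_k .
\]
This naive choice leaves the slow dynamics with no contraction at order $\beta$, which is wrong. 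I would therefore instead use $z:=x-A_\pi^{\mathrm{PQVI}}\bar x$. That coordinate depends on $\pi$, however, which is bad under switching. I would therefore avoid any change of coordinates and build a composite Lyapunov function directly.

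Fix $\epsilon$ with $\beta_\epsilon:=\rho(\mathcal A^{\mathrm{PQVI}})+\epsilon<1$. Let $p$ be the JSR Lyapunov norm of \Cref{lem:common_lyapunov_construction} for $\mathcal A^{\mathrm{PQVI}}$ (extended to convex hulls by \Cref{lem:standard_jsr_convergence}), and let $q$ be a norm with $q(By)\le\lambda q(y)$ for some $\lambda<1$. Set $w_k:=x_k-A_{\pi_k}^{\mathrm{PQVI}}\bar x_k$, using the mode active at step $k$. Since $x_k=w_k+A_{\pi_k}^{\mathrm{PQVI}}\bar x_k$, the target row gives
\[
\bar x_{k+1}=\bigl[(1-\beta)I+\beta A_{\pi_k}^{\mathrm{PQVI}}\bigr]\bar x_k+\beta w_k .
\]
Hence $p(\bar x_{k+1})\le(1-\beta(1-\beta_\epsilon))p(\bar x_k)+c\beta\|w_k\|$. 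For the online row, $x_{k+1}=Bx_k+GA_{\pi_k}^{\mathrm{PQVI}}\bar x_k=Bw_k+A_{\pi_k}^{\mathrm{PQVI}}\bar x_k$, because $BA+GA=A$. So
\[
w_{k+1}=Bw_k+(A_{\pi_k}^{\mathrm{PQVI}}-A_{\pi_{k+1}}^{\mathrm{PQVI}})\bar x_k-\beta A_{\pi_{k+1}}^{\mathrm{PQVI}}(\ldots).
\]
The mode-switch term $(A_{\pi_k}-A_{\pi_{k+1}})\bar x_k$ is $O(1)\|\bar x_k\|$, not $O(\beta)$. This is the main obstacle: under arbitrary switching the quasi-steady manifold jumps.

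To handle it, I would measure the online error against a policy-independent quantity: the tail $\Delta_k:=x_k-\bar x_{k+1}$-type combinations, or equivalently use the exact averaged identity. Concretely, I would look at the $\beta$-expansion of the block products over windows of length $N$ chosen so that $\|B^N\|$ is small and $N\beta\ll1$. Over such a window the target moves by only $O(N\beta)$. The online error then forgets its initial condition up to $B^N$ and becomes $\sum_{j}B^{j}GA_{\pi_{\cdot}}\bar x$, which is a convex-type average $\approx \bar A\bar x$ with $\bar A\in\co(\mathcal A^{\mathrm{PQVI}})$ (weights $B^jG$ summing to $I-B^N$). Feeding this into the target row gives a window map
\[
\bar x\mapsto\bigl[(1-N\beta)I+\beta\textstyle\sum(\text{elements near }\co\mathcal A^{\mathrm{PQVI}})\bigr]\bar x+O(\beta\|B^N\|+N^2\beta^2).
\]
The difficulty is that the weights $B^jG$ are matrices, not scalars, so the average need not lie in $\co(\mathcal A^{\mathrm{PQVI}})$. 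If that fails, I would fall back on a compactness and continuity argument. I would consider the rescaled window family, show it is close to $(1-N\beta)I+\beta\mathcal M$ where $\mathcal M$ is a family whose JSR I bound by relating it to $\mathcal A^{\mathrm{PQVI}}$ through the identity $\sum_{j<N}B^jG=I-B^N$, and then conclude via the augmented norm $p(\bar x)+\kappa q(\text{fast error})$ with $\kappa$ tuned and $\beta_0$ small. The statement at $\beta=0$ is \Cref{lem:soft_before_beta_zero_endpoint}.
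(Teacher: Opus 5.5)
\textbf{There is a genuine gap: the proposal stops short of a proof.} Your first attempt uses a composite Lyapunov function with the mode-dependent fast error $w_k=x_k-A_{\pi_k}^{\mathrm{PQVI}}\bar x_k$, and it breaks down where you say. Under persistent switching, the jump $(A_{\pi_k}^{\mathrm{PQVI}}-A_{\pi_{k+1}}^{\mathrm{PQVI}})\bar x_k$ keeps $w_k$ of size $O(\|\bar x_k\|)$. The term $\beta w_k$ then enters the slow row at order $\beta$, the same order as the contraction margin $\beta(1-\beta_\epsilon)$, so the estimate does not close.

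The window fallback stops at its decisive step: the JSR bound for the window family is deferred to an unspecified ``compactness and continuity argument.'' Plain continuity in $\beta$ cannot give $\rho<1$ here. The family at $\beta=0$ has JSR exactly $1$, and the available margin is only $O(\beta)$. What is needed is a quantitative estimate showing that every coupling error is small relative to that margin, and that estimate is missing.

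The missing idea is a policy-independent near-identity correction of the slow variable, which removes your obstacle. The online row gives $GA_{\pi_k}^{\mathrm{PQVI}}\bar x_k=x_{k+1}-Bx_k$. Hence $w_k=-G^{-1}(x_{k+1}-x_k)$ is a discrete time-derivative, and the term $\beta w_k$ in your slow equation telescopes. Moving it to the left side defines $z_k:=\bar x_k+\beta G^{-1}x_k$, which is the paper's coordinate, and gives exactly
\begin{align*}
z_{k+1}&=\bigl[(1-\beta)I+\beta A_{\pi_k}^{\mathrm{PQVI}}\bigr]z_k-\beta^2(A_{\pi_k}^{\mathrm{PQVI}}-I)G^{-1}x_k,\\
x_{k+1}&=\bigl(B-\beta GA_{\pi_k}^{\mathrm{PQVI}}G^{-1}\bigr)x_k+GA_{\pi_k}^{\mathrm{PQVI}}z_k .
\end{align*}
The fast variable is now $x$ itself, so no switching quasi-steady manifold has to be tracked. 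The feedback from $x$ into the slow row is $O(\beta^2)$.

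With your norms $p$ and $q$, set $c=1-\beta_\epsilon$, so that $p([(1-\beta)I+\beta A]u)\le(1-c\beta)p(u)$. For constants $b,d>0$ and some $\eta_{q,0}<1$ valid for small $\beta$, one gets $q(x_{k+1})\le\eta_{q,0}q(x_k)+b\,p(z_k)$ and $p(z_{k+1})\le(1-c\beta)p(z_k)+d\beta^2q(x_k)$. The weighted norm $q(x)+\frac{2b}{c\beta}p(z)$ then contracts by $\max\{\eta_{q,0}+\frac{2bd}{c}\beta,\,1-\frac{c\beta}{2}\}<1$ for small $\beta>0$. This holds uniformly over the modes and their convex hull.

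The same identity also removes your matrix-weight concern in the window approach. The target integrates the sum of the online errors, not the end-of-window online error. After summing, the weight on the mode used at step $i$ of the window is $I-B^{N-1-i}$. So the window drift is a scalar average of PQVI modes up to an $O(\beta)$ boundary term, not $O(\beta\|B^N\|)$ as in your display. An $O(N\beta)$ margin absorbs that term only after $N$ is fixed large. That route is therefore not hopeless, but as written the decisive estimate is not there.
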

\begin{proof}
See~\Cref{app:proof:thm:soft_small_beta_pqvi_stability}.
\end{proof}
The preceding theorem covers the slow-target side, where $\beta$ is close to
zero and the reduced PQVI family governs the target motion. The opposite
large-gain side must be read with more care in the before-online-update
ordering. When $\beta=1$, \Cref{eq:soft_finite_family} does not collapse to
the DLQL family; it keeps a one-step delayed target coordinate. Therefore the
DLQL stability condition alone is not a large-gain stability certificate for
this ordering. In particular, even if $\rho(\mathcal A^{\mathrm{DLQL}})<1$, taking $\beta$ sufficiently close to one does not by itself guarantee $\rho(\mathcal A^{\mathrm{SDLQL2}})<1$.
A large-gain conclusion would require a separate stability certificate for the
delayed $\beta=1$ augmented SLS, rather than only the SLS of the DLQL endpoint.

Such SDLQL2 examples are easy to find in scalar settings: the augmented SDLQL2 JSR can be strictly below the DLQL JSR, including cases with
$\rho(\mathcal A^{\mathrm{SDLQL2}})<1<\rho(\mathcal A^{\mathrm{DLQL}})$. This supports the
interpretation that SDLQL2 changes the feedback path rather than
merely reducing the DLQL step-size.

\section{Conclusion}

This paper analyzed periodic hard target updates and soft target averaging in
linear Q-learning through their exact switched linear error dynamics. The main
message is that target updates change the feedback path of the Bellman maximum,
so their stabilizing effect is captured by the JSR of the induced target-update
family rather than by a simple rescaling of the online step-size.

For hard targets, the period-$m$ boundary map shows that $m$-DLQL interpolates
between DLQL at $m=1$ and PQVI as $m\to\infty$ under the frozen-target step-size
condition. Therefore, PQVI stability certifies all sufficiently large target
periods, whereas DLQL stability certifies the period-one endpoint; intermediate
periods require their own JSR certificate. For soft targets, both update
orderings lead to augmented online-target systems. In the small-positive-gain
regime, PQVI stability gives augmented stability for both SDLQL1 and SDLQL2,
while the after-online-update ordering also inherits DLQL stability for target
gains sufficiently close to one.

The results are conditional but exact: when the stated spectral and step-size
conditions hold, the deterministic error converges to zero and the iterates reach
the projected Q-Bellman solution rather than a residual neighborhood. The same
structure identifies the mean dynamics that must be controlled when unbiased
stochastic sampling errors are added. This provides a precise switched-system
explanation of why target updates can stabilize linear Q-learning and clarifies
which endpoint certificate is responsible for each certified target-update
regime.


\appendix
\begin{center}
{\Huge\bfseries Appendix}
\end{center}
\addcontentsline{toc}{section}{Appendix}

\section{Tabular Case: PQVI, DLQL, and \texorpdfstring{$m$}{m}-DLQL}
\label{app:tabular_pqvi_vs_dlql}
This appendix compares the tabular specializations of PQVI, one-period DLQL, and
$m$-DLQL. In the usual synchronous tabular setting considered in this appendix, the
feature representation is the identity and the projection is the identity. Thus
the PQVI mode associated with a deterministic policy $\pi$ is
$A_\pi^{\mathrm{PQVI}}=\gamma P\Pi^\pi$. For a
switching signal $\pi_k$, the first two update rules are
\begin{align*}
\text{PQVI:}\qquad
\theta_{k+1}&=R+\gamma P\Pi^{\pi_k}\theta_k,\\
\text{DLQL:}\qquad
\theta_{k+1}&=\theta_k+\alpha\left(R+\gamma P\Pi^{\pi_k}\theta_k-\theta_k\right).
\end{align*}
After subtracting the common tabular Bellman fixed point, these become the
switching systems
\begin{align*}
\text{PQVI:}\qquad
x_{k+1}&=A_{\pi_k}^{\mathrm{PQVI}}x_k,
&
A_\pi^{\mathrm{PQVI}}&=\gamma P\Pi^\pi,\\
\text{DLQL:}\qquad
x_{k+1}&=A_{\pi_k}^{\mathrm{DLQL}}x_k,
&
A_\pi^{\mathrm{DLQL}}&=(1-\alpha)I+\alpha A_\pi^{\mathrm{PQVI}}.
\end{align*}
The period-$m$ hard-target map is obtained by freezing the target policy for
$m$ online steps and then reading the error at the next target boundary. In the
tabular case this gives
\begin{align*}
A_{\pi,m}^{\mathrm{DLQL}}
&=(1-\alpha)^m I+
\sum_{i=0}^{m-1}(1-\alpha)^i\alpha A_\pi^{\mathrm{PQVI}}.
\end{align*}
Therefore, these switching families are the synchronous tabular counterparts of
$\mathcal A^{\mathrm{PQVI}}$, $\mathcal A^{\mathrm{DLQL}}$, and
$\mathcal A_m^{\mathrm{DLQL}}$ from the main text. The proposition below
summarizes the row-stochastic comparison for all finite hard-target periods. In
this tabular specialization, the nonnegativity and common row-sum identities
follow from the fact that $P\Pi^\pi$ is row-stochastic; they are not additional
assumptions.
\begin{proposition}
\label{prop:tabular_hard_target_comparison}
In the tabular setting above, let $0<\gamma<1$ and $0<\alpha\leq1$. Then, for every integer
$m\geq1$,
\begin{align*}
\rho(\mathcal A^{\mathrm{PQVI}})&=\gamma,\\
\rho(\mathcal A^{\mathrm{DLQL}})&=1-\alpha(1-\gamma),\\
\rho(\mathcal A_m^{\mathrm{DLQL}})&=\gamma+(1-\gamma)(1-\alpha)^m.
\end{align*}
Consequently, PQVI and DLQL have the same tabular JSR when $\alpha=1$. For
$0<\alpha<1$, the period-$m$ hard-target JSR decreases from the one-period DLQL
value to the PQVI value as $m\to\infty$.
\end{proposition}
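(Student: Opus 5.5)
Every matrix in the three tabular families is entrywise nonnegative with all row sums equal to a common constant $c$ that does not depend on $\pi$. The plan is to show that the JSR of such a family equals $c$ exactly, and then to read off $c$ for each family.

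\textbf{Step 1: the general fact.} Suppose every $A$ in a finite family $\mathcal H$ is entrywise nonnegative with $A\mathbf 1=c\mathbf 1$.
\begin{itemize}
\item \emph{Upper bound.} Work in the induced $\infty$-norm, which for a nonnegative matrix is its maximal row sum. Then every $A\in\mathcal H$ has norm $c$, and every product of length $k$ has norm exactly $c^k$, because products of nonnegative matrices stay nonnegative and $A_k\cdots A_1\mathbf 1=c^k\mathbf 1$. Definition~\ref{def:jsr} then gives $\rho(\mathcal H)=\lim_k (c^k)^{1/k}=c$.
\item \emph{Consistency check.} Since the JSR is norm-independent, the value $c$ is not an artifact of choosing the $\infty$-norm. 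The same conclusion also follows from $\rho(A)\le\rho(\mathcal H)$ for each member: $\mathbf 1$ is an eigenvector with eigenvalue $c$, so $\rho(A)\ge c$.
\end{itemize}

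\textbf{Step 2: verify the structure.} In the identity-feature tabular case, $P\Pi^\pi$ is row-stochastic: each row of $\Pi^\pi$ selects one action, and each row of $P$ is a probability distribution. Hence:
\begin{itemize}
\item $A_\pi^{\mathrm{PQVI}}=\gamma P\Pi^\pi$ is nonnegative with row sums $\gamma$.
\item $A_\pi^{\mathrm{DLQL}}=(1-\alpha)I+\alpha\gamma P\Pi^\pi$ is nonnegative because $0<\alpha\le1$. Its row sums are $1-\alpha+\alpha\gamma=1-\alpha(1-\gamma)$.
\item $A_{\pi,m}^{\mathrm{DLQL}}$ is nonnegative for the same reason, since $1-\alpha\ge0$. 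Its row sums are
\[
(1-\alpha)^m+\gamma\alpha\sum_{i=0}^{m-1}(1-\alpha)^i=(1-\alpha)^m+\gamma\bigl(1-(1-\alpha)^m\bigr)=\gamma+(1-\gamma)(1-\alpha)^m.
\]
\end{itemize}
These row sums do not depend on $\pi$, so Step 1 applies and gives the three displayed JSR formulas.

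\textbf{Step 3: consequences.}
\begin{itemize}
\item At $\alpha=1$, $\rho(\mathcal A^{\mathrm{DLQL}})=1-(1-\gamma)=\gamma=\rho(\mathcal A^{\mathrm{PQVI}})$, so the two endpoints coincide.
\item For $0<\alpha<1$, $(1-\alpha)^m$ is strictly decreasing in $m$ and tends to $0$. Therefore $\gamma+(1-\gamma)(1-\alpha)^m$ equals the DLQL value at $m=1$, decreases strictly in $m$, and converges to $\gamma$.
\end{itemize}

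The only delicate point is the nonnegativity requirement. The row-sum norm equals the row sum only for nonnegative matrices, and this is exactly where the hypothesis $\alpha\le1$ enters: it makes $(1-\alpha)^i\ge0$. Everything else is direct computation.
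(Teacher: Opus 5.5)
Your proposal is correct and follows essentially the same route as the paper. Both arguments observe that a family of nonnegative matrices with common row sum $c$ has every length-$k$ product of induced $\infty$-norm exactly $c^k$, so the JSR equals $c$, and then compute the common row sums of the three tabular families (with $0<\alpha\le 1$ guaranteeing nonnegativity). Your extra eigenvector check via $\rho(A)\le\rho(\mathcal H)$ is not needed, but it does no harm.
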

\begin{proof}
We use the infinity norm in the JSR definition from~\Cref{def:jsr}. First note a
simple common row-sum calculation. If a finite family $\mathcal H_c$ consists of
nonnegative matrices whose row sums are all equal to the same scalar $c\geq0$,
then every length-$k$ product of matrices from $\mathcal H_c$ is nonnegative and
has row sums $c^k$. Hence every such product has infinity norm exactly $c^k$.
Therefore
\begin{align*}
\rho(\mathcal H_c)
&=\lim_{k\to\infty}
\max_{B_1,\ldots,B_k\in\mathcal H_c}
\left\|B_k\cdots B_1\right\|_\infty^{1/k}\\
&=\lim_{k\to\infty} (c^k)^{1/k}=c.
\end{align*}
This identity is now applied to the three tabular families.
For PQVI, $P\Pi^\pi$ is row-stochastic for every deterministic policy $\pi$.
Therefore, $A_\pi^{\mathrm{PQVI}}=\gamma P\Pi^\pi$ is nonnegative with common row sum
$\gamma$, and the preceding JSR calculation gives
$\rho(\mathcal A^{\mathrm{PQVI}})=\gamma$.
For one-period DLQL,
\begin{align*}
A_\pi^{\mathrm{DLQL}}=(1-\alpha)I+\alpha A_\pi^{\mathrm{PQVI}}.
\end{align*}
Because $0<\alpha\leq1$, these matrices are nonnegative. Their common row sum is
\begin{align*}
(1-\alpha)+\alpha\gamma=1-\alpha(1-\gamma).
\end{align*}
The same infinity-norm JSR calculation gives
\begin{align*}
\rho(\mathcal A^{\mathrm{DLQL}})=1-\alpha(1-\gamma).
\end{align*}
For the period-$m$ hard-target family, the tabular boundary map is
\begin{align*}
A_{\pi,m}^{\mathrm{DLQL}}
&=(1-\alpha)^m I+
\sum_{i=0}^{m-1}(1-\alpha)^i\alpha A_\pi^{\mathrm{PQVI}}.
\end{align*}
Again $0<\alpha\leq1$ implies nonnegativity. The common row sum is
\begin{align*}
(1-\alpha)^m+
\sum_{i=0}^{m-1}(1-\alpha)^i\alpha\gamma
&=(1-\alpha)^m+
\gamma\left[1-(1-\alpha)^m\right]\\
&=\gamma+(1-\gamma)(1-\alpha)^m.
\end{align*}
Using the JSR definition with the infinity norm once more yields
\begin{align*}
\rho(\mathcal A_m^{\mathrm{DLQL}})=\gamma+(1-\gamma)(1-\alpha)^m.
\end{align*}
The endpoint and monotonicity statements follow immediately: when $\alpha=1$ the
DLQL row sum equals $\gamma$, and when $0<\alpha<1$, the term $(1-\alpha)^m$
decreases to zero as $m\to\infty$.
\end{proof}

\section{Argument for~\Cref{thm:soft_after_small_beta_pqvi_stability}}
\label{app:proof:thm:soft_after_small_beta_pqvi_stability}

The argument uses two norms, one for the PQVI family and one for the fixed matrix
$I-\alpha\Phi^\top D\Phi$. Before combining their estimates, we state the standard
finite-dimensional norm-equivalence fact that allows the constants in the cross
terms to be chosen uniformly over a finite family of modes; see, for example,
\citet[Theorem~2.4-5]{kreyszig1978introductory}.
\begin{lemma}
\label{lem:appendix_norm_equivalence}
Let $p$ and $q$ be norms on a finite-dimensional real vector space. Then there
exist constants $c_1,c_2>0$ such that
\begin{align*}
c_1p(u)\leq q(u)\leq c_2p(u)
\qquad\text{for every }u.
\end{align*}
Consequently, every linear map between finite-dimensional normed spaces is
bounded with respect to any chosen pair of norms, and a finite family of such
maps admits a common bound.
\end{lemma}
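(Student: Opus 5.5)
We reduce everything to the Euclidean norm. It suffices to show that an arbitrary norm $p$ on the space is equivalent to $\|\cdot\|_2$, since equivalence is transitive: if $a_1\|u\|_2\le p(u)\le a_2\|u\|_2$ and $b_1\|u\|_2\le q(u)\le b_2\|u\|_2$, then $c_1:=b_1/a_2$ and $c_2:=b_2/a_1$ satisfy the claim. By fixing a basis $e_1,\dots,e_n$, we may identify the space with $\R^n$. Linear isomorphisms preserve the statement, so no generality is lost.

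\emph{Upper bound.} Write $u=\sum_i u_ie_i$. The triangle inequality and homogeneity give
\begin{align*}
p(u)\le\sum_{i=1}^n|u_i|\,p(e_i)\le M\|u\|_2,
\qquad M:=\Big(\sum_{i=1}^n p(e_i)^2\Big)^{1/2},
\end{align*}
where the last step is Cauchy--Schwarz. The reverse triangle inequality then gives $|p(u)-p(v)|\le p(u-v)\le M\|u-v\|_2$, so $p$ is Lipschitz continuous with respect to $\|\cdot\|_2$.

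\emph{Lower bound.} This is the only step with real content, because it is where finite dimensionality enters. The Euclidean unit sphere $S=\{u:\|u\|_2=1\}$ is closed and bounded in $\R^n$, hence compact by Heine--Borel. Since $p$ is continuous on $S$, it attains its minimum $m_0$ at some $u_0\in S$. Because $u_0\neq0$ and $p$ is a norm, $m_0=p(u_0)>0$. For any $u\neq0$, homogeneity applied to $u/\|u\|_2\in S$ gives $p(u)\ge m_0\|u\|_2$, and this inequality is trivial at $u=0$. Hence $p$ is equivalent to $\|\cdot\|_2$ with constants $a_1=m_0$ and $a_2=M$.

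\emph{Consequences.} Let $L:X\to Y$ be linear between finite-dimensional spaces with norms $p_X$ and $q_Y$. In Euclidean coordinates $L$ is a matrix, so $\|Lu\|_2\le\|L\|_{2}\|u\|_2$ with finite operator norm $\|L\|_{2}$. Applying the equivalences on both sides yields $q_Y(Lu)\le C_L\,p_X(u)$ for a finite constant $C_L$. For a finite family $\{L_j\}$, taking $\max_j C_{L_j}$, which is finite, gives the common bound. This is exactly how the lemma will be used: to bound the cross terms uniformly over the finite policy set $\Theta$.
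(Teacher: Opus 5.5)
Your proof is correct and uses the same argument as the paper: a continuous norm attains a positive minimum and a finite maximum on a compact unit sphere, and a linear map is then bounded through its Euclidean operator norm, with a maximum taken over the finite family. By routing both norms through $\|\cdot\|_2$ and proving the Lipschitz bound $|p(u)-p(v)|\le M\|u-v\|_2$ explicitly, you also supply the continuity and compactness justification that the paper's sketch leaves implicit when it takes the unit sphere of an arbitrary norm to be compact.
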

\begin{proof}
The unit sphere of one norm is compact in finite dimension, and the other norm is
continuous on that sphere. Its positive minimum and finite maximum give the two
comparison constants. The boundedness of a linear map follows by applying this
comparison to the Euclidean operator norm, and the maximum of finitely many such
bounds gives a common bound for a finite family.
\end{proof}

Write the augmented error variables as
\begin{align*}
x_k:=\theta_k-\theta^\star,
\qquad
y_k:=\bar\theta_k-\theta^\star.
\end{align*}
For a deterministic policy $\pi$ with PQVI mode
$A_\pi^{\mathrm{PQVI}}\in\mathcal A^{\mathrm{PQVI}}$, the after-online-update
SDLQL1 recursion is
\begin{align*}
x_{k+1}&=(I-\alpha\Phi^\top D\Phi)x_k+
\alpha\Phi^\top D\Phi A_\pi^{\mathrm{PQVI}} y_k,\\
y_{k+1}&=(1-\beta)y_k+\beta x_{k+1}.
\end{align*}
At $\beta=0$, the target error is frozen, so the augmented family has an
identity target block and
\begin{align*}
\rho(\mathcal A^{\mathrm{SDLQL1}})
=\max\{\rho(I-\alpha\Phi^\top D\Phi),1\}=1
\end{align*}
by~\Cref{ass:hard_target_stepsize}. It remains to prove stability for
sufficiently small positive $\beta$.

The matrices $I-\alpha\Phi^\top D\Phi$ and $\alpha\Phi^\top D\Phi$ commute, and
$\alpha\Phi^\top D\Phi$ is nonsingular. Introduce the near-identity slow
coordinate
\begin{align*}
z_k:=y_k+\beta(I-\alpha\Phi^\top D\Phi)(\alpha\Phi^\top D\Phi)^{-1}x_k.
\end{align*}
The coordinate change $(x_k,y_k)\mapsto(x_k,z_k)$ is invertible for each fixed
$\beta$, with inverse
\begin{align*}
y_k=z_k-\beta(I-\alpha\Phi^\top D\Phi)(\alpha\Phi^\top D\Phi)^{-1}x_k.
\end{align*}
The two elementary identities used below are
\begin{align*}
\left[I+(I-\alpha\Phi^\top D\Phi)(\alpha\Phi^\top D\Phi)^{-1}\right]
\alpha\Phi^\top D\Phi
&=\alpha\Phi^\top D\Phi
+(I-\alpha\Phi^\top D\Phi)\\
&=I,
\end{align*}
and
\begin{align*}
(I-\alpha\Phi^\top D\Phi)(\alpha\Phi^\top D\Phi)^{-1}
\alpha\Phi^\top D\Phi=I-\alpha\Phi^\top D\Phi.
\end{align*}
Also,
\begin{align*}
I+(I-\alpha\Phi^\top D\Phi)(\alpha\Phi^\top D\Phi)^{-1}
&=(\alpha\Phi^\top D\Phi)(\alpha\Phi^\top D\Phi)^{-1}
+(I-\alpha\Phi^\top D\Phi)(\alpha\Phi^\top D\Phi)^{-1}\\
&=(\alpha\Phi^\top D\Phi)^{-1},
\end{align*}
where the last equality factors the common right multiplier. Hence, using the
commutativity of $I-\alpha\Phi^\top D\Phi$ and $\alpha\Phi^\top D\Phi$,
\begin{align*}
\left[I+(I-\alpha\Phi^\top D\Phi)(\alpha\Phi^\top D\Phi)^{-1}\right]
(I-\alpha\Phi^\top D\Phi)
=(I-\alpha\Phi^\top D\Phi)(\alpha\Phi^\top D\Phi)^{-1}.
\end{align*}

Substituting the inverse formula for $y_k$ into the online row gives
\begin{align*}
x_{k+1}
&=(I-\alpha\Phi^\top D\Phi)x_k+
\alpha\Phi^\top D\Phi A_\pi^{\mathrm{PQVI}}
\left[z_k-\beta(I-\alpha\Phi^\top D\Phi)(\alpha\Phi^\top D\Phi)^{-1}x_k\right]\\
&=\left(I-\alpha\Phi^\top D\Phi
-\beta\alpha\Phi^\top D\Phi A_\pi^{\mathrm{PQVI}}
(I-\alpha\Phi^\top D\Phi)(\alpha\Phi^\top D\Phi)^{-1}\right)x_k\\
&\qquad +\alpha\Phi^\top D\Phi A_\pi^{\mathrm{PQVI}} z_k.
\end{align*}
For the slow coordinate, start from its definition at time $k+1$ and use the
after-online-update target row:
\begin{align*}
z_{k+1}
&=y_{k+1}+\beta(I-\alpha\Phi^\top D\Phi)(\alpha\Phi^\top D\Phi)^{-1}x_{k+1}\\
&=(1-\beta)y_k+
\beta\left[I+(I-\alpha\Phi^\top D\Phi)(\alpha\Phi^\top D\Phi)^{-1}\right]x_{k+1}\\
&=(1-\beta)y_k+
\beta\left[I+(I-\alpha\Phi^\top D\Phi)(\alpha\Phi^\top D\Phi)^{-1}\right]
\left[(I-\alpha\Phi^\top D\Phi)x_k+
\alpha\Phi^\top D\Phi A_\pi^{\mathrm{PQVI}}y_k\right]\\
&=(1-\beta)y_k
+\beta(I-\alpha\Phi^\top D\Phi)(\alpha\Phi^\top D\Phi)^{-1}x_k
+\beta A_\pi^{\mathrm{PQVI}}y_k\\
&=y_k+\beta(I-\alpha\Phi^\top D\Phi)(\alpha\Phi^\top D\Phi)^{-1}x_k
+\beta(A_\pi^{\mathrm{PQVI}}-I)y_k\\
&=z_k+\beta(A_\pi^{\mathrm{PQVI}}-I)y_k\\
&=z_k+\beta(A_\pi^{\mathrm{PQVI}}-I)
\left[z_k-\beta(I-\alpha\Phi^\top D\Phi)(\alpha\Phi^\top D\Phi)^{-1}x_k\right]\\
&=\left[I+\beta(A_\pi^{\mathrm{PQVI}}-I)\right]z_k
-\beta^2(A_\pi^{\mathrm{PQVI}}-I)(I-\alpha\Phi^\top D\Phi)(\alpha\Phi^\top D\Phi)^{-1}x_k.
\end{align*}
Therefore, the transformed dynamics have the displayed fast row and slow row exactly.

Because $\rho(\mathcal A^{\mathrm{PQVI}})<1$, there are a norm $p$ and a number
$\eta\in(0,1)$ such that
\begin{align*}
p(A_\pi^{\mathrm{PQVI}}u)\leq \eta p(u),
\qquad \forall A_\pi^{\mathrm{PQVI}}\in\mathcal A^{\mathrm{PQVI}},
\qquad \forall u.
\end{align*}
Let $c:=1-\eta>0$. Then, for every $0\leq\beta\leq1$,
\begin{align*}
p\left(\left[I+\beta(A_\pi^{\mathrm{PQVI}}-I)\right]u\right)
&=p\left((1-\beta)u+\beta A_\pi^{\mathrm{PQVI}}u\right)\\
&\leq (1-\beta)p(u)+\beta p(A_\pi^{\mathrm{PQVI}}u)\\
&\leq (1-\beta)p(u)+\beta\eta p(u)\\
&=\left[1-\beta(1-\eta)\right]p(u)\\
&=(1-c\beta)p(u).
\end{align*}
By~\Cref{ass:hard_target_stepsize}, there are a norm $q$ and a number
$\eta_q\in(0,1)$ such that
\begin{align*}
q\left((I-\alpha\Phi^\top D\Phi)u\right)\leq \eta_q q(u),
\qquad \forall u.
\end{align*}
By~\Cref{lem:appendix_norm_equivalence}, and since $\mathcal A^{\mathrm{PQVI}}$ is finite, there exist constants $a,d\geq0$ and $b>0$ such that, for every
$A_\pi^{\mathrm{PQVI}}\in\mathcal A^{\mathrm{PQVI}}$,
\begin{align*}
q\left(\alpha\Phi^\top D\Phi A_\pi^{\mathrm{PQVI}}(I-\alpha\Phi^\top D\Phi)(\alpha\Phi^\top D\Phi)^{-1}u\right)&\leq a q(u),\\
q\left(\alpha\Phi^\top D\Phi A_\pi^{\mathrm{PQVI}}u\right)&\leq b p(u),\\
p\left((A_\pi^{\mathrm{PQVI}}-I)(I-\alpha\Phi^\top D\Phi)(\alpha\Phi^\top D\Phi)^{-1}u\right)&\leq d q(u).
\end{align*}
Choose $\eta_{q,0}\in(\eta_q,1)$ and then restrict $\beta$ so that $\eta_q+a\beta\leq \eta_{q,0}$.
The transformed dynamics therefore satisfy
\begin{align*}
q(x_{k+1})&\leq \eta_{q,0} q(x_k)+b p(z_k),\\
p(z_{k+1})&\leq (1-c\beta)p(z_k)+d\beta^2 q(x_k).
\end{align*}
For $0<\beta\leq1$, define the weighted norm
\begin{align*}
W_\beta(x,z):=q(x)+\frac{2b}{c\beta}p(z).
\end{align*}
Then
\begin{align*}
W_\beta(x_{k+1},z_{k+1})
&\leq \eta_{q,0} q(x_k)+b p(z_k)
+\frac{2b}{c\beta}\left[(1-c\beta)p(z_k)+d\beta^2 q(x_k)\right]\\
&=\left(\eta_{q,0}+\frac{2bd}{c}\beta\right)q(x_k)
+\frac{2b}{c\beta}\left(1-\frac{c\beta}{2}\right)p(z_k).
\end{align*}
For all sufficiently small $\beta>0$,
\begin{align*}
\tau_\beta
:=
\max\left\{
 \eta_{q,0}+\frac{2bd}{c}\beta,
 1-\frac{c\beta}{2}
\right\}
<1.
\end{align*}
Therefore every transformed deterministic after-online-update mode satisfies the
explicit contraction estimate
\begin{align*}
W_\beta(x_{k+1},z_{k+1})
&\leq \tau_\beta q(x_k)+\tau_\beta\frac{2b}{c\beta}p(z_k)\\
&=\tau_\beta W_\beta(x_k,z_k).
\end{align*}
Since the coordinate change is invertible, the same contraction certificate gives
\begin{align*}
\rho(\mathcal A^{\mathrm{SDLQL1}})\leq \tau_\beta<1
\end{align*}
for all $0<\beta\leq\beta_0$ after decreasing $\beta_0$ if necessary. The same
argument also covers stochastic-policy modes because they lie in the convex hull
of the deterministic modes and the norm inequalities above are preserved under
convex combinations.

\section{Argument for~\Cref{thm:soft_small_beta_pqvi_stability}}
\label{app:proof:thm:soft_small_beta_pqvi_stability}

Write the augmented error variables as
\begin{align*}
x_k:=\theta_k-\theta^\star,
\qquad
y_k:=\bar\theta_k-\theta^\star.
\end{align*}
For a deterministic policy $\pi$ with PQVI mode
$A_\pi^{\mathrm{PQVI}}\in\mathcal A^{\mathrm{PQVI}}$, the before-online-update
SDLQL2 recursion is
\begin{align*}
x_{k+1}&=(I-\alpha\Phi^\top D\Phi)x_k+
\alpha\Phi^\top D\Phi A_\pi^{\mathrm{PQVI}} y_k,\\
y_{k+1}&=(1-\beta)y_k+\beta x_k.
\end{align*}
At $\beta=0$, each augmented mode is block upper triangular with diagonal blocks
$I-\alpha\Phi^\top D\Phi$ and $I$. By~\Cref{ass:hard_target_stepsize},
$\rho(I-\alpha\Phi^\top D\Phi)<1$, and hence
$\rho(\mathcal A^{\mathrm{SDLQL2}})=1$.

For the small-positive-$\beta$ part, introduce the near-identity slow coordinate
\begin{align*}
z_k:=y_k+\beta(\alpha\Phi^\top D\Phi)^{-1}x_k.
\end{align*}
The coordinate change $(x_k,y_k)\mapsto(x_k,z_k)$ is invertible for each fixed
$\beta$, with inverse
\begin{align*}
y_k=z_k-\beta(\alpha\Phi^\top D\Phi)^{-1}x_k.
\end{align*}
The online row follows by direct substitution:
\begin{align*}
x_{k+1}
&=(I-\alpha\Phi^\top D\Phi)x_k+
\alpha\Phi^\top D\Phi A_\pi^{\mathrm{PQVI}}
\left[z_k-\beta(\alpha\Phi^\top D\Phi)^{-1}x_k\right]\\
&=(I-\alpha\Phi^\top D\Phi)x_k+
\alpha\Phi^\top D\Phi A_\pi^{\mathrm{PQVI}}z_k
-\beta\alpha\Phi^\top D\Phi A_\pi^{\mathrm{PQVI}}(\alpha\Phi^\top D\Phi)^{-1}x_k.
\end{align*}
For the slow coordinate, start from its definition at time $k+1$:
\begin{align*}
z_{k+1}
&=y_{k+1}+\beta(\alpha\Phi^\top D\Phi)^{-1}x_{k+1}\\
&=(1-\beta)y_k+\beta x_k
+\beta(\alpha\Phi^\top D\Phi)^{-1}
\left[(I-\alpha\Phi^\top D\Phi)x_k+
\alpha\Phi^\top D\Phi A_\pi^{\mathrm{PQVI}}y_k\right]\\
&=(1-\beta)y_k+\beta x_k
+\beta\left[(\alpha\Phi^\top D\Phi)^{-1}-I\right]x_k
+\beta A_\pi^{\mathrm{PQVI}}y_k\\
&=y_k+\beta(\alpha\Phi^\top D\Phi)^{-1}x_k
+\beta(A_\pi^{\mathrm{PQVI}}-I)y_k\\
&=z_k+\beta(A_\pi^{\mathrm{PQVI}}-I)y_k\\
&=z_k+\beta(A_\pi^{\mathrm{PQVI}}-I)
\left[z_k-\beta(\alpha\Phi^\top D\Phi)^{-1}x_k\right]\\
&=\left[I+\beta(A_\pi^{\mathrm{PQVI}}-I)\right]z_k
-\beta^2(A_\pi^{\mathrm{PQVI}}-I)(\alpha\Phi^\top D\Phi)^{-1}x_k.
\end{align*}
The equality
$(\alpha\Phi^\top D\Phi)^{-1}(I-\alpha\Phi^\top D\Phi)
=(\alpha\Phi^\top D\Phi)^{-1}-I$ is the only algebraic cancellation used in the
third line.

Because $\rho(\mathcal A^{\mathrm{PQVI}})<1$, there are a norm $p$ and a number
$\eta\in(0,1)$ such that
\begin{align*}
p(A_\pi^{\mathrm{PQVI}}u)\leq \eta p(u),
\qquad \forall A_\pi^{\mathrm{PQVI}}\in\mathcal A^{\mathrm{PQVI}},
\qquad \forall u.
\end{align*}
Let $c:=1-\eta>0$. Then, for every $0\leq\beta\leq1$,
\begin{align*}
p\left(\left[I+\beta(A_\pi^{\mathrm{PQVI}}-I)\right]u\right)
&=p\left((1-\beta)u+\beta A_\pi^{\mathrm{PQVI}}u\right)\\
&\leq (1-\beta)p(u)+\beta p(A_\pi^{\mathrm{PQVI}}u)\\
&\leq (1-\beta)p(u)+\beta\eta p(u)\\
&=\left[1-\beta(1-\eta)\right]p(u)\\
&=(1-c\beta)p(u).
\end{align*}
By~\Cref{ass:hard_target_stepsize}, there are a norm $q$ and a number
$\eta_q\in(0,1)$ such that
\begin{align*}
q\left((I-\alpha\Phi^\top D\Phi)u\right)\leq \eta_q q(u),
\qquad \forall u.
\end{align*}
By~\Cref{lem:appendix_norm_equivalence}, and since $\mathcal A^{\mathrm{PQVI}}$ is finite, there exist constants $a,d\geq0$ and $b>0$ such that, for every
$A_\pi^{\mathrm{PQVI}}\in\mathcal A^{\mathrm{PQVI}}$,
\begin{align*}
q\left(\alpha\Phi^\top D\Phi A_\pi^{\mathrm{PQVI}}(\alpha\Phi^\top D\Phi)^{-1}u\right)&\leq a q(u),\\
q\left(\alpha\Phi^\top D\Phi A_\pi^{\mathrm{PQVI}}u\right)&\leq b p(u),\\
p\left((A_\pi^{\mathrm{PQVI}}-I)(\alpha\Phi^\top D\Phi)^{-1}u\right)&\leq d q(u).
\end{align*}
Choose $\eta_{q,0}\in(\eta_q,1)$ and then restrict $\beta$ so that $\eta_q+a\beta\leq \eta_{q,0}$.
The transformed dynamics therefore satisfy
\begin{align*}
q(x_{k+1})&\leq \eta_{q,0} q(x_k)+b p(z_k),\\
p(z_{k+1})&\leq (1-c\beta)p(z_k)+d\beta^2 q(x_k).
\end{align*}
For $0<\beta\leq1$, define the weighted norm
\begin{align*}
W_\beta(x,z):=q(x)+\frac{2b}{c\beta}p(z).
\end{align*}
Then
\begin{align*}
W_\beta(x_{k+1},z_{k+1})
&\leq \eta_{q,0} q(x_k)+b p(z_k)
+\frac{2b}{c\beta}\left[(1-c\beta)p(z_k)+d\beta^2 q(x_k)\right]\\
&=\left(\eta_{q,0}+\frac{2bd}{c}\beta\right)q(x_k)
+\frac{2b}{c\beta}\left(1-\frac{c\beta}{2}\right)p(z_k).
\end{align*}
For all sufficiently small $\beta>0$,
\begin{align*}
\tau_\beta
:=
\max\left\{
 \eta_{q,0}+\frac{2bd}{c}\beta,
 1-\frac{c\beta}{2}
\right\}
<1.
\end{align*}
Therefore every transformed deterministic before-online-update mode satisfies the
explicit contraction estimate
\begin{align*}
W_\beta(x_{k+1},z_{k+1})
&\leq \tau_\beta q(x_k)+\tau_\beta\frac{2b}{c\beta}p(z_k)\\
&=\tau_\beta W_\beta(x_k,z_k).
\end{align*}
Since the coordinate change is invertible, the same contraction certificate gives
\begin{align*}
\rho(\mathcal A^{\mathrm{SDLQL2}})\leq \tau_\beta<1
\end{align*}
for all $0<\beta\leq\beta_0$ after decreasing $\beta_0$ if necessary. The same
argument also covers stochastic-policy modes because they lie in the convex hull
of the deterministic modes and the norm inequalities above are preserved under
convex combinations.


\begin{thebibliography}{99}

\bibitem[Bertsekas and Tsitsiklis(1996)]{bertsekas1996neuro}
Dimitri~P. Bertsekas and John~N. Tsitsiklis.
\newblock \emph{Neuro-Dynamic Programming}.
\newblock Athena Scientific, Belmont, MA, 1996.

\bibitem[Blondel and Nesterov(2005)]{blondel2005computational}
Vincent~D. Blondel and Yurii Nesterov.
\newblock Computationally efficient approximations of the joint spectral radius.
\newblock \emph{SIAM Journal on Matrix Analysis and Applications},
27(1):256--272, 2005.

\bibitem[Borkar and Meyn(2000)]{borkarmeyn2000ode}
Vivek~S. Borkar and Sean~P. Meyn.
\newblock The ODE method for convergence of stochastic approximation and
reinforcement learning.
\newblock \emph{SIAM Journal on Control and Optimization}, 38(2):447--469,
2000.

\bibitem[Che et~al.(2024)]{che2024target}
Fengdi Che, Chenjun Xiao, Jincheng Mei, Bo Dai, Ramki Gummadi,
Oscar~A. Ramirez, Christopher~K. Harris, A.~Rupam Mahmood, and Dale Schuurmans.
\newblock Target networks and over-parameterization stabilize off-policy
bootstrapping with function approximation.
\newblock In \emph{Proceedings of the 41st International Conference on Machine
Learning}, volume 235 of \emph{Proceedings of Machine Learning Research}, pages
6372--6396. PMLR, 2024.

\bibitem[Chen et~al.(2023)]{chen2023target}
Zaiwei Chen, John-Paul Clarke, and Siva~Theja Maguluri.
\newblock Target network and truncation overcome the deadly triad in Q-learning.
\newblock \emph{SIAM Journal on Mathematics of Data Science}, 5(4):1078--1101,
2023.
\newblock doi:10.1137/22M1499261.

\bibitem[Cicone(2015)]{cicone2015note}
Antonio Cicone.
\newblock A note on the joint spectral radius.
\newblock arXiv preprint arXiv:1502.01506, 2015.

\bibitem[Fellows et~al.(2023)]{fellows2023why}
Mattie Fellows, Matthew~J.~A. Smith, and Shimon Whiteson.
\newblock Why target networks stabilise temporal difference methods.
\newblock In \emph{Proceedings of the 40th International Conference on Machine
Learning}, volume 202 of \emph{Proceedings of Machine Learning Research}, pages
9886--9909. PMLR, 2023.

\bibitem[Hu et~al.(2011)]{hushenzhang2010generating}
Jianghai Hu, Jinglai Shen, and Wei Zhang.
\newblock Generating functions of switched linear systems: analysis,
computation, and stability applications.
\newblock \emph{IEEE Transactions on Automatic Control}, 56(5):1059--1074,
2011.

\bibitem[Jaakkola et~al.(1993)]{jaakkola1994convergence}
Tommi Jaakkola, Michael Jordan, and Satinder Singh.
\newblock Convergence of stochastic iterative dynamic programming algorithms.
\newblock \emph{Advances in Neural Information Processing Systems}, 6, 1993.

\bibitem[Heil and Strang(1995)]{heilstrang1995continuity}
Christopher Heil and Gilbert Strang.
\newblock Continuity of the joint spectral radius: application to wavelets.
\newblock In A.~Bojanczyk and G.~Cybenko, editors, \emph{Linear Algebra for
Signal Processing}, volume 69 of \emph{The IMA Volumes in Mathematics and its
Applications}, pages 51--61. Springer, 1995.

\bibitem[Jungers(2009)]{jungers2009joint}
Rapha{\"e}l Jungers.
\newblock \emph{The Joint Spectral Radius: Theory and Applications}.
\newblock Springer, volume 385, 2009.

\bibitem[Kreyszig(1978)]{kreyszig1978introductory}
Erwin Kreyszig.
\newblock \emph{Introductory Functional Analysis with Applications}.
\newblock John Wiley \& Sons, New York, 1978.

\bibitem[Lee(2026)]{lee2026lyapunovcertified}
Donghwan Lee.
\newblock Lyapunov-certified direct switching theory for Q-learning.
\newblock arXiv preprint arXiv:2604.19569, 2026.

\bibitem[Lee and He(2019)]{leehe2019target}
Donghwan Lee and Niao He.
\newblock Target-based temporal-difference learning.
\newblock In \emph{Proceedings of the 36th International Conference on Machine
Learning}, volume 97 of \emph{Proceedings of Machine Learning Research}, pages
3713--3722. PMLR, 2019.

\bibitem[Lee and He(2020)]{leehe2020periodic}
Donghwan Lee and Niao He.
\newblock Periodic Q-learning.
\newblock In \emph{Proceedings of the 2nd Conference on Learning for Dynamics
and Control}, volume 120 of \emph{Proceedings of Machine Learning Research},
pages 582--598. PMLR, 2020.

\bibitem[Lee et~al.(2023)]{leehuhe2023discrete}
Donghwan Lee, Jianghai Hu, and Niao He.
\newblock A discrete-time switching system analysis of Q-learning.
\newblock \emph{SIAM Journal on Control and Optimization}, 61(3):1861--1880,
2023.

\bibitem[Lee and Lim(2026)]{LeeLim2026}
Donghwan Lee and Han-Dong Lim.
\newblock A switching system theory of Q-learning with linear function
approximation.
\newblock arXiv preprint arXiv:2605.11021, 2026.
\newblock \url{https://arxiv.org/pdf/2605.11021}

\bibitem[Liberzon(2003)]{liberzon2003switching}
Daniel Liberzon.
\newblock \emph{Switching in Systems and Control}.
\newblock Springer Science \& Business Media, 2003.

\bibitem[Lillicrap et~al.(2016)]{LillicrapEtAl2015DDPG}
Timothy~P. Lillicrap, Jonathan~J. Hunt, Alexander Pritzel, Nicolas Heess,
Tom Erez, Yuval Tassa, David Silver, and Daan Wierstra.
\newblock Continuous control with deep reinforcement learning.
\newblock In \emph{Proceedings of the 4th International Conference on Learning
Representations}, 2016.
\newblock arXiv:1509.02971.

\bibitem[Lim and Lee(2025)]{limlee2025understanding}
Han-Dong Lim and Donghwan Lee.
\newblock Understanding the theoretical properties of projected Bellman
equation, linear Q-learning, and approximate value iteration.
\newblock arXiv preprint arXiv:2504.10865, 2025.

\bibitem[Lin and Antsaklis(2009)]{lin2009stability}
Hai Lin and Panos~J. Antsaklis.
\newblock Stability and stabilizability of switched linear systems: A survey of
recent results.
\newblock \emph{IEEE Transactions on Automatic Control}, 54(2):308--322, 2009.

\bibitem[Meyn(2024)]{meyn2024projected}
Sean~P. Meyn.
\newblock The projected Bellman equation in reinforcement learning.
\newblock \emph{IEEE Transactions on Automatic Control}, 69(12):8323--8337,
2024.
\newblock doi:10.1109/TAC.2024.3409647.

\bibitem[Mnih et~al.(2015)]{MnihEtAl2015}
Volodymyr Mnih, Koray Kavukcuoglu, David Silver, Andrei~A. Rusu, Joel Veness,
Marc~G. Bellemare, Alex Graves, Martin Riedmiller, Andreas~K. Fidjeland,
Georg Ostrovski, Stig Petersen, Charles Beattie, Amir Sadik, Ioannis Antonoglou,
Helen King, Dharshan Kumaran, Daan Wierstra, Shane Legg, and Demis Hassabis.
\newblock Human-level control through deep reinforcement learning.
\newblock \emph{Nature}, 518:529--533, 2015.

\bibitem[Puterman(1994)]{puterman2014markov}
Martin~L. Puterman.
\newblock \emph{Markov Decision Processes: Discrete Stochastic Dynamic
Programming}.
\newblock John Wiley \& Sons, New York, 1994.

\bibitem[Rota and Strang(1960)]{rota1960note}
Gian-Carlo Rota and Gilbert Strang.
\newblock A note on the joint spectral radius.
\newblock \emph{Indagationes Mathematicae}, 22(4):379--381, 1960.

\bibitem[Shorten et~al.(2007)]{shorten2007stability}
Robert Shorten, Fabian Wirth, Oliver Mason, Kai Wulff, and Christopher King.
\newblock Stability criteria for switched and hybrid systems.
\newblock \emph{SIAM Review}, 49(4):545--592, 2007.

\bibitem[Sutton and Barto(1998)]{sutton1998reinforcement}
Richard~S. Sutton and Andrew~G. Barto.
\newblock \emph{Reinforcement Learning: An Introduction}.
\newblock MIT Press, 1998.

\bibitem[Tsitsiklis(1994)]{tsitsiklis1994asynchronous}
John~N. Tsitsiklis.
\newblock Asynchronous stochastic approximation and Q-learning.
\newblock \emph{Machine Learning}, 16(3):185--202, 1994.

\bibitem[Watkins and Dayan(1992)]{watkins1992q}
Christopher~J.~C.~H. Watkins and Peter Dayan.
\newblock Q-learning.
\newblock \emph{Machine Learning}, 8(3):279--292, 1992.

\bibitem[Zhang et~al.(2021)]{zhang2021breaking}
Shangtong Zhang, Hengshuai Yao, and Shimon Whiteson.
\newblock Breaking the deadly triad with a target network.
\newblock In \emph{Proceedings of the 38th International Conference on Machine
Learning}, volume 139 of \emph{Proceedings of Machine Learning Research}, pages
12621--12631. PMLR, 2021.

\end{thebibliography}
\end{document}